\documentclass[11pt,letterpaper]{article}
\usepackage[T1]{fontenc}
\usepackage[margin=1in]{geometry}
\usepackage[round,authoryear]{natbib}
\usepackage{microtype}
\usepackage{amsmath,amssymb,amsfonts,bm}
\usepackage{amsthm}
\usepackage{graphicx,booktabs,multirow,array}
\usepackage[section]{placeins}
\usepackage{hyperref,url}
\usepackage{xcolor}
\hypersetup{colorlinks=true,linkcolor=blue!50!black,citecolor=blue!50!black,urlcolor=blue!60!black}
\newcommand{\E}{\mathbb{E}}
\newcommand{\R}{\mathbb{R}}
\newcommand{\sg}{\operatorname{sg}}
\newcommand{\CE}{\operatorname{CE}}

\newtheorem{lemma}{Lemma}

\title{Reasoning with Continuous Latent Diffusion}
\author{Xiang Cheng\\
Duke University\\
Department of Electrical and Computer Engineering\\
\texttt{xiang.cheng@duke.edu}}
\date{}

\begin{document}
\maketitle

\begin{abstract}
Continuous diffusion generates complete reasoning solutions through iterative refinement in latent space. We introduce the Continuous Embedding Diffusion Reasoner (CEDR), an ELF-based training and inference recipe. Our experiments show that accurate decoding alone does not ensure strong reasoning performance. We therefore learn compact representations from multiple layers of a strong autoregressive teacher. Their decomposition also enables asynchronous denoising at different rates. We show that prompt encodings need only preserve the information required for the correct text-conditional score, rather than exactly match teacher features, and use a staged curriculum to learn a compact prompt encoder that replaces the teacher Transformer at inference. We adapt DiffusionNFT to learned self-conditioning guidance and incorporate gold-solution endpoints to supplement sparse rewards. Our supervised models outperform reported results from recent continuous-diffusion baselines at comparable backbone scales on mathematical reasoning and HumanEval code generation. With a 638M-parameter denoising backbone and learned prompt conditioning, post-NFT CEDR-L achieves 63.74\% pass@1 on GSM8K and 24.6\% on MATH500 at 64 denoising steps, and 32.85\% on HumanEval and 30.18\% on HumanEval+ at 128 denoising steps.
Code will be available at: \url{https://github.com/chengxiang/CEDR}.
\end{abstract}

\section{Introduction}
\label{sec:introduction}

Autoregressive language models have made substantial progress on mathematical reasoning \citep{minerva,deepseekr1}. Continuous diffusion offers a complementary approach, generating complete solutions through iterative refinement of all answer positions. Whereas autoregressive and discrete-diffusion language models typically generate in token space, continuous latent diffusion makes the denoising representation an additional design choice \citep{elf,llada}. Reasoning requires this representation to preserve exact quantities and dependencies between deductions. We therefore ask: \emph{what representation makes complex reasoning solutions amenable to generation through denoising?} Token representations span a spectrum of complexity: tokenwise embedding tables, contextual encodings from bidirectional networks such as T5, and hidden activations of powerful autoregressive models \citep{plaid,t5,qwen3}. We find that strong clean-token recovery can coexist with weak reasoning generation, motivating representation design for both decoding and generation (Figure~\ref{fig:representationcomparison}; Appendix~\ref{app:representationdiagnostics}).

We introduce the Continuous Embedding Diffusion Reasoner (CEDR), based on the simple continuous flow formulation of Embedded Language Flows (ELF; \citealp{elf}). Our pipeline learns compact answer representations from multiple layers of a strong autoregressive teacher (Figure~\ref{fig:pipelineoverview}). To avoid retaining the teacher for prompt conditioning, we separate the answer-generation target from the prompt encoding. Conditioning on prompt \emph{text} makes that encoding an internal, learnable interface: it need only preserve the information required for denoising, without exact teacher-feature matching (Lemma~\ref{lem:conditionalrepresentation}). We use this flexibility to jointly train a compact prompt encoder and denoiser through the generative objective.

Remaining close to standard continuous diffusion lets us apply DiffusionNFT \citep{diffusionnft} with verifiable rewards, after adapting it to ELF's learned guidance. Our supervised models outperform comparable-scale continuous-diffusion baselines on mathematical reasoning and HumanEval code generation; NFT further improves accuracy on both math and code.

\paragraph{Contributions.}
\label{sec:contributions}

Our complete reasoning pipeline comprises the following key contributions:

\begin{enumerate}
    \item \textbf{A complete recipe for competitive continuous-diffusion reasoning.}
    We develop a CEDR pipeline spanning representation learning, conditional flow training, and inference (Figure~\ref{fig:pipelineoverview}). Our models demonstrate strong performance on math and coding tasks. Our supervised models outperform reported continuous-diffusion baselines at comparable backbone scales on mathematical reasoning across the evaluated denoising budgets (Tables~\ref{tab:headlinecomparison}--\ref{tab:headlinelownfe}); our supervised and post-NFT models also outperform reported PlaidQ results on HumanEval(+) (Table~\ref{tab:codingcomparison}).

    \item \textbf{Representation design shapes both reasoning accuracy and denoising dynamics.}
    We analyze representations for both ease of decoding (Appendix~\ref{app:representationdiagnostics}) and ease of generation, motivating a learned multilayer representation that improves reasoning accuracy (Figure~\ref{fig:representationcomparison}). Its layer decomposition enables asynchronous denoising across components, improving accuracy over synchronous denoising in our matched schedule comparison (Section~\ref{sec:clocks} and Table~\ref{tab:inferenceclocks}). Thus the latent space shapes both what the model learns and how it generates.

    \item \textbf{Learning compact prompt encoders through staged adaptation.}
    We show that \emph{information preservation} for denoising is sufficient for an ideal downstream score network to recover the correct text-conditional score, without exact MSE matching to teacher prompt features (Lemma~\ref{lem:conditionalrepresentation}). This permits a compact prompt encoder to replace the large teacher. Our staged curriculum first learns the flow under fixed teacher conditioning, then fits the encoder and jointly adapts both networks (Figure~\ref{fig:promptcurriculum}), addressing the difficulty of learning them together from initialization (Figure~\ref{fig:representationcomparison}).

    \item \textbf{Guidance-compatible diffusion reinforcement learning for reasoning.}
    We propose a method to reconcile DiffusionNFT's CFG-free optimization \citep{diffusionnft} with ELF's learned self-conditioning guidance. We optimize guidance-corrected fields while retaining guided rollouts, jointly updating the flow and prompt encoder as a text-conditioned policy (Section~\ref{sec:nftsccfg}). Gold-solution endpoints supplement sparse correctness rewards, with their anchoring effect characterized in Lemma~\ref{lem:goldanchor}. NFT improves single-sample accuracy on math and code, and majority-vote performance on mathematics (Figure~\ref{fig:headlinestages}; Table~\ref{tab:codingcomparison}).
\end{enumerate}

\section{Related work}
\label{sec:relatedwork}

\textbf{Diffusion language models.} Language diffusion operates in continuous representations \citep{plaid,tess2,hyperspherical} or discrete token spaces \citep{mdlm,llada,dbtm}, with recent continuous models targeting reasoning, coding, and few-step generation \citep{posterior,mlfm,plaidq}. We build on ELF \citep{elf}, which denoises contextual representations and shares a backbone between denoising and token decoding. Concurrent to our paper, ELF-REG \citep{scalingdlm} augments ELF with teacher-feature alignment and a jointly denoised global representation. CEDR learns its answer representation and replaces external prompt encoders with a compact trainable network.
\textbf{Representation design and denoising order.} Related work learns or aligns diffusion representations in vision and language \citep{repa,textldm,ldlm}, and explores separate denoising clocks for representation components in vision \citep{sfd,latentforcing}. We study learned multilayer language representations for both decoding and generation, and use their structure for asynchronous inference.
\textbf{Diffusion reinforcement learning.} Reward-based post-training has also been explored for diffusion reasoning \citep{d1,llada15,ladirl}. We adapt DiffusionNFT \citep{diffusionnft}, which optimizes rewarded endpoints through forward-process regression. Our adaptation accommodates ELF's learned self-conditioning guidance, adds gold-solution anchoring, and jointly updates the flow model and prompt encoder.

\section{Preliminaries}
\label{sec:preliminaries}

\textbf{Tokens, encodings, and latents.}
Let $q=(q_1,\ldots,q_{L_q})$ and $a=(a_1,\ldots,a_{L_a})$ be prompt and answer tokens in vocabulary $\mathcal V$, with the answer including its terminal token. Their padded canvas $y=(q,a,\mathrm{padding})$ has length $L$; canvas index $i$ and answer index $j$ satisfy $y_{L_q+j}=a_j$. An answer encoder gives $z_{\mathrm{clean}}=\mathcal E(q,a)\in\R^{L_a\times d}$, and a prompt-only encoder gives $c_\phi(q)=P_\phi(q)\in\R^{L_q\times d}$; rows are token vectors. Unlike the teacher's tokenwise lookup $B_\psi[q_i]$, $P_\phi$ uses prompt context. Parameters $\psi$ are frozen teacher weights, $\phi$ the prompt encoder, and $\theta$ the shared denoising/decoding network, with $\Theta=(\theta,\phi)$. Teacher layer $\ell$ contributes width $d_\ell$ to total latent width $d$. Uppercase letters denote random variables.

\textbf{Flow matching.}
Flow matching transports noise to data \citep{flowmatching}. For noise scale $\sigma>0$, our linear Gaussian path runs from noise at $t=0$ to clean latents at $t=1$:
\begin{equation}
 z_t=t z_{\mathrm{clean}}+(1-t)\sigma\epsilon,
 \qquad \epsilon\sim\mathcal N(0,I),\qquad
 u_t:=\frac{\mathrm d z_t}{\mathrm d t}=z_{\mathrm{clean}}-\sigma\epsilon.
 \label{eq:methodinterpolant}
\end{equation}
Treating text $q$ as the external condition and $c_\phi(q)$ as its internal encoding, write $v_\Theta(z,t;q):=v_\theta(z,t;c_\phi(q))$. The conditional objective is
\begin{equation}
 \mathcal L_{\mathrm{FM}}^{\mathrm{basic}}(\Theta)
 =\E\!\left[\left\|v_\theta\bigl(z_t,t;c_\phi(q)\bigr)-u_t\right\|_F^2\right].
 \label{eq:basicflowloss}
\end{equation}
The expectation samples training pairs $(q,a)$, times $t\sim\pi(t)$, and independent Gaussian noise.

\textbf{Text and encoded conditioning.}
Write $p_t(z\mid q)$ for the forward-path density and $s^\star(z,t\mid q)=\nabla_z\log p_t(z\mid q)$ for its conditional score. For $0<t<1$, the population-optimal velocity is
\begin{equation}
 v^\star(z,t\mid q)
 =\E[u_t\mid z_t=z,q]
 =\frac{z+\sigma^2(1-t)s^\star(z,t\mid q)}{t}.
 \label{eq:conditionalflowscore}
\end{equation}
Conditioning on $c_\phi(q)$ may merge distinct prompts; Section~\ref{sec:conditionalencoding} characterizes when the encoding preserves the text-conditional score.

\textbf{Sampling under a different clock.}
At inference, let $\tau\in[0,1]$ be solver time and $t=f(\tau)$ an increasing differentiable clock with $f(0)=0$, $f(1)=1$. Starting from $\bar z_0\sim\mathcal N(0,\sigma^2I)$, sampling obeys $\mathrm d\bar z_\tau/\mathrm d\tau=f'(\tau)v_\Theta(\bar z_\tau,f(\tau);q)$. For $f(\tau)=\tau^2$, the multiplier is $2\tau$. Exact integration preserves the endpoint, while finite-step accuracy can change. Section~\ref{sec:clocks} extends this to group-specific clocks.

\section{Continuous Diffusion Reasoning}
\label{sec:methodology}
\label{sec:methodflow}

For a general, fixed answer representation $\mathcal E(q,a)$ and prompt encoding $c=c(q)$, CEDR follows ELF's denoising and token-decoding formulation~\citep{elf} (Figure~\ref{fig:pipelineoverview}). We use canvas length $L=1024$, latent width $d=1024$, and noise scale $\sigma=2$.

\begin{figure}[t]
    \centering
    \includegraphics[width=\textwidth]{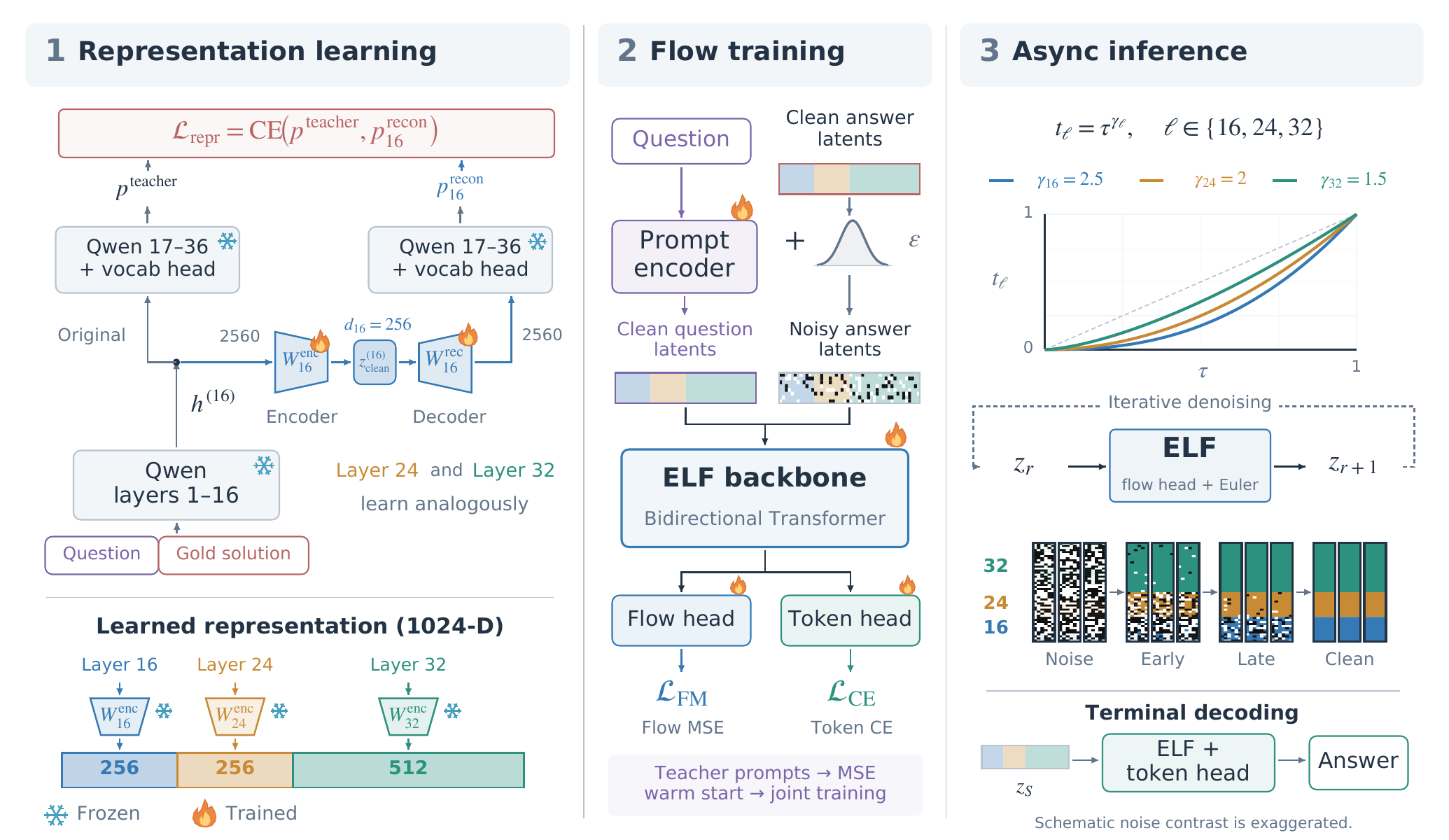}
    \caption{\textbf{The CEDR training and inference pipeline.}
    \textbf{Left:} Learn projections of Qwen layers 16/24/32 through teacher cross-entropy, then concatenate their outputs (Section~\ref{sec:representationconstruction}).
    \textbf{Center:} Train the ELF backbone with flow and token-decoding losses and staged prompt learning (Sections~\ref{sec:methodflow} and~\ref{sec:conditionalencoding}).
    \textbf{Right:} Denoise the representation groups under asynchronous clocks, then decode the answer in parallel (Section~\ref{sec:clocks}).}
    \label{fig:pipelineoverview}
\end{figure}

\textbf{Continuous denoising.}
ELF's bidirectional backbone predicts $\widehat z_{\mathrm{clean}}=F_\theta(z_t,t;c,\widehat z_{\mathrm{sc}},g)$ from the clean prompt and noisy answer at a shared training time $t$. Here $\widehat z_{\mathrm{sc}}$ is an earlier clean prediction and $g$ the guidance scale. We apply Eq.~\eqref{eq:basicflowloss} with velocity $v_\theta=(\widehat z_{\mathrm{clean}}-z_t)/\Delta_t$ and stabilized target $u_t^{\mathrm{stab}}=(z_{\mathrm{clean}}-z_t)/\Delta_t$, with $\Delta_t=\max(1-t,0.05)$; outside the clamp, $u_t^{\mathrm{stab}}=u_t$. Appendix~\ref{app:methodobjective} gives valid-token normalization.

\textbf{Self-conditioning and SCCFG.}
ELF's self-conditioning classifier-free guidance (SCCFG) learns to amplify answer self-conditioning at scale $g$. It compares predictions with and without \emph{answer self-conditioning}, retaining the prompt in both; Appendix~\ref{app:methodobjective} gives the detached guidance-adjusted target. Guidance is internal to the field, requiring one denoiser call per step.

\textbf{Recovering discrete tokens.}
A vocabulary head on the same backbone predicts same-position answer tokens under a separate decoder corruption $z^{\mathrm{dec}}$:
\begin{equation}
 \mathcal L_{\mathrm{CE}}
 =\E_{\mathrm{valid}}\!\left[-\log p_{\theta,j}^{\mathrm{tok}}(a_j\mid z^{\mathrm{dec}},c(q))\right].
 \label{eq:methoddecoderloss}
\end{equation}
The expectation averages valid answer tokens, including the terminal token; Appendix~\ref{app:methodobjective} specifies the exact batch weighting, decoder corruption, and mode mixing. At inference, encode the prompt once and keep it clean through $S$ Euler steps with recurrent self-conditioning; one additional backbone call decodes the answer in parallel and is excluded from denoising NFE.

\subsection{Learning the denoising representation}
\label{sec:representation}

The denoising representation must support generation and token recovery. We extract \textbf{teacher-forced Qwen3-4B-Instruct-2507 activations} \citep{qwen2507} in one forward pass; although these features are causal, our bidirectional denoiser jointly generates answer representations.

\textbf{Latent construction.}
\label{sec:representationconstruction}
Let $h_j^{(\ell)}$ be the frozen teacher's post-block activation at answer position $j$. We concatenate centered linear projections as $z_{\mathrm{clean},j}=\big[(h_j^{(\ell)}-\mu_\ell)W_\ell^{\mathrm{enc}}\big]_{\ell\in\mathcal S}$ for $\mathcal S=\{16,24,32\}$, with widths $(d_{16},d_{24},d_{32})=(256,256,512)$ and fixed means $\mu_\ell$. These maps initially also encode prompts; answer representations stay fixed during flow training and NFT.

\textbf{Learning the projectors.}
Linear decoders reconstruct $\widehat h_j^{(\ell)}=\mu_\ell+z_{\mathrm{clean},j}^{(\ell)}W_\ell^{\mathrm{rec}}$ (Figure~\ref{fig:pipelineoverview}, left). Replacing layer-$\ell$ answer activations with these reconstructions and running the frozen Qwen suffix yields next-token predictions $p_{\ell,j}^{\mathrm{recon}}$. We preserve the teacher's predictions through $\mathcal L_{\mathrm{repr}}=\sum_{\ell\in\mathcal S}\E_{q,a,j}\,\CE(p_j^{\mathrm{teacher}},p_{\ell,j}^{\mathrm{recon}})$, where $\CE$ is full-vocabulary soft cross-entropy. Only $W_\ell^{\mathrm{enc}}$ and $W_\ell^{\mathrm{rec}}$ are learned; the reconstruction decoders are used only at this stage. Appendix~\ref{app:representationimplementation} gives initialization, whitening, and scoring details.

\begin{figure}[tbp]
\centering
\includegraphics[width=.72\linewidth]{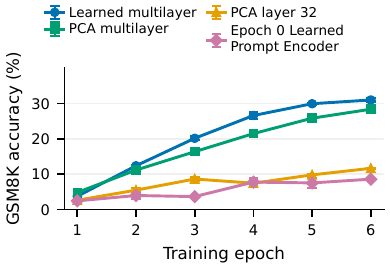}
\caption{\textbf{Representation and conditioning affect reasoning.} CEDR-B learning curves on GSM8K, with 32 steps and $t=\tau^2$. The first three variants retain Qwen conditioning; the fourth jointly trains a randomly initialized prompt encoder. Protocol: Appendix~\ref{app:representationclock}; values and SDs: Table~\ref{tab:representationcurvevalues} (Appendix~\ref{app:representationclock}).}
\label{fig:representationcomparison}
\end{figure}

The first three arms in Figure~\ref{fig:representationcomparison} compare 1,024-dimensional whitened representations under matched six-epoch CEDR-B training and evaluation; Table~\ref{tab:representationcurvevalues} gives four-seed endpoint results.

\textbf{Decodability and representation choice.}
\label{sec:representationablations}
The fixed 1,024-dimensional layer-32 PCA representation supports \textbf{99.24\%} held-out same-position token recovery after two epochs of training a separate decoder (Appendix~\ref{app:representationdiagnostics}). Yet CEDR-B using this same representation achieves only \textbf{11.20\%} GSM8K accuracy after six epochs. Accurate clean-token recovery alone is therefore an insufficient criterion for choosing a denoising representation.

\textbf{Layer selection.}
We choose layers 16, 24, and 32 based on centered kernel alignment (CKA; \citealp{cka}) of Qwen activations (Figure~\ref{fig:representationcka}). The sharp transition after layer 16 motivates retaining that layer, followed by regularly spaced deeper features at layers 24 and 32. This is a motivation for the triplet, rather than an exhaustive search for optimal layers; Appendix~\ref{app:representationcka} gives the measurement protocol. Holding the projection method and total latent width fixed, multilayer PCA raises accuracy to \textbf{28.43\%} (Figure~\ref{fig:representationcomparison}). The resulting layer groups also enable distinct denoising clocks (Section~\ref{sec:clocks}).

\begin{figure}[htbp]
\centering
\includegraphics[width=.62\linewidth]{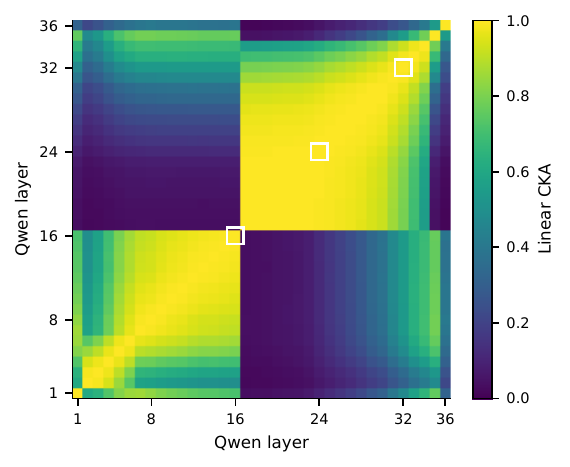}
\caption{\textbf{Qwen layer structure.} Linear CKA of teacher-forced activations on the combined discovery and confirmation panels. White markers select layers 16, 24, and 32. The transition after layer 16 motivates retaining that layer, followed by regularly spaced deeper features.}
\label{fig:representationcka}
\end{figure}

\textbf{Learned versus PCA.}
PCA preserves high-variance directions, which need not preserve the teacher's predictive behavior. Holding layers and latent widths fixed, learning the projectors through teacher cross-entropy raises accuracy from \textbf{28.43\% to 30.76\%} (Figure~\ref{fig:representationcomparison}).

\subsection{Learning the conditional encoding in three stages}
\label{sec:conditionalencoding}

ELF and ELF-REG use external contextual prompt encoders \citep{elf,scalingdlm}. We instead replace the Qwen teacher Transformer with a compact trainable encoder, retaining its frozen token lookup. For CEDR-L, the contextual network has \textbf{121M parameters}, compared with 3.63B in the full teacher Transformer (both excluding vocabulary input/output maps). The student uses six bidirectional attention blocks and encodes each prompt once; see Appendix~\ref{app:training} for its architecture.

\textbf{The prompt as the condition.}
The text-conditioned field $v_\Theta$ of Section~\ref{sec:preliminaries} includes self-conditioning and guidance as auxiliary inputs. We jointly optimize $\theta$ and $\phi$, keeping the prompt text, answer representation, and forward corruption fixed. The following lemma characterizes what the encoding must preserve.

\begin{lemma}[Preserving the text-conditional score]
\label{lem:conditionalrepresentation}
Let $Z_{\mathrm{clean}}$ and $Q$ denote the clean answer latents and prompt, with $\E\|Z_{\mathrm{clean}}\|_F^2<\infty$, and fix a deterministic encoder $C=P_\phi(Q)$. Let $s^\star(z,t\mid c)$ denote the score conditioned on $C=c$. Under the independent Gaussian corruption in Eq.~\eqref{eq:methodinterpolant}, for $0<t<1$ and $c=P_\phi(q)$,
\begingroup\small
\begin{equation}
 s^\star(z,t\mid q)-s^\star(z,t\mid c)
 =\frac{t}{\sigma^2(1-t)^2}
 \bigl(\E[Z_{\mathrm{clean}}\mid Z_t=z,Q=q]-\E[Z_{\mathrm{clean}}\mid Z_t=z,C=c]\bigr).
 \label{eq:conditionalscoredifference}
\end{equation}
\endgroup
Thus an encoder preserves the full-prompt score exactly when it preserves the posterior mean clean answer. If this holds along the path, the optimal unguided field also agrees by Eq.~\eqref{eq:conditionalflowscore}, giving the same conditional endpoint law under exact integration whenever the flow is well-defined.
\end{lemma}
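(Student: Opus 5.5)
The plan is to derive a Tweedie-type identity for the score under any conditioning $\sigma$-algebra, and then take the difference. Fix $0<t<1$ and write $Z_t=tZ_{\mathrm{clean}}+(1-t)\sigma\epsilon$, where $\epsilon$ is independent of $(Z_{\mathrm{clean}},Q)$. Let $G$ stand for either $Q$ or $C=P_\phi(Q)$. Because $C$ is a deterministic function of $Q$, $\epsilon$ is also independent of $(Z_{\mathrm{clean}},C)$. Consequently, conditional on $G=g$ and $Z_{\mathrm{clean}}=x$, we have $Z_t\sim\mathcal N(tx,(1-t)^2\sigma^2 I)$. The conditional density is then the Gaussian mixture
\[
p_t(z\mid g)=\int \varphi_{(1-t)\sigma}(z-tx)\,\mu_g(\mathrm dx),
\]
where $\mu_g$ is the law of $Z_{\mathrm{clean}}$ given $G=g$. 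This density is strictly positive and smooth in $z$.

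Next I differentiate under the integral sign. Since $\nabla_z\varphi_{(1-t)\sigma}(z-tx)=-\frac{z-tx}{(1-t)^2\sigma^2}\varphi_{(1-t)\sigma}(z-tx)$, dividing by $p_t(z\mid g)$ and recognizing the posterior gives
\[
s^\star(z,t\mid g)=\frac{t\,\E[Z_{\mathrm{clean}}\mid Z_t=z,G=g]-z}{\sigma^2(1-t)^2}.
\]
To justify the interchange, note that $\|z-tx\|\varphi(z-tx)$ is bounded uniformly in $x$ and locally uniformly in $z$, so dominated convergence applies. The finite second moment assumption (and hence finite first moment) ensures the posterior mean exists, at least for $\mu_g$-a.e. $g$. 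Subtracting the identity with $G=Q$, $g=q$ from the one with $G=C$, $g=c=P_\phi(q)$ cancels the $-z$ terms and gives Eq.~\eqref{eq:conditionalscoredifference} directly.

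The remaining claims follow quickly. The prefactor $t/(\sigma^2(1-t)^2)$ is nonzero, so the two scores agree exactly when the two posterior means agree. For the flow, the same derivation as Eq.~\eqref{eq:conditionalflowscore}, applied with conditioning on $C$, gives $v^\star(z,t\mid c)=\bigl(z+\sigma^2(1-t)s^\star(z,t\mid c)\bigr)/t$. Equivalently, $v^\star=\E[u_t\mid Z_t,G]$ with $u_t=(Z_{\mathrm{clean}}-Z_t)/(1-t)$, which is affine in the posterior mean. Agreement of the posterior means for all $t\in(0,1)$ therefore makes the two velocity fields identical. Under exact integration from the same Gaussian initial law, with a well-defined flow, the ODE solution maps coincide, so the endpoint laws are the same.

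The main obstacle I expect is measure-theoretic rather than algebraic. Conditioning on $C=c$ involves regular conditional distributions defined only up to null sets. I would handle this by choosing versions of the conditional laws such that $\mu_c$ is the $\mu_q$-mixture over $\{q:P_\phi(q)=c\}$. This is straightforward when the prompt space is countable, which is the case for token sequences. With that choice, the identity holds for every $q$ with positive mass, and the density formula makes the posterior means continuous in $z$ pointwise.
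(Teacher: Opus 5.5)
Your proposal is correct and follows essentially the same route as the paper: the Gaussian-mixture (Tweedie) identity $s^\star(z,t\mid g)=\bigl(t\,\E[Z_{\mathrm{clean}}\mid Z_t=z,G=g]-z\bigr)/\bigl(\sigma^2(1-t)^2\bigr)$ for both $G=Q$ and $G=C$, subtraction with the nonzero prefactor, and the forward-path identity $u_t=(Z_{\mathrm{clean}}-Z_t)/(1-t)$ to carry agreement over to the optimal velocity field and the endpoint law. Your dominated-convergence argument and your choice of compatible conditional-law versions over the countable prompt space fill in details the paper leaves implicit. One small correction: the same boundedness argument already shows that the posterior mean exists for every $z$, so the second-moment assumption is not what guarantees it; the phrase ``$\mu_g$-a.e.\ $g$'' should instead be almost every $g$ under the law of $G$.
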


\begin{figure}[tbp]
\centering
\includegraphics[width=.56\linewidth]{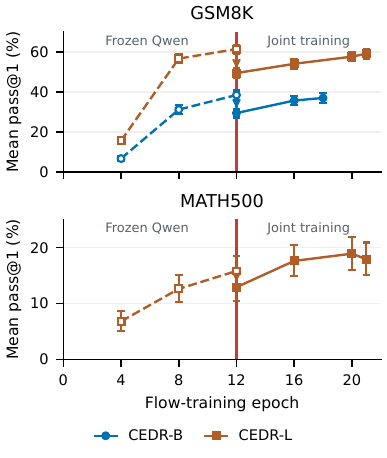}
\caption{\textbf{From teacher conditioning to learned prompts.} Top: GSM8K; bottom: MATH500. Circles/squares denote CEDR-B/L; open/filled markers denote Qwen/learned prompt encoding. Red line: epoch-12 switch; arrows: immediate MSE-swap loss. Values and settings: Appendix~\ref{app:promptcurves}.}
\label{fig:promptcurriculum}
\end{figure}

The encoder need not reproduce the teacher's coordinates: it must preserve the information needed for the posterior mean answer. Appendix~\ref{app:conditioninglemma} proves the lemma and gives an information-preserving encoding with arbitrarily large teacher-imitation MSE. MSE matching supplies a compatible initialization, while downstream training can adapt the encoder and denoiser jointly. We use three stages, keeping answer representations and token embeddings fixed throughout (details in Appendix~\ref{app:training}):

\textbf{1. Frozen conditioning.} Train the flow model for 12 epochs using fixed Qwen prompt representations and the flow/decoder objective. \textbf{2. Prompt imitation.} Freeze the flow model and fit the compact encoder to the teacher prompt representations using MSE. \textbf{3. Joint adaptation.} Substitute the learned encoder and optimize both networks through the flow/decoder objective, without an additional prompt-MSE loss, to epoch 18 for CEDR-B or epoch 21 for CEDR-L on the mathematical reasoning tasks.

Replacing Qwen with the MSE-trained encoder immediately reduces accuracy, despite unchanged flow-model weights (Figure~\ref{fig:promptcurriculum}). Subsequent joint training recovers much of this loss on GSM8K and exceeds the original teacher-conditioned checkpoint on MATH500. This recovery reflects adaptation of \emph{both} networks, showing the value of learning the conditioning interface through the generative objective.

\textbf{Importance of fixing initial conditioning.}
\label{sec:conditioningcurriculum}
Joint learning from initialization makes the conditioning interface change while the flow model is still learning to interpret it. Under the matched six-epoch comparison, this approach reaches only \textbf{8.70\%} GSM8K accuracy, versus \textbf{30.76\%} with frozen teacher conditioning (Figure~\ref{fig:representationcomparison}). This motivates stabilizing the condition during early learning and adapting it afterward.

Table~\ref{tab:trainingstages} summarizes the curriculum and reported NFT endpoints for each task. The coding joint stage additionally regularizes prompt features toward the teacher; Appendix~\ref{app:codingtraining} specifies this term.

\begin{table}[!htb]
\centering\small
\setlength{\tabcolsep}{5pt}
\caption{Training stages by task. Flow-training epochs count passes over answer rows; prompt-MSE epochs count passes over unique prompts and do not advance the flow-training epoch. Joint epochs are additional to the first 12 flow epochs. The last column gives the NFT checkpoint used for reporting, in successful optimizer updates, rather than a further epoch count.}
\label{tab:trainingstages}
\begin{tabular}{@{}lcccc@{}}
\toprule
Task / model & \shortstack{Frozen-Qwen\\flow epochs} & \shortstack{Prompt-MSE\\epochs} & \shortstack{Joint epochs\\(flow endpoint)} & \shortstack{Reported\\NFT update}\\
\midrule
GSM8K / CEDR-B & 12 & 60 & 6 (18) & 300\\
GSM8K / CEDR-L & 12 & 30 & 9 (21) & 500\\
MATH / CEDR-L & 12 & 10 & 9 (21) & 600\\
OpenCodeInstruct / CEDR-L & 12 & 10 & 1 (13) & 100\\
\bottomrule
\end{tabular}
\end{table}

\subsection{Asynchronous inference with local clocks}
\label{sec:clocks}

Our multilayer representation allows different feature groups to become clean at different rates, potentially letting one group use cleaner information from another. We extend the shared clock of Section~\ref{sec:preliminaries} to layer-specific clocks $t_\ell=f_\ell(\tau)=\tau^{\gamma_\ell}$ for $\ell\in\{16,24,32\}$ at inference, while retaining synchronous training. Equal exponents recover synchronous inference; unequal exponents give different noise levels within a single denoiser call.

\textbf{Adapting a synchronously trained denoiser.}
Although training uses a shared time, inference applies the same learned time-embedding function to each local time and averages its outputs, $\bar e_\theta^{\mathrm{time}}(\bm t)=\frac13\sum_\ell e_\theta^{\mathrm{time}}(t_\ell)$. This needs no retraining and recovers the original embedding when local times agree.

For solver state $z_r=\bar z_{\tau_r}$, predicted clean endpoint $\widehat z_{\mathrm{clean},r}$, and local times $t_{\ell,r}=f_\ell(\tau_r)$, the groupwise Euler update is
\begin{equation}
 z_{r+1}^{(\ell)}=z_r^{(\ell)}+
 (\tau_{r+1}-\tau_r)\,f'_\ell(\tau_r)
 \frac{\widehat z_{\mathrm{clean},r}^{(\ell)}-z_r^{(\ell)}}
 {\Delta_{t_{\ell,r}}},
 \label{eq:asyncsampler}
\end{equation}
with $\Delta_t$ defined in Section~\ref{sec:methodflow}. All groups update from the same denoiser call, so asynchrony does not increase NFE. Unlike shared-clock reparameterization, this adapts the model to unequal group times absent from synchronous training; implementation details appear in Appendix~\ref{app:inferenceclocks}.

\begin{table}[!htbp]
\centering\small
\setlength{\tabcolsep}{3pt}
\caption{\textbf{Inference clocks change accuracy without retraining.} CEDR-L epoch 21, 32 steps; mean pass@1 (\%). Protocol and full sweep: Appendix~\ref{app:inferenceclocks}.}
\label{tab:inferenceclocks}
\begin{tabular}{@{}lrr@{}}
\toprule
$(\gamma_{16},\gamma_{24},\gamma_{32})$ & GSM8K & MATH500 \\
\midrule
$(1,1,1)$ & 52.96 {\scriptsize($\pm 1.13$)} & 15.50 {\scriptsize($\pm 1.27$)} \\
$(2,2,2)$ & 57.51 {\scriptsize($\pm 0.80$)} & 17.40 {\scriptsize($\pm 1.41$)} \\
$(1.5,2,2.5)$ & 58.07 {\scriptsize($\pm 0.64$)} & 17.60 {\scriptsize($\pm 1.98$)} \\
$(2.5,2,1.5)$ & \textbf{59.25} {\scriptsize($\pm 0.16$)} & \textbf{17.90} {\scriptsize($\pm 1.27$)} \\
\bottomrule
\end{tabular}
\end{table}

\textbf{Schedule choice and reasoning accuracy.}
Our headline results use $(\gamma_{16},\gamma_{24},\gamma_{32})=(2.5,2,1.5)$, chosen based on the highest observed accuracy in the reported MATH validation comparison (Appendix~\ref{app:inferenceclocks}). The shared-square clock improves on the identity clock, and $(2.5,2,1.5)$ achieves the highest mean among the four schedules in Table~\ref{tab:inferenceclocks} on both benchmarks. The validation and lower-NFE comparisons show that the ordering depends on the evaluation setting (Appendix~\ref{app:inferenceclocks}).

Reconstruction probes reveal distinct sensitivities across layer groups, with weaker contrasts under random coordinate grouping (Figure~\ref{fig:layerreconstruction}). These findings motivate studying coordinated denoising across representations, but do not establish the mechanism behind the sampling gains.


\section{Reinforcement learning with self-conditioned flows}
\label{sec:nft}

DiffusionNFT \citep{diffusionnft} improves continuous generators through rewarded endpoint regression, without differentiating through sampling trajectories. We adapt it to CEDR's learned self-conditioning guidance and supplement sparse correctness rewards with gold-solution endpoints.

\subsection{NFT with learned self-conditioning guidance}
\label{sec:nftsccfg}

\textbf{The velocity field optimized by standard NFT.}
Let $v_{\mathrm{cur}}$ and $v_{\mathrm{old}}$ denote the current and old policy velocity fields, $u$ a sample's forward velocity target, and $\rho\in[0,1]$ its normalized optimality coefficient. NFT forms implicit positive and negative velocity fields, $v^+=(1-\beta)v_{\mathrm{old}}+\beta v_{\mathrm{cur}}$ and $v^-=(1+\beta)v_{\mathrm{old}}-\beta v_{\mathrm{cur}}$. For current and old velocity-field values $f,h$, the pointwise loss is
\begin{equation}
 \ell_\beta(f,h;u,\rho)
 =\rho\|(1-\beta)h+\beta f-u\|_2^2
 +(1-\rho)\|(1+\beta)h-\beta f-u\|_2^2.
\label{eq:nftquadratic}
\end{equation}
Standard NFT applies this loss to $f=v_{\mathrm{cur}}$ and $h=v_{\mathrm{old}}$, holding the old field fixed; its original experiments use CFG-free optimization \citep{diffusionnft}. CEDR instead learns a guidance-conditioned field, motivating a correction before applying NFT's regression objective.

\textbf{Constructing the SCCFG-compatible field.}
For each policy $p\in\{\mathrm{cur},\mathrm{old},\mathrm{ref}\}$, forward-noise an endpoint and obtain velocity $v_p^0$ from a detached bootstrap without answer self-conditioning. The main prediction $v_p^{\mathrm{main}}$ uses the bootstrap clean endpoint when $b_{\mathrm{sc}}=1$ and zero answer self-conditioning otherwise; both passes retain the question. Following ELF's guidance construction, define
\begin{equation}
 \widetilde v_p=v_p^{\mathrm{main}}-
 \sg\!\left[b_{\mathrm{sc}}(1-g^{-1})(v_p^{\mathrm{main}}-v_p^0)\right],
 \label{eq:nftcorrectedfield}
\end{equation}
where $g>0$ is the SCCFG scale. Our update applies Eq.~\eqref{eq:nftquadratic} to $f=\widetilde v_{\mathrm{cur}}$, $h=\sg(\widetilde v_{\mathrm{old}})$, and $u=u_t^{\mathrm{stab}}$ from Section~\ref{sec:methodflow}. We use $\beta=1$ and add a squared-velocity penalty toward the corrected field $\widetilde v_{\mathrm{ref}}$ of the fixed initial reference. With self-conditioning enabled, the corrected value is $v_p^0+(v_p^{\mathrm{main}}-v_p^0)/g$. Detaching the correction preserves the main-pass Jacobian rather than scaling it by $1/g$ (Appendix~\ref{app:nftdetails}).

\textbf{Guided rollouts and forward-process updates.}
Collection retains recurrent self-conditioning and SCCFG guidance. We reward decoded answers but optimize their saved continuous endpoints, using fresh forward corruption and bootstrap predictions rather than replaying rollout histories. Thus rollouts remain guided while NFT optimizes the corrected fields in Eq.~\eqref{eq:nftcorrectedfield}.

\subsection{Gold-anchored reasoning updates}
\label{sec:nftgold}

Each question contributes 15 old-policy generations and one gold latent endpoint, with raw rewards 0.75 for correct generated answers, 0 for incorrect answers, and 1 for gold. The gold endpoint supplies a preferred target even when all generated answers are incorrect. We discard groups whose generated answers are all correct, concentrating updates on observed failures; retained rewards are normalized into $\rho$ as described in Appendix~\ref{app:nftdetails}. This recipe uses reference solutions as well as verifiable rewards. Gold raw reward one need not give $\rho_G=1$; the following lemma characterizes its contribution.

\begin{lemma}[Gold endpoints as field anchors]
\label{lem:goldanchor}
For a gold endpoint, let $\mathcal H$ contain the noisy state, time, question, coefficient $\rho_G\in[1/2,1]$, and any auxiliary inputs held fixed for the update. Let $U$ denote the random stabilized target $u_t^{\mathrm{stab}}$, assume its conditional second moment is finite, and set $m_G=\E[U\mid\mathcal H,\mathrm{gold}]$. For $\mathcal H$-measurable current and old velocity-field values $f,h$, define $a_G=(2\rho_G-1)m_G+2(1-\rho_G)h$. The beta-one NFT loss satisfies
\begin{equation}
 \E[\ell_1(f,h;U,\rho_G)\mid\mathcal H,\mathrm{gold}]
 =\|f-a_G\|_2^2+C_{\mathcal H},
\label{eq:nftgoldanchor}
\end{equation}
where $C_{\mathcal H}$ is independent of $f$ with the conditioning information and auxiliary fields fixed. Thus the gold term attracts the current field toward a convex combination of the gold posterior target and the old field.
\end{lemma}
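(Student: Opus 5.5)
We set $\beta=1$ in Eq.~\eqref{eq:nftquadratic} and expand both squares as quadratics in $f$, then complete the square. With $\beta=1$, the loss is
\[
\ell_1(f,h;U,\rho_G)=\rho_G\|f-U\|_2^2+(1-\rho_G)\|2h-f-U\|_2^2 .
\]
Both terms are squares of affine functions of $f$. The leading coefficient in $f$ is $\rho_G+(1-\rho_G)=1$, so the loss has the form $\|f\|_2^2-2\langle f,\cdot\rangle+\text{const}$.

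\textbf{Identify the linear term.} From the first term, the cross term is $-2\rho_G\langle f,U\rangle$. For the second, write $\|2h-f-U\|^2=\|f-(2h-U)\|^2$, whose cross term is $-2(1-\rho_G)\langle f,2h-U\rangle$. The total cross term is therefore $-2\langle f,(2\rho_G-1)U+2(1-\rho_G)h\rangle$.

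\textbf{Take conditional expectations given $(\mathcal H,\mathrm{gold})$.} Here $f$, $h$ and $\rho_G$ are $\mathcal H$-measurable, so they factor out. $U$ is replaced by $m_G$ in the linear term, and this step is justified by the finite conditional second moment. The remaining terms, $\rho_G\|U\|^2$, $(1-\rho_G)\|2h-U\|^2$ and their expectations, are finite and do not involve $f$. This gives
\[
\E[\ell_1\mid\mathcal H,\mathrm{gold}]=\|f\|_2^2-2\langle f,a_G\rangle+K_{\mathcal H}=\|f-a_G\|_2^2+\bigl(K_{\mathcal H}-\|a_G\|_2^2\bigr).
\]
The constant $C_{\mathcal H}:=K_{\mathcal H}-\|a_G\|_2^2$ depends only on $\mathcal H$, $h$ and the conditional law of $U$. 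It is therefore independent of $f$ once the conditioning information and the auxiliary fields, including the old field, are fixed.

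\textbf{Interpret $a_G$.} The coefficients are $2\rho_G-1\in[0,1]$ and $2(1-\rho_G)\in[0,1]$ because $\rho_G\in[1/2,1]$. They sum to $1$, so $a_G$ is a convex combination of $m_G$ and $h$. This is the only place the restriction $\rho_G\ge 1/2$ is used.

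\textbf{Main obstacle.} The main care point is measurability and bookkeeping rather than any hard estimate. Exactly which quantities are held fixed (the $\mathcal H$-measurable $f$ and $h$, the self-conditioning inputs, and $\rho_G$ after normalization) must be spelled out so that pulling $f$ and $h$ out of the conditional expectation is legitimate. With that in place, the rest is completing the square.
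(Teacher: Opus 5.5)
Your proposal is correct and is essentially the paper's argument: completing the square in $f$ after conditioning on $(\mathcal H,\mathrm{gold})$. The paper takes the conditional bias--variance split first and then applies the two-point identity $\omega\|f-a\|^2+(1-\omega)\|f-b\|^2=\|f-[\omega a+(1-\omega)b]\|^2+\omega(1-\omega)\|a-b\|^2$, which gives the explicit constant $C_{\mathcal H}=\sigma_G^2+4\rho_G(1-\rho_G)\|m_G-h\|_2^2$; your implicit $K_{\mathcal H}-\|a_G\|_2^2$ equals this and suffices for the statement.
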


At $\rho_G=1$, the gold term directly regresses toward $m_G$. Under an ideal reference model matching that gold posterior target, the anchor interpolates between the reference and old fields; Appendix~\ref{app:nftgoldproof} gives the proof and scope. This gold-induced anchor complements the explicit reference penalty.

Because the external condition is the question text, NFT jointly updates the prompt encoder and flow model as one conditional policy (Lemma~\ref{lem:conditionalrepresentation}), without a prompt-imitation objective. Training uses synchronous identity clocks; asynchronous evaluation follows Section~\ref{sec:clocks}. Appendix~\ref{app:training} gives optimization details, and Sections~\ref{sec:nftresults} and~\ref{sec:codingresults} discuss the empirical gains.

\section{Empirical Evaluation}
\label{sec:headlineresults}

We evaluate mathematical reasoning on GSM8K and MATH500 \citep{gsm8k,math,math500}, and code generation in Section~\ref{sec:codingresults}. Math evaluations use zero-shot prompts, async clocks $(2.5,2,1.5)$, numerical-answer scoring for GSM8K, and Math-Verify for MATH500 \citep{mathverify}. Headline pass@1 averages eight sampling seeds at 64 denoising steps. NFE counts denoising calls, excluding prompt encoding and terminal decoding. Appendices~\ref{app:experimentdetails} and~\ref{app:seedvariability} give checkpoint settings and seed standard deviations.

\subsection{Mathematical reasoning}
\label{sec:mathresults}
\begin{table}[!htb]
\caption{Mathematical reasoning across model families, mean pass@1 (\%). Parameters are in millions (three significant digits). Embedding Params count vocabulary input/output maps, tied weights once; Other Params count remaining weights, with $(+X)$ used once per response. Frozen components are included. $^{*}N$ denotes $N-1$ denoising calls for ELF-REG. Bold/underline mark first/second within each sub-billion continuous-backbone group.}
\label{tab:headlinecomparison}
\centering\small
\setlength{\tabcolsep}{4pt}
\begin{tabular}{@{}lrrrrr@{}}
\toprule
Model & \shortstack{Embedding\\Params (M)} & \shortstack{Other\\Params (M)} & NFE & GSM8K & MATH500\\
\midrule
\multicolumn{6}{@{}l}{\textit{Autoregressive}}\\
Qwen3-0.6B-Base \citep{qwen3} & 156 & 440 & --- & 59.59 & ---\\
Qwen3-0.6B, thinking \citep{qwen3} & 156 & 440 & --- & --- & 77.60\\
\midrule
\multicolumn{6}{@{}l}{\textit{Discrete diffusion / blockwise diffusion}}\\
LLaDA-8B-Base \citep{llada} & 1,040 & 6,980 & 1,024 & 70.30 & ---\\
Dream-v0-Base-7B \citep{dream} & 1,090 & 6,530 & 256 & 77.20 & ---\\
SDAR-1.7B-Chat \citep{sdar} & 622 & 1,410 & 4/block & 80.10 & 63.20\\
\midrule
\multicolumn{6}{@{}l}{\textit{Continuous diffusion / flow: denoising backbone below 200M}}\\
\shortstack[l]{$\mathbb S$-FLM (top-1)\\\citep{hyperspherical}} & 75.5 & 87.5 & 1,024 & 18.00$^\dagger$ & ---\\
FMLM+ Init \citep{posterior} & 75.5 & 92 & 64 & 33.60$^\dagger$ & ---\\
ELF-REG-B \citep{scalingdlm} & 311 & 104 $(+441)$ & $^{*}64$ & 38.11 & ---\\
\textbf{CEDR-B, pre-NFT (ours)} & 545 & 90.4 $(+45.8)$ & 64 & \underline{39.17} & ---\\
\textbf{CEDR-B, post-NFT (ours)} & 545 & 90.4 $(+45.8)$ & 64 & \textbf{42.16} & ---\\
\midrule
\multicolumn{6}{@{}l}{\textit{Continuous diffusion / flow: denoising backbone 200M--1B}}\\
ELF-REG-L \citep{scalingdlm} & 311 & 652 $(+442)$ & $^{*}64$ / $^{*}128$ & 55.96 & 13.39\\
\textbf{CEDR-L, pre-NFT (ours)} & 545 & 638 $(+123)$ & 64 & \underline{58.67} & \underline{21.18}\\
\textbf{CEDR-L, post-NFT (ours)} & 545 & 638 $(+123)$ & 64 & \textbf{63.74} & \textbf{24.68}\\
\midrule
\multicolumn{6}{@{}l}{\textit{Continuous diffusion / flow: larger backbones (reference)}}\\
MLFM \citep{mlfm} & 123 & 1,220 & NR$^\S$ & 31.24 & ---\\
TESS 2, GSM8K FT \citep{tess2} & 262 & 6,740 & 1,000 & $\approx68.9^\ddagger$ & ---\\
\bottomrule
\end{tabular}
\vspace{3pt}
\parbox{\linewidth}{\footnotesize External results are author-reported; $^\dagger$Python scoring, $^\ddagger$Approximate accuracy read from TESS 2 Figure 3b. $^\S$MLFM reports 256 sampling steps with guidance and online token promotion; the corresponding NFE is not established (NR). Slashes separate GSM8K/MATH500 budgets. Appendix~\ref{app:experimentdetails} details prompting and parameter accounting; Table~\ref{tab:headlineseedsd} in Appendix~\ref{app:seedvariability} gives our seed SDs.}
\end{table}

\textbf{Reasoning accuracy across denoising budgets.}
\label{sec:headlinecomparison}
\label{sec:headlinelownfe}
At comparable denoising-backbone sizes, supervised CEDR already exceeds the reported continuous-diffusion baselines, and NFT gives our strongest results (Table~\ref{tab:headlinecomparison}). On MATH500, supervised CEDR-L reaches 21.18\%, compared with ELF-REG-L's 13.39\% at 127 denoising calls \citep{scalingdlm}. Our 90.4M/638M denoisers use compact learned prompt encoders, without executing the teacher Transformer during generation. The autoregressive, discrete-diffusion, and larger continuous-model rows provide broader performance context. Their training data, model sizes, and evaluation protocols differ from ours.

\begin{table}[!htb]
\caption{Reasoning at lower NFE, mean pass@1 (\%). Table~\ref{tab:headlinecomparison} gives model parameter counts. Bold/underline mark first/second within each benchmark and backbone-scale block; $^{*}$ marks one fewer denoising call than the heading. External results follow \citet{posterior,scalingdlm}; Appendix~\ref{app:experimentdetails} gives source and protocol details.}
\label{tab:headlinelownfe}
\centering\small
\setlength{\tabcolsep}{1.7pt}
\begin{tabular}{@{}l*{12}{r}@{}}
\toprule
& \multicolumn{4}{c}{GSM8K ($\sim$100M)} & \multicolumn{4}{c}{GSM8K ($\sim$650M)} & \multicolumn{4}{c}{MATH500 ($\sim$650M)}\\
\cmidrule(lr){2-5}\cmidrule(lr){6-9}\cmidrule(lr){10-13}
Method / NFE & 8 & 16 & 32 & 64 & 8 & 16 & 32 & 64 & 8 & 16 & 32 & 64\\
\midrule
FMLM+ Init & 15.10 & 26.10 & 31.80 & 33.60 & --- & --- & --- & --- & --- & --- & --- & ---\\
ELF-REG & $^{*}$26.20 & $^{*}$34.42 & $^{*}$37.15 & $^{*}$38.11 & $^{*}$47.34 & $^{*}$52.89 & $^{*}$55.04 & $^{*}$55.96 & $^{*}$9.76 & $^{*}$11.53 & $^{*}$12.40 & $^{*}$13.31\\
CEDR, pre-NFT & \underline{26.40} & \underline{34.48} & \underline{37.80} & \underline{39.17} & \underline{49.41} & \underline{55.33} & \underline{57.88} & \underline{58.67} & \underline{15.23} & \underline{17.93} & \underline{19.65} & \underline{21.18}\\
CEDR, post-NFT & \textbf{30.08} & \textbf{37.77} & \textbf{41.13} & \textbf{42.16} & \textbf{54.84} & \textbf{60.55} & \textbf{62.47} & \textbf{63.74} & \textbf{17.45} & \textbf{20.93} & \textbf{22.60} & \textbf{24.68}\\
\bottomrule
\end{tabular}
\vspace{3pt}
\parbox{\linewidth}{\footnotesize Our checkpoints are fixed across columns, with SCCFG 2 for GSM8K and 3 for MATH500. Sampling-seed counts and SDs: Table~\ref{tab:headlineseedsd} in Appendix~\ref{app:seedvariability}.}
\end{table}

Both supervised and post-NFT CEDR retain this advantage at 8, 16, 32, and 64 NFE within each backbone-size group (Table~\ref{tab:headlinelownfe}). At eight steps, supervised CEDR-L reaches 15.23\% on MATH500, exceeding ELF-REG-L's 13.39\% at 127 denoising calls. Useful reasoning therefore emerges without training a separate few-step student. These are reported operating points with different training and evaluation protocols; NFE does not equate FLOPs across methods. The MATH500 headline tables use SCCFG 3; GSM8K and all inference-allocation curves retain SCCFG 2.

\textbf{Refining an answer or drawing more answers.}
\label{sec:headlinebudget}
For $k$ samples with $S$ denoising steps each, we use $kS$ as the inference budget. Oracle pass@$k$ measures whether any sample is correct; majority vote selects the most frequent final answer after grouping equivalent answers, without consulting the reference. For selectors returning one of the generated answers, voting provides an achievable \emph{lower bound} on optimal selection accuracy, while oracle pass@$k$ supplies its \emph{upper bound}. We compare $S\in\{8,16,32,64\}$ up to a budget of 512 calls; Appendix~\ref{app:budgetvalues} gives aggregation details.

\begin{figure}[!htbp]
\centering
\includegraphics[width=\linewidth]{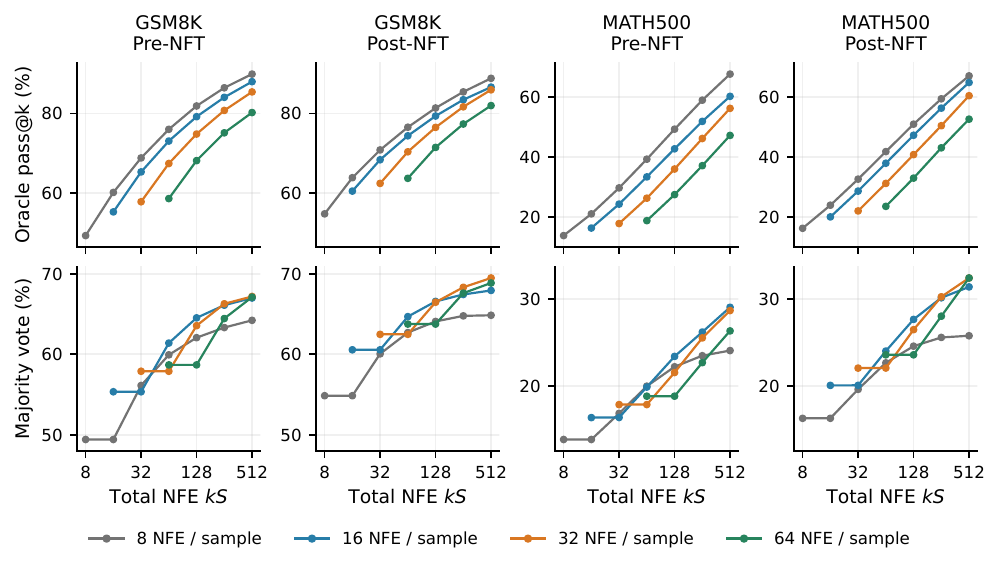}
\caption{CEDR-L before and after NFT on GSM8K and MATH500, with SCCFG 2. Top: oracle pass@$k$; bottom: majority-vote accuracy. Colors denote denoising NFE per sample; the budget is $kS$. Numerical values appear in Appendix~\ref{app:budgetvalues}; Appendix~\ref{app:elfbbudgets} gives the corresponding CEDR-B curves.}
\label{fig:headlinestages}
\end{figure}

Figure~\ref{fig:headlinestages} shows that oracle selection generally favors more attempts with fewer steps, whereas voting generally favors higher-NFE samples. On MATH500 after NFT, 64 eight-step samples reach 67.00\% oracle accuracy, versus 52.60\% for eight 64-step samples; voting reverses this preference, at 25.79\% versus 32.42\%. Intermediate depths remain competitive for voting. The voting--oracle gap motivates better answer selection, potentially using latent-space or hidden-state classifiers.

The corresponding GSM8K CEDR-B comparison appears in Appendix~\ref{app:elfbbudgets}.

\textbf{What changes after NFT?}
\label{sec:nftresults}
NFT improves single-sample accuracy at every reported denoising depth (Tables~\ref{tab:headlinecomparison}--\ref{tab:headlinelownfe}). At 64 steps with SCCFG 2, the paired CEDR-L gains are $+5.07$ percentage points on GSM8K and $+4.75$ on MATH500. These results support the promise of diffusion RL for continuous reasoning models. The benefit is less uniform for high-$k$ oracle coverage: at eight steps on MATH500, NFT raises pass@1 from 13.88\% to 16.31\%, while pass@64 changes from 67.60\% to 67.00\%. Voting shows clearer gains: with eight 64-step samples, MATH500 accuracy rises from 26.34\% to 32.42\% ($+6.07$ points). Voting benefits from the correct answer's frequency relative to competing answers, whereas oracle selection needs only one successful draw. These patterns do not establish reduced reasoning diversity after NFT; Appendix~\ref{app:budgetvalues} gives further comparisons and interpretation. Appendices~\ref{app:generations}--\ref{app:trajectories} illustrate complete solutions and intermediate generations.

\paragraph{Higher-pass comparison with continuous-diffusion baselines.}
At 64 denoising calls, supervised and post-NFT CEDR-L reach MATH500 pass@8 of 47.20\% and 52.60\%, respectively. Both exceed ELF-REG-L's reported pass@10 of 44.04\% at 63 denoising calls (Table~\ref{tab:mathhigherpass}; \citealp[Table 18]{scalingdlm}). This compares explicitly different attempt counts, not equal compute or an estimate of our pass@10. These CEDR results use SCCFG 2, matching Figure~\ref{fig:headlinestages}; the headline pass@1 tables use SCCFG 3.

\begin{table}[!htbp]
\centering\small
\setlength{\tabcolsep}{8pt}
\caption{\textbf{MATH500 oracle accuracy at higher sample counts.} CEDR-L uses learned prompt conditioning, SCCFG 2, and eight saved draws. ELF-REG-L retains its external prompt encoder and reports the two sample counts below. NFE counts denoising calls per sample; $k\times\mathrm{NFE}$ makes the differing total budgets explicit. These are cross-paper reported results, with protocol details in Appendix~\ref{app:experimentdetails}. Table~\ref{tab:budgetvaluesmathl} gives our uncertainty estimates.}
\label{tab:mathhigherpass}
\begin{tabular}{@{}lrrrr@{}}
\toprule
Model & NFE & $k$ & $k\times\mathrm{NFE}$ & Pass@$k$ (\%)\\
\midrule
ELF-REG-L \citep{scalingdlm} & 63 & 8 & 504 & 40.41\\
ELF-REG-L \citep{scalingdlm} & 63 & 10 & 630 & 44.04\\
CEDR-L, pre-NFT & 64 & 8 & 512 & 47.20\\
CEDR-L, post-NFT & 64 & 8 & 512 & 52.60\\
\bottomrule
\end{tabular}
\end{table}

\FloatBarrier
\subsection{Code generation}
\label{sec:codingresults}

On OpenCodeInstruct \citep{opencodeinstruct}, CEDR-L uses 12 epochs of frozen-Qwen flow training, ten prompt-MSE epochs, one joint epoch, and 100 NFT updates with execution-based rewards (Appendix~\ref{app:training}). Our rows in Table~\ref{tab:codingcomparison} use eight seeds, 128 denoising calls, async $(2.5,2,1.5)$, and SCCFG 3.

\begin{table}[!htbp]
\centering\small
\setlength{\tabcolsep}{5pt}
\caption{\textbf{Code generation under separate scoring conventions.} Mean pass@1 (\%). HE/HE+ use base/full extended tests; both MBPP columns use 378 tasks. PlaidQ's ``MBPP+'' uses base tests. Frozen Qwen and prompt MSE 10 share the epoch-12 backbone; pre-NFT is joint epoch 13, and post-NFT is update 100. Our rows use eight seeds. $^{*}128$ denotes 127 denoising calls for ELF-REG; all budgets exclude terminal decoding. The alias block allows single-function entry-point repair; rows across blocks should not be ranked together. Appendix~\ref{app:codingdetails} gives remaining protocol differences and seed SDs.}
\label{tab:codingcomparison}
\begin{tabular}{@{}lrrrrr@{}}
\toprule
Model / setting & \shortstack{Denoising\\NFE} & HE & HE+ & \shortstack{MBPP-378\\base} & \shortstack{MBPP+\\full}\\
\midrule
\multicolumn{6}{@{}l}{\textit{Standard scoring: no function-name repair}}\\
PlaidQ \citep{plaidq} & 128 & 19.02 & 18.05 & 22.18 & ---\\
PlaidQ+CFG \citep{plaidq} & 256 & 22.04 & 19.70 & 24.58 & ---\\
\addlinespace
CEDR-L / frozen Qwen & 128 & 33.69 & 30.87 & 38.72 & 33.99\\
\addlinespace
CEDR-L / prompt MSE 10 & 128 & 27.97 & 25.30 & 19.61 & 16.96\\
CEDR-L, pre-NFT (ours) & 128 & 29.57 & 26.91 & 18.45 & 16.40\\
CEDR-L, post-NFT (ours) & 128 & 32.85 & 30.18 & 22.26 & 19.61\\
\midrule
\multicolumn{6}{@{}l}{\textit{Single-function name aliasing}}\\
ELF-REG-L \citep{scalingdlm} & $^{*}128$ & 22.56 & 21.46 & 28.92 & ---\\
CEDR-L / frozen Qwen (ours) & 128 & 33.69 & 30.87 & 43.45 & 37.93\\
CEDR-L / prompt MSE 10 & 128 & 28.05 & 25.38 & 30.99 & 26.32\\
CEDR-L, pre-NFT (ours) & 128 & 29.73 & 27.06 & 32.11 & 27.81\\
CEDR-L, post-NFT (ours) & 128 & 33.16 & 30.26 & 38.96 & 33.76\\
\bottomrule
\end{tabular}
\end{table}

Pre- and post-NFT CEDR exceed reported PlaidQ results on HumanEval(+), but trail guided PlaidQ on MBPP-378 base. Under the same function-name aliasing rule, they also exceed ELF-REG-L (Table~\ref{tab:codingcomparison}, lower block). Prompts and execution limits still differ across papers, as detailed in Appendix~\ref{app:codingdetails}. NFT raises pass@1 on all four coding metrics, including HumanEval+ by $3.28$ points and MBPP+ by $3.21$. HumanEval(+) approaches frozen-Qwen conditioning, while a substantial MBPP gap remains. Higher-pass oracle gains are smaller, as in mathematics; paired comparisons appear in Appendix~\ref{app:codingdetails}.

\paragraph{Prompt adaptation remains task dependent.}
With identical epoch-12 flow weights and 32 denoising steps, replacing Qwen with the encoder after one MSE epoch reduces HumanEval+ from 25.61\% to 18.60\% and MBPP+ from 32.34\% to 7.61\% (Table~\ref{tab:codingstages}). Ten MSE epochs recover HumanEval+ to 24.70\%, close to teacher conditioning, but MBPP+ reaches only 15.15\%. The 24.74-point initial MBPP+ drop is substantially larger than the prompt-swap losses in mathematical reasoning (Figure~\ref{fig:promptcurriculum}). These comparisons use the respective tasks' selected prompt checkpoints, rather than equal imitation-training budgets.

This replaces a 3.63B-parameter contextual teacher with a 121M-parameter prompt encoder, excluding vocabulary maps. Unique formatted training prompts average 233.44 tokens for code, versus 84.17 for GSM8K and 89.15 for MATH; longer contexts may make the compression harder, although we have not isolated length as the cause. At 128 steps, one joint epoch improves HumanEval(+) pass@1 by 1.60 points but decreases standard MBPP-378 base/MBPP+ by 1.16/0.56 points; subsequent NFT improves all four metrics. The conditioning interface therefore remains a meaningful design challenge. ELF-REG retains an approximately 440M-parameter contextual encoder, while our frozen-Qwen reference uses a much larger teacher, so neither comparison isolates encoder capacity. Appendix~\ref{app:codingdetails} gives the full trajectories, prompt-length statistics, and paired comparisons.

\paragraph{Higher-pass code generation.}
The eight saved draws also support oracle pass@8 (Table~\ref{tab:codinghigherpass}). Without function-name repair, pre- and post-NFT CEDR-L exceed PlaidQ+CFG's reported pass@10 on HumanEval and HumanEval+. Under aliasing, both exceed ELF-REG-L's pass@8 and pass@10 on all three metrics that it reports. Standard MBPP-378 remains weaker: our post-NFT pass@8 is 43.39\%, compared with guided PlaidQ's pass@10 of 44.79\%. These comparisons retain each paper's attempt count and scoring category; we do not infer CEDR pass@10 from eight draws. Tables~\ref{tab:codingstandarddetail}--\ref{tab:codingaliasdetail} give pass@1/2/4/8 and uncertainty for every CEDR stage.

\begin{table}[!htbp]
\centering\small
\setlength{\tabcolsep}{4.5pt}
\caption{\textbf{Oracle code accuracy with multiple attempts.} Entries are pass@$k$ (\%) at the stated $k$, with denoising NFE per sample. Our learned-conditioning rows use eight draws; external pass@10 values remain labeled as such. PlaidQ uses token-prefix conditioning and ELF-REG-L an external contextual encoder. Separate blocks retain standard versus alias scoring, and both MBPP columns use the same 378 tasks. External values are from PlaidQ Table 1 and ELF-REG Table 16; generation protocols and total compute differ.}
\label{tab:codinghigherpass}
\begin{tabular}{@{}lrrrrrr@{}}
\toprule
Model & NFE & $k$ & HE & HE+ & \shortstack{MBPP-378\\base} & \shortstack{MBPP+\\full}\\
\midrule
\multicolumn{7}{@{}l}{\textit{Standard scoring: no function-name repair}}\\
PlaidQ \citep{plaidq} & 128 & 10 & 25.72 & 24.03 & 34.33 & ---\\
PlaidQ+CFG \citep{plaidq} & 256 & 10 & 39.87 & 36.74 & 44.79 & ---\\
CEDR-L, pre-NFT & 128 & 8 & 54.27 & 47.56 & 39.95 & 35.98\\
CEDR-L, post-NFT & 128 & 8 & 54.27 & 49.39 & 43.39 & 37.83\\
\midrule
\multicolumn{7}{@{}l}{\textit{Single-function name aliasing}}\\
ELF-REG-L \citep{scalingdlm} & 127 & 8 & 46.42 & 44.22 & 52.89 & ---\\
ELF-REG-L \citep{scalingdlm} & 127 & 10 & 48.87 & 46.69 & 55.05 & ---\\
CEDR-L, pre-NFT & 128 & 8 & 54.27 & 47.56 & 57.14 & 49.21\\
CEDR-L, post-NFT & 128 & 8 & 54.88 & 49.39 & 60.32 & 50.79\\
\bottomrule
\end{tabular}
\end{table}

\section{Discussion and Conclusion}
\label{sec:conclusion}

CEDR shows that continuous latent diffusion can support strong mathematical reasoning and code generation with compact denoising backbones. Representation design is central: accurate decoding alone does not ensure strong reasoning performance, while learned multilayer features improve generation and provide separate components whose denoising trajectories can be controlled. Treating prompt text as the condition makes its encoding an internal interface that can adapt with the flow model, rather than a fixed target that must be reproduced exactly. A staged curriculum makes this flexibility practical. Guidance-corrected NFT further shows that diffusion reinforcement learning can improve this system while retaining learned guidance during sampling.

The results also expose useful directions for further work. Coding retains a substantial gap between compact and teacher prompt conditioning, motivating better context compression and adaptation. High-pass oracle accuracy exceeds majority-vote accuracy, leaving room for selectors that use latent trajectories or hidden states. The layer-sensitivity analyses motivate coordinated denoising but do not yet explain the optimal clocks. Progress on these questions could improve both reasoning quality and how effectively parallel generation uses a fixed inference budget.

\section*{Acknowledgments}

We acknowledge the computational resources provided by NCShare, which is supported by National Science Foundation (NSF) grants OAC-2201525, OAC-2201105, and OAC-2430141.

We thank Duke University Research Computing for providing the high-performance computing resources---specifically the Duke Compute Cluster (DCC) and its NVIDIA H200 Tensor Core allocation---that contributed to the research results reported in this paper.

\clearpage
\bibliography{references}

@article{odlm,
  title = {Don't Retrain, Align: Adapting Autoregressive {LMs} to Diffusion {LMs} via Representation Alignment},
  author = {Peng, Fred Zhangzhi and Fox, Alexis and Zhang, Anru R. and Tong, Alexander},
  journal = {arXiv preprint arXiv:2605.06885},
  year = {2026},
  url = {https://arxiv.org/abs/2605.06885}
}

@article{editflow,
  title = {Edit Flows: Flow Matching with Edit Operations},
  author = {Havasi, Marton and Karrer, Brian and Gat, Itai and Chen, Ricky T. Q.},
  journal = {arXiv preprint arXiv:2506.09018},
  year = {2025},
  url = {https://arxiv.org/abs/2506.09018}
}

@article{scalingdlm,
  title = {{ELF-REG}: Scaling Continuous Diffusion Language Models to Reasoning Tasks},
  author = {Li, Zeyu Michael and Chen, William Xingxu and Qian, Bingshuo and Liu, Jiayin and Cheng, Xiang},
  journal = {arXiv preprint arXiv:2609.29102},
  year = {2026},
  url = {https://arxiv.org/abs/2609.29102}
}

@article{textldm,
  title = {{TextLDM}: Language Modeling with Continuous Latent Diffusion},
  author = {Jiang, Jiaxiu and Ren, Jingjing and Li, Wenbo and Wang, Bo and Sun, Haoze and Yang, Yijun and Liu, Jianhui and Zhang, Yanbing and Zheng, Shenghe and Zhang, Yuan and Huang, Haoyang and Duan, Nan and Zuo, Wangmeng},
  journal = {arXiv preprint arXiv:2605.07748},
  year = {2026},
  url = {https://arxiv.org/abs/2605.07748}
}

@article{ldlm,
  title = {How to Train Your Latent Diffusion Language Model Jointly With the Latent Space},
  author = {Meshchaninov, Viacheslav and Shabalin, Alexander and Chimbulatov, Egor and Gushchin, Nikita and Koziev, Ilya and Korotin, Alexander and Vetrov, Dmitry},
  journal = {arXiv preprint arXiv:2605.07933},
  year = {2026},
  url = {https://arxiv.org/abs/2605.07933}
}

@article{dbtm,
  title = {Discrete {Beckmann} Transport Models for One-Step Language Modeling and Reasoning},
  author = {Tang, Sophia and Wang, Shiyi},
  journal = {arXiv preprint arXiv:2609.15903},
  year = {2026},
  url = {https://arxiv.org/abs/2609.15903}
}

@article{repa,
  title = {Representation Alignment for Generation: Training Diffusion Transformers Is Easier Than You Think},
  author = {Yu, Sihyun and Kwak, Sangkyung and Jang, Huiwon and Jeong, Jongheon and Huang, Jonathan and Shin, Jinwoo and Xie, Saining},
  journal = {arXiv preprint arXiv:2410.06940},
  year = {2024},
  url = {https://arxiv.org/abs/2410.06940}
}

@article{sfd,
  title = {Semantics Lead the Way: Harmonizing Semantic and Texture Modeling with Asynchronous Latent Diffusion},
  author = {Pan, Yueming and Feng, Ruoyu and Dai, Qi and Wang, Yuqi and Lin, Wenfeng and Guo, Mingyu and Luo, Chong and Zheng, Nanning},
  journal = {arXiv preprint arXiv:2512.04926},
  year = {2025},
  url = {https://arxiv.org/abs/2512.04926}
}

@article{latentforcing,
  title = {Latent Forcing: Reordering the Diffusion Trajectory for Pixel-Space Image Generation},
  author = {Baade, Alan and Chan, Eric Ryan and Sargent, Kyle and Chen, Changan and Johnson, Justin and Adeli, Ehsan and Fei-Fei, Li},
  journal = {arXiv preprint arXiv:2602.11401},
  year = {2026},
  url = {https://arxiv.org/abs/2602.11401}
}

@article{elf,
  title = {{ELF}: Embedded Language Flows},
  author = {Hu, Keya and Qiu, Linlu and Lu, Yiyang and Zhao, Hanhong and Li, Tianhong and Kim, Yoon and Andreas, Jacob and He, Kaiming},
  journal = {arXiv preprint arXiv:2605.10938},
  year = {2026},
  url = {https://arxiv.org/abs/2605.10938}
}

@article{mlfm,
  title = {Masked Language Flow Models},
  author = {Azangulov, Iskander and Ashouritaklimi, Kianoosh and Zhang, Leo and Vary, Simon and Rebeschini, Patrick},
  journal = {arXiv preprint arXiv:2606.27617},
  year = {2026},
  url = {https://arxiv.org/abs/2606.27617}
}

@article{posterior,
  title = {Posterior Refinement: Fast Language Generation via Any-Order Flow Maps},
  author = {Agarwal, Manan and Shah, Sheel and Lee, Chanhyuk and Yoo, Jaehoon and Huang, Jerry and Hong, Seunghoon and Raghunathan, Aditi and Kim, Jinwoo and Boffi, Nicholas M.},
  journal = {arXiv preprint arXiv:2606.24773},
  year = {2026},
  url = {https://arxiv.org/abs/2606.24773}
}

@article{hyperspherical,
  title = {Language Modeling with Hyperspherical Flows},
  author = {Deschenaux, Justin and Gulcehre, Caglar},
  journal = {arXiv preprint arXiv:2605.11125},
  year = {2026},
  url = {https://arxiv.org/abs/2605.11125}
}

@article{diffusionnft,
  title = {{DiffusionNFT}: Online Diffusion Reinforcement with Forward Process},
  author = {Zheng, Kaiwen and Chen, Huayu and Ye, Haotian and Wang, Haoxiang and Zhang, Qinsheng and Jiang, Kai and Su, Hang and Ermon, Stefano and Zhu, Jun and Liu, Ming-Yu},
  journal = {arXiv preprint arXiv:2509.16117},
  year = {2025},
  url = {https://arxiv.org/abs/2509.16117}
}

@article{t5,
  title = {Exploring the Limits of Transfer Learning with a Unified Text-to-Text Transformer},
  author = {Raffel, Colin and Shazeer, Noam and Roberts, Adam and Lee, Katherine and Narang, Sharan and Matena, Michael and Zhou, Yanqi and Li, Wei and Liu, Peter J.},
  journal = {arXiv preprint arXiv:1910.10683},
  year = {2019},
  url = {https://arxiv.org/abs/1910.10683}
}

@article{minerva,
  title = {Solving Quantitative Reasoning Problems with Language Models},
  author = {Lewkowycz, Aitor and Andreassen, Anders and Dohan, David and Dyer, Ethan and Michalewski, Henryk and Ramasesh, Vinay and Slone, Ambrose and Anil, Cem and Schlag, Imanol and Gutman-Solo, Theo and Wu, Yuhuai and Neyshabur, Behnam and Gur-Ari, Guy and Misra, Vedant},
  journal = {arXiv preprint arXiv:2206.14858},
  year = {2022},
  url = {https://arxiv.org/abs/2206.14858}
}

@article{deepseekr1,
  title = {{DeepSeek-R1}: Incentivizing Reasoning Capability in {LLMs} via Reinforcement Learning},
  author = {{DeepSeek-AI} and Guo, Daya and Yang, Dejian and Zhang, Haowei and others},
  journal = {arXiv preprint arXiv:2501.12948},
  year = {2025},
  url = {https://arxiv.org/abs/2501.12948v1}
}

@article{qwen3,
  title = {{Qwen3} Technical Report},
  author = {Yang, An and Li, Anfeng and Yang, Baosong and others},
  journal = {arXiv preprint arXiv:2505.09388},
  year = {2025},
  url = {https://arxiv.org/abs/2505.09388}
}

@article{gsm8k,
  title = {Training Verifiers to Solve Math Word Problems},
  author = {Cobbe, Karl and Kosaraju, Vineet and Bavarian, Mohammad and Chen, Mark and Jun, Heewoo and Kaiser, Lukasz and Plappert, Matthias and Tworek, Jerry and Hilton, Jacob and Nakano, Reiichiro and Hesse, Christopher and Schulman, John},
  journal = {arXiv preprint arXiv:2110.14168},
  year = {2021},
  url = {https://arxiv.org/abs/2110.14168}
}

@article{math,
  title = {Measuring Mathematical Problem Solving With the {MATH} Dataset},
  author = {Hendrycks, Dan and Burns, Collin and Kadavath, Saurav and Arora, Akul and Basart, Steven and Tang, Eric and Song, Dawn and Steinhardt, Jacob},
  journal = {arXiv preprint arXiv:2103.03874},
  year = {2021},
  url = {https://arxiv.org/abs/2103.03874}
}

@article{openmath,
  title = {{OpenMathInstruct-2}: Accelerating {AI} for Math with Massive Open-Source Instruction Data},
  author = {Toshniwal, Shubham and Du, Wei and Moshkov, Ivan and Kisacanin, Branislav and Ayrapetyan, Alexan and Gitman, Igor},
  journal = {arXiv preprint arXiv:2410.01560},
  year = {2024},
  url = {https://arxiv.org/abs/2410.01560}
}

@article{llada,
  title = {Large Language Diffusion Models},
  author = {Nie, Shen and Zhu, Fengqi and You, Zebin and Zhang, Xiaolu and Ou, Jingyang and Hu, Jun and Zhou, Jun and Lin, Yankai and Wen, Ji-Rong and Li, Chongxuan},
  journal = {arXiv preprint arXiv:2502.09992},
  year = {2025},
  url = {https://arxiv.org/abs/2502.09992}
}

@article{flowmatching,
  title = {Flow Matching for Generative Modeling},
  author = {Lipman, Yaron and Chen, Ricky T. Q. and Ben-Hamu, Heli and Nickel, Maximilian and Le, Matt},
  journal = {arXiv preprint arXiv:2210.02747},
  year = {2022},
  url = {https://arxiv.org/abs/2210.02747}
}

@article{cka,
  title = {Similarity of Neural Network Representations Revisited},
  author = {Kornblith, Simon and Norouzi, Mohammad and Lee, Honglak and Hinton, Geoffrey},
  journal = {arXiv preprint arXiv:1905.00414},
  year = {2019},
  url = {https://arxiv.org/abs/1905.00414}
}

@misc{qwen2507,
  author = {{Qwen Team}},
  title = {{Qwen3-4B-Instruct-2507} Model Card},
  year = {2025},
  url = {https://huggingface.co/Qwen/Qwen3-4B-Instruct-2507}
}

@misc{math500,
  author = {{Hugging Face H4}},
  title = {{MATH-500}},
  year = {2024},
  url = {https://huggingface.co/datasets/HuggingFaceH4/MATH-500}
}

@misc{mathverify,
  author = {Kydl\'{i}\v{c}ek, Hynek},
  title = {{Math-Verify: Math Verification Library}},
  year = {2025},
  url = {https://github.com/huggingface/Math-Verify}
}

@article{orcamath,
  author = {Mitra, Arindam and Khanpour, Hamed and Rosset, Corby and Awadallah, Ahmed},
  title = {{Orca-Math}: Unlocking the Potential of {SLMs} in Grade School Math},
  journal = {arXiv preprint arXiv:2402.14830},
  year = {2024},
  url = {https://arxiv.org/abs/2402.14830}
}

@article{metamath,
  author = {Yu, Longhui and Jiang, Weisen and Shi, Han and Yu, Jincheng and Liu, Zhengying and Zhang, Yu and Kwok, James T. and Li, Zhenguo and Weller, Adrian and Liu, Weiyang},
  title = {{MetaMath}: Bootstrap Your Own Mathematical Questions for Large Language Models},
  journal = {arXiv preprint arXiv:2309.12284},
  year = {2023},
  url = {https://arxiv.org/abs/2309.12284}
}

@article{d1,
  title = {{d1: Scaling Reasoning in Diffusion Large Language Models via Reinforcement Learning}},
  author = {Zhao, Siyan and Gupta, Devaansh and Zheng, Qinqing and Grover, Aditya},
  journal = {arXiv preprint arXiv:2504.12216},
  year = {2025},
  url = {https://arxiv.org/abs/2504.12216}
}

@article{dream,
  title = {{Dream 7B: Diffusion Large Language Models}},
  author = {Ye, Jiacheng and Xie, Zhihui and Zheng, Lin and Gao, Jiahui and Wu, Zirui and Jiang, Xin and Li, Zhenguo and Kong, Lingpeng},
  journal = {arXiv preprint arXiv:2508.15487},
  year = {2025},
  url = {https://arxiv.org/abs/2508.15487}
}

@article{ladirl,
  title = {{LaDi-RL: Latent Diffusion Reasoning Prevents Entropy Collapse in Reinforcement Learning}},
  author = {Kang, Haoqiang and Zhang, Yizhe and Kuang, Nikki Lijing and Ma, Yi-An and Qin, Lianhui},
  journal = {arXiv preprint arXiv:2602.01705},
  year = {2026},
  url = {https://arxiv.org/abs/2602.01705}
}

@article{llada15,
  title = {{LLaDA 1.5: Variance-Reduced Preference Optimization for Large Language Diffusion Models}},
  author = {Zhu, Fengqi and Wang, Rongzhen and Nie, Shen and Zhang, Xiaolu and Wu, Chunwei and Hu, Jun and Zhou, Jun and Chen, Jianfei and Lin, Yankai and Wen, Ji-Rong and Li, Chongxuan},
  journal = {arXiv preprint arXiv:2505.19223},
  year = {2025},
  url = {https://arxiv.org/abs/2505.19223}
}

@article{mdlm,
  title = {{Simple and Effective Masked Diffusion Language Models}},
  author = {Sahoo, Subham Sekhar and Arriola, Marianne and Schiff, Yair and Gokaslan, Aaron and Marroquin, Edgar and Chiu, Justin T and Rush, Alexander and Kuleshov, Volodymyr},
  journal = {arXiv preprint arXiv:2406.07524},
  year = {2024},
  url = {https://arxiv.org/abs/2406.07524}
}

@article{plaid,
  title = {{Likelihood-Based Diffusion Language Models}},
  author = {Gulrajani, Ishaan and Hashimoto, Tatsunori B.},
  journal = {arXiv preprint arXiv:2305.18619},
  year = {2023},
  url = {https://arxiv.org/abs/2305.18619}
}

@article{plaidq,
  title = {{Distilled Continuous Diffusion Language Models Can Write Code in Few Steps---or One}},
  author = {Peng, Fred Zhangzhi and Zheng, Kaiwen and Zhang, Anru R.},
  journal = {arXiv preprint arXiv:2609.04531v1},
  year = {2026},
  note = {Version 1; subsequently withdrawn by the authors},
  url = {https://arxiv.org/abs/2609.04531v1}
}

@article{sdar,
  title = {{SDAR: A Synergistic Diffusion-AutoRegression Paradigm for Scalable Sequence Generation}},
  author = {Cheng, Shuang and Bian, Yihan and Liu, Dawei and Zhang, Linfeng and Yao, Qian and Tian, Zhongbo and Wang, Wenhai and Guo, Qipeng and Chen, Kai and Qi, Biqing and Zhou, Bowen},
  journal = {arXiv preprint arXiv:2510.06303},
  year = {2025},
  url = {https://arxiv.org/abs/2510.06303}
}

@article{tess2,
  title = {{TESS 2: A Large-Scale Generalist Diffusion Language Model}},
  author = {Tae, Jaesung and Ivison, Hamish and Kumar, Sachin and Cohan, Arman},
  journal = {arXiv preprint arXiv:2502.13917v2},
  year = {2025},
  url = {https://arxiv.org/abs/2502.13917v2}
}

@article{opencodeinstruct,
  title = {{OpenCodeInstruct}: A Large-scale Instruction Tuning Dataset for Code {LLMs}},
  author = {Ahmad, Wasi Uddin and Ficek, Aleksander and Samadi, Mehrzad and Huang, Jocelyn and Noroozi, Vahid and Majumdar, Somshubra and Ginsburg, Boris},
  journal = {arXiv preprint arXiv:2504.04030},
  year = {2025},
  url = {https://arxiv.org/abs/2504.04030}
}

@article{evalplus,
  title = {Is Your Code Generated by {ChatGPT} Really Correct? {Rigorous} Evaluation of Large Language Models for Code Generation},
  author = {Liu, Jiawei and Xia, Chunqiu Steven and Wang, Yuyao and Zhang, Lingming},
  journal = {arXiv preprint arXiv:2305.01210},
  year = {2023},
  url = {https://arxiv.org/abs/2305.01210}
}

@article{humaneval,
  title = {Evaluating Large Language Models Trained on Code},
  author = {Chen, Mark and Tworek, Jerry and Jun, Heewoo and Yuan, Qiming and Pinto, Henrique Ponde de Oliveira and others},
  journal = {arXiv preprint arXiv:2107.03374},
  year = {2021},
  url = {https://arxiv.org/abs/2107.03374}
}

@article{mbpp,
  title = {Program Synthesis with Large Language Models},
  author = {Austin, Jacob and Odena, Augustus and Nye, Maxwell and Bosma, Maarten and Michalewski, Henryk and Dohan, David and Jiang, Ellen and Cai, Carrie and Terry, Michael and Le, Quoc and Sutton, Charles},
  journal = {arXiv preprint arXiv:2108.07732},
  year = {2021},
  url = {https://arxiv.org/abs/2108.07732}
}

@misc{muon,
  title = {{Muon}: An Optimizer for Hidden Layers in Neural Networks},
  author = {Jordan, Keller},
  year = {2024},
  howpublished = {Software},
  url = {https://github.com/KellerJordan/Muon}
}
\bibliographystyle{plainnat}

\clearpage
\appendix
\section{Training recipe and model architectures}
\label{app:training}

This section collects the architectures, training stages, and optimization settings. All tasks use the same representation and conditioning design; Table~\ref{tab:trainingstages} specifies their training durations. Detailed representation-learning equations, supervised losses, and NFT updates appear in Appendices~\ref{app:representationimplementation}, \ref{app:methodobjective}, and~\ref{app:nftdetails}, respectively.

\subsection{Architectures and fixed representations}
\label{app:architecture}

The teacher is Qwen3-4B-Instruct-2507, with hidden width 2,560 and a 151,936-token vocabulary. We learn task-specific projections of its post-block activations at layers 16, 24, and 32, with output widths 256, 256, and 512. Concatenating these covariance-whitened features gives the 1,024-dimensional representation used for both teacher prompt conditions and answer targets. Projectors are learned before flow training and then fixed; their reconstruction decoders and frozen Qwen suffixes are used only during representation learning (Appendix~\ref{app:representationimplementation}). Teacher activations and projected targets can be precomputed, so the later training stages do not require live Qwen Transformer calls.

\begin{table}[!htb]
\centering\small
\setlength{\tabcolsep}{7pt}
\caption{CEDR architectures. Both flow models use the ELF backbone, shared between denoising and token decoding. The prompt encoder is a separate bidirectional Transformer; its Qwen vocabulary lookup is frozen from initialization onward.}
\label{tab:architectures}
\begin{tabular}{@{}lcc@{}}
\toprule
Component & CEDR-B & CEDR-L\\
\midrule
ELF backbone: blocks / width / heads & 12 / 768 / 12 & 32 / 1,280 / 16\\
ELF input bottleneck & 512 & 512\\
Prompt Transformer: blocks / width / heads & 6 / 768 / 12 & 6 / 1,280 / 16\\
Prompt frontend & $2560\to512\to768$ & $2560\to512\to1280$\\
Prompt output width & 1,024 & 1,024\\
Frozen Qwen token lookup & \multicolumn{2}{c}{$151{,}936\times2{,}560$}\\
Teacher layers / projected widths & \multicolumn{2}{c}{16/24/32\quad /\quad 256/256/512}\\
Canvas length / latent width & \multicolumn{2}{c}{1,024 / 1,024}\\
\bottomrule
\end{tabular}
\end{table}

\paragraph{ELF backbone and token head.}
Table~\ref{tab:architectures} gives the model sizes. Both backbones use bidirectional attention, rotary position encoding, and SwiGLU blocks with MLP ratio four. The flow head predicts 1,024-dimensional clean latents. A separate token head maps the backbone output through 1,024 decoder features to the native vocabulary, sharing the Transformer with denoising mode. Answer self-conditioning supplies a second latent channel. Time, mode, and SCCFG embeddings condition the backbone; the inference-only duplication of time embeddings is described in Section~\ref{sec:clocks}.

\paragraph{Compact prompt encoder.}
The encoder first looks up each question token in Qwen's frozen $151{,}936\times2{,}560$ input table. Two learned linear maps, $2560\to512\to d_{\mathrm{model}}$, have no intervening nonlinearity. Six bidirectional ELF-style blocks use rotary positions and MLP ratio four, followed by RMSNorm and a 1,024-dimensional output projection. The first frontend map has no bias; the second and output maps have biases. The CEDR-L contextual encoder contains 121,333,180 trainable parameters, excluding the frozen lookup. The full 36-block Qwen Transformer contains 3,633,511,936 parameters excluding its tied vocabulary maps, which contain 388,956,160 parameters. Only question-prefix tokens enter the encoder; padding is masked and padded outputs are zero. The lookup stays fixed during prompt imitation, joint training, and NFT, and is excluded from optimizers and prompt EMAs. Learned conditioning thus executes the compact encoder and vocabulary lookup once per response, without the Qwen Transformer.

\subsection{Training stages and durations}
\label{app:trainingstages}

\paragraph{Training populations and epoch units.}
The GSM8K reasoning runs share 2,434,330 answer rows and 275,661 unique prompts. The MATH runs use the MATH/augmented-MATH portion of OpenMathInstruct-2's five-million-row split \citep{openmath}, yielding 3,967,527 eligible answer rows and 508,953 unique prompts. Coding uses 4,893,784 eligible OpenCodeInstruct answer rows and 4,772,508 unique prompts. Different solutions to one question remain separate flow-training rows, whereas standalone prompt imitation weights each unique prompt equally. Native Qwen chat sequences must fit the 1,024-token canvas; overlength rows are excluded rather than truncated. Supervised losses include answer content and one terminal token, excluding prompt and padding positions.

\paragraph{GSM8K training-data provenance.}
Both GSM8K models use the same mixture of the original \texttt{train} splits of \href{https://huggingface.co/datasets/nvidia/OpenMathInstruct-2}{OpenMathInstruct-2} \citep{openmath}, \href{https://huggingface.co/datasets/microsoft/orca-math-word-problems-200k}{Orca-Math} \citep{orcamath}, and \href{https://huggingface.co/datasets/meta-math/MetaMathQA}{MetaMathQA} \citep{metamath}. OpenMathInstruct-2 contributes only rows whose \texttt{problem\_source} is \texttt{gsm8k} or \texttt{augmented\_gsm8k}; MetaMathQA contributes only types beginning with \texttt{GSM}. Orca-Math contributes all eligible rows. Table~\ref{tab:gsmtrainingmixture} gives the final training counts. Solutions to the same question remain separate training rows, with no additional source reweighting.

\begin{table}[!htb]
\centering\small
\setlength{\tabcolsep}{6pt}
\caption{GSM8K training mixture after filtering and question-group splitting. Counts are answer rows, shared by CEDR-B and CEDR-L. All sources originate from their original training splits.}
\label{tab:gsmtrainingmixture}
\begin{tabular}{@{}llr@{}}
\toprule
Source & Retained subset & Training rows\\
\midrule
OpenMathInstruct-2 & \texttt{gsm8k} & 292,485\\
 & \texttt{augmented\_gsm8k} & 1,841,419\\
Orca-Math & Eligible train rows & 147,464\\
MetaMathQA & \texttt{GSM} types & 152,962\\
\midrule
Total & & 2,434,330\\
\bottomrule
\end{tabular}
\end{table}

\textbf{Filtering and formatting.} Corpus construction requires a nonempty question and solution and an extractable numerical final answer, allowing signed integers, decimals, and fractions. OpenMathInstruct-2 supplies its expected answer; Orca-Math and MetaMathQA answers are extracted from the solution. The builder stores a standardized response ending in one \texttt{\#\#\#\# answer} line for the initial length filter. This filter uses the \texttt{inclusionAI/LLaDA-MoE-7B-A1B-Base} tokenizer, requiring at most 256 prompt tokens and at most 1,024 prompt-plus-response tokens, including EOS. For Qwen representation extraction, the retained \texttt{problem} and \texttt{solution} fields are rendered with its native chat template; the question instruction requests a boxed final answer. All 2,434,330 rows fit: the saved Qwen token cache has maximum length 1,019 and zero truncated rows.

\textbf{Splitting and benchmark screening.} Question keys normalize Unicode, case, quotation marks, and whitespace; MetaMathQA also links rewrites through \texttt{original\_question}. Exact matches and near-duplicate candidates retrieved by MinHash/LSH are merged across sources. Near-duplicate splitting uses substring coverage at least 0.90, or word 5-/8-gram Jaccard similarity at least 0.55/0.40. A deterministic hash with seed 20260702 selects 10\% of source-local question keys for validation, and their entire merged groups are held out. The resulting split has 171,556 training groups and 25,949 validation groups (568,999 answer rows); the training population contains 275,661 distinct Qwen prompt prefixes. A 10\% training-row cap on MetaMathQA removes no further rows, since its retained share is 6.28\%.

Before splitting, the builder screens question texts against all 1,319 official GSM8K test questions, using normalized exact matching, substring coverage at least 0.85, and the same 5-/8-gram thresholds over candidates retrieved through shared n-grams. This removes 128 OpenMathInstruct-2, two Orca-Math, and 92 MetaMathQA candidate rows. The final Qwen-cache audit records zero whitespace-normalized exact question matches to GSM8K test; the saved training and validation group IDs are disjoint. These checks establish the stated lexical screening, rather than semantic decontamination or absence of overlap in the teacher's pretraining data.

\paragraph{Stage 1: flow training with frozen teacher conditioning.}
Initialize the ELF backbone and token decoder with seed 42 and train for 12 answer-row epochs using fixed Qwen prompt representations. Both answer targets and prompt conditions use the learned projectors. Optimize the mixed flow/decoder-CE objective, choosing decoder mode with probability $0.2$ and flow mode otherwise. All representation groups share a synchronous scalar time $t=\operatorname{sigmoid}(-1.5+0.8\xi)$, $\xi\sim\mathcal N(0,1)$, with flow noise scale two. Prompt positions remain clean; bootstrap and SCCFG auxiliary targets are detached. Appendix~\ref{app:methodobjective} gives the exact corruptions, masks, guidance distribution, and loss reductions.

\paragraph{Stage 2: standalone prompt imitation.}
Hold the flow model and representation fixed and initialize the compact prompt encoder's trainable layers with seed 42, using the frozen Qwen token table from the outset. Writing the projected question-only teacher states as $c_{\mathrm T}(q)$, minimize
\begin{equation}
 \mathcal L_{\mathrm{prompt}}=
 \E_q\!\left[\frac{1}{L_qd}\sum_{i=1}^{L_q}
 \|c_{\phi,i}(q)-c_{\mathrm T,i}(q)\|_2^2\right].
 \label{eq:methodpromptmse}
\end{equation}
The loss first averages valid positions and latent coordinates within a prompt, then weights prompts equally. Prompt-MSE training has its own epoch counter and does not change the epoch-12 flow checkpoint. Table~\ref{tab:trainingstages} gives the task-specific durations; prompt EMA $0.999$ initializes the joint stage.

\paragraph{Stage 3: joint adaptation.}
Replace the teacher condition with the learned encoder and continue flow training with differentiable question-prefix latents and fixed answer targets. Preserve the flow model's raw weights, optimizer state, and independent EMAs; initialize the prompt model from its selected standalone EMA and start fresh prompt optimizers. The flow/decoder objective updates both trainable networks through the main conditioned forward, while auxiliary self-conditioning and guidance targets remain detached. Mathematical reasoning uses no additional prompt-MSE loss: CEDR-B trains six more epochs to epoch 18, and CEDR-L trains nine to epoch 21. Coding trains one more epoch to epoch 13 and uses the regularizer below. All three stages retain the same prompt architecture and frozen token table.

\paragraph{Supervised optimization.}
Flow training, standalone prompt imitation, and joint adaptation use effective batch 512, constant learning rate $0.002$, zero weight decay, and gradient clipping at norm one, applied separately to the flow and prompt models during joint training. Eligible matrices use Muon \citep{muon} with momentum $0.95$ and five FP32 Newton--Schulz iterations. Our implementation adapts the \texttt{muon-optimizer} package, with FP32 orthogonalization and Optax-compatible momentum bias correction and shape scaling. Remaining flow-model parameters use the auxiliary Nesterov-Adam optimizer; remaining prompt parameters use ordinary AdamW, with betas $(0.9,0.999)$ and epsilon $10^{-8}$. Trainable parameters and optimizer state use FP32, with compiled BF16 forward computation. Each trainable component maintains independent raw weights and EMAs with decays $0.99$, $0.999$, and $0.9999$, updated at optimizer boundaries. Evaluation selects flow and prompt states independently; Appendices~\ref{app:experimentdetails} and~\ref{app:codingdetails} specify the selectors for the reported results.

\subsection{Coding joint-training regularizer}
\label{app:codingtraining}

The coding epoch-13 checkpoint follows one additional joint epoch (9,558 optimizer updates), starting from the epoch-12 flow checkpoint and the prompt-epoch-10 EMA $0.999$ state. Unlike the math joint stage, coding retains an auxiliary teacher-prompt MSE. Targets are the fixed, cached Qwen prompt representations; the loss averages valid prompt positions and latent coordinates within each training row, then weights rows equally. Only the prompt encoder receives this auxiliary gradient; answer representations and the vocabulary lookup remain frozen.

At each optimizer update, let $g_{\mathrm{down}}$ and $g_{\mathrm{MSE}}$ denote the accumulated, data-parallel-mean prompt gradients from the downstream and auxiliary objectives. We combine them as $g_\phi=g_{\mathrm{down}}+\lambda g_{\mathrm{MSE}}$, with $\lambda=2\,\overline{\|g_{\mathrm{down}}\|_2}/\max(\overline{\|g_{\mathrm{MSE}}\|_2},10^{-12})$, where bars denote exponential moving averages with decay $0.99$. The target 2:1 ratio concerns smoothed gradient norms, not scalar loss values. The combined prompt gradient is clipped at norm one. Joint training otherwise retains the synchronous clocks and supervised optimization settings above. The subsequent coding NFT stage uses no auxiliary prompt MSE.

\subsection{NFT fine-tuning}
\label{app:nfttraining}

Initialize current, old, and reference policies from the selected supervised endpoint, with fresh NFT optimizers. The current flow backbone and prompt body are trainable. The Qwen vocabulary lookup and representation projectors remain fixed, as do the decoder-specific mode tokens, output projection, and vocabulary-head parameters. The shared backbone continues to change, so freezing these decoder-specific parameters does not freeze the entire decoding function. Old and reference policies receive no optimizer gradients; each retains its own prompt encoder.

Both trainable components use constant learning rate $10^{-4}$, zero weight decay, and separate norm-one gradient clipping. The flow backbone uses Muon with auxiliary Nesterov-Adam; the prompt body uses Muon/AdamW. Computation uses BF16. Collection uses 32 denoising steps, synchronous identity clocks, and SCCFG two; optimization also retains synchronous scalar-time conditioning. Each round collects 24 prompt groups with 15 generated endpoints and one gold endpoint per group, followed by eight independently corrupted records per retained endpoint. Filtering makes the update size variable, so the supervised batch size 512 does not apply to NFT. Appendix~\ref{app:nftdetails} specifies rewards, filtering, loss normalization, and old-policy updates. Table~\ref{tab:trainingstages} lists the reported NFT endpoints. Evaluation uses flow/prompt EMA $0.99/0.99$ for math and $0.9/0.9$ for code, with asynchronous clocks only at inference.

\paragraph{Coding reward population.}
Coding NFT starts from joint epoch 13 with flow/prompt EMA $0.9999/0.9999$ and fresh optimizers, retaining the objective and optimization settings above. We select 4,096 unique OpenCodeInstruct training prompts and reserve 256 disjoint validation prompts, with selection seed 20260926. Eligible rows explicitly specify the tested interface, have unchanged native assertion tests and successful reference metadata, and pass reference execution in the reward harness. Prompt deduplication and exact held-out benchmark prompt exclusion precede selection. The training tests supply binary execution rewards; HumanEval/MBPP evaluation tests are never used for training. Gold endpoints receive reward one, generated programs receive $0.75$ if all retained assertions pass and zero otherwise. Synthetic training tests remain imperfect measures of correctness. Execution is isolated, network-disabled, and resource-limited, with a five-second timeout per assertion. Both flow and prompt models train jointly, without the supervised stage's prompt-MSE regularizer. We report successful NFT update 100.

\paragraph{NFT checkpoint selection.}
We compare updates 100/200/300/400 for GSM8K CEDR-B, 100/300/500 for GSM8K CEDR-L, and 200/400/600 for MATH CEDR-L. Selection uses mean validation accuracy over seeds 42/123 on fixed validation sets of 1,000 GSM8K and 500 MATH problems (Appendix~\ref{app:inferenceclocks}), with async $(2.5,2,1.5)$, 32 steps, and SCCFG two. Update 300 for GSM8K CEDR-B and 500 for GSM8K CEDR-L were selected based on validation accuracy; Math-Verify selects update 600 for MATH CEDR-L. For coding, update 100 with flow/prompt EMA $0.9/0.9$ was selected under computation resource constraints.

\section{CEDR objective and conditional encoding}
\label{app:methodobjective}

\subsection{Flow and token-decoding objectives}

\paragraph{Canvas and scoring positions.}
Training sequences satisfy $L_q+L_a\le L=1024$, with latent width $d=1024$. For example $b$, the mask $M_{bi}$ equals one on $L_{q,b}<i\le L_{q,b}+L_{a,b}$ and zero elsewhere. It includes the answer's terminal token and excludes prompt and padding positions. Answer latents and velocities at answer index $j$ occupy canvas position $i=L_{q,b}+j$.

\paragraph{Time and self-conditioning.}
For a flow row, sample $t=\operatorname{sigmoid}(-1.5+0.8\xi)$ with $\xi\sim\mathcal N(0,1)$ and use noise scale $\sigma=2$. A detached bootstrap pass with zero answer self-conditioning produces $\widehat z_{\mathrm{boot}}$. With $b_{\mathrm{sc}}\sim\mathrm{Bernoulli}(0.5)$, the main pass receives $\widehat z_{\mathrm{sc}}=b_{\mathrm{sc}}\widehat z_{\mathrm{boot}}$ in answer self-conditioning positions. Prompt states are restored to the clean condition independently of $b_{\mathrm{sc}}$. The endpoint predictor and velocity denominator are
\begin{equation}
 \widehat z_{\mathrm{clean}}=F_\theta(z_t,t;c,\widehat z_{\mathrm{sc}},g),
 \qquad \Delta_t=\max(1-t,0.05).
 \label{eq:methodendpoint}
\end{equation}
The velocity conversion in Section~\ref{sec:methodflow} uses this denominator for both prediction and target, so the executed target differs from $u_t=z_{\mathrm{clean}}-2\epsilon$ where the clamp is active.

\paragraph{Masked flow regression.}
The normalized counterpart of Eq.~\eqref{eq:basicflowloss}, before the SCCFG target adjustment, is
\begin{equation}
 \mathcal L_{\mathrm{FM}}=
 \E\!\left[\frac{\sum_{bi} M_{bi}\|v_{\theta,bi}-u_{bi}^{\mathrm{stab}}\|_2^2}
 {d\sum_{bi} M_{bi}}\right].
 \label{eq:methodflowloss}
\end{equation}
Here $u_{bi}^{\mathrm{stab}}$ is the target at example $b$'s sampled time $t_b$. All latent coordinates contribute. SCCFG replaces this target by $\widetilde u_{bi}$ below.

\paragraph{SCCFG target.}
Draw the guidance scale $g$ from $\log(1+g)\sim\mathrm{Uniform}(\log1.5,\log6)$, giving $g\in[0.5,5]$. Let $v^0$ and $v^1$ denote auxiliary velocity predictions without and with answer self-conditioning. The main loss uses
\begin{equation}
 \widetilde u_t=\sg\!\left[u_t^{\mathrm{stab}}+b_{\mathrm{sc}}(1-g^{-1})(v^1-v^0)\right].
 \label{eq:methodscguidance}
\end{equation}
The question is present in both auxiliary branches. All auxiliary predictions and targets are detached; during joint training, gradients enter the prompt encoder through the main conditioned forward. At inference, the preceding denoising prediction supplies recurrent self-conditioning and the requested guidance scale is passed to the network. Guidance scale one retains recurrent self-conditioning. Prompt-drop probability is zero in supervised training, and all reported evaluations use prompt CFG one.

\paragraph{Decoder corruption.}
Each row independently draws $\delta_b^{\mathrm{dec}}\sim\mathrm{Bernoulli}(0.2)$ to choose decoder mode. At each position of a decoder row, sample
\[
 \lambda_{bi}=\operatorname{sigmoid}(0.8+0.8\xi_{bi}),\qquad
 z^{\mathrm{dec}}_{bi}=\lambda_{bi}z_{\mathrm{clean},bi}+(1-\lambda_{bi})\epsilon'_{bi},
\]
where $\xi_{bi}$ and $\epsilon'_{bi}$ are standard Gaussian. Decoder noise scale is one. The model receives decoder mode, time one, zero answer self-conditioning, and the clean question prefix. Token CE predicts the original token at each position, without a next-token shift.

The valid-token expectation in Eq.~\eqref{eq:methoddecoderloss} samples a training microbatch and then uniformly selects a valid answer position across its rows. With $c_b=c(q_b)$, its explicit form is
\begin{equation}
 \mathcal L_{\mathrm{CE}}=
 \E\!\left[-\frac{\sum_{bi} M_{bi}\log p_{\theta,i}^{\mathrm{tok}}(y_{bi}\mid z_b^{\mathrm{dec}},c_b)}
 {\sum_{bi} M_{bi}}\right].
 \label{eq:methoddecoderlossexpanded}
\end{equation}
Here the answer-indexed output $p_{\theta,j}^{\mathrm{tok}}$ in the main text corresponds to canvas position $i=L_q+j$. This averages tokens within each microbatch, rather than weighting solutions equally. Decoder and flow rows use separate output heads on the same backbone, with mode probabilities $0.2$ and $0.8$, respectively.

For valid response mask $M_{bi}$, including the terminal token and excluding question/padding positions, a rank-local microbatch minimizes
\begin{equation}
 \mathcal L_{\mathrm{CEDR}}^{\mathrm{micro}}=
 \frac{\sum_{bi}M_{bi}\left[
 (1-\delta_b^{\mathrm{dec}})\|v_{\theta,bi}-\widetilde u_{bi}\|_2^2/d
 -\delta_b^{\mathrm{dec}}\log p_{\theta,i}^{\mathrm{tok}}(y_{bi}\mid z_b^{\mathrm{dec}},c_b)\right]}
 {\sum_{bi}M_{bi}}.
 \label{eq:methodmicroobjective}
\end{equation}
Here $\widetilde u_{bi}$ is the SCCFG-adjusted target at example $b$'s sampled time. There is one denominator across both row types, with no additional normalization by mode probability. Averaging over mode assignments gives $0.8\mathcal L_{\mathrm{FM}}^{\mathrm{SCCFG}}+0.2\mathcal L_{\mathrm{CE}}$ under this common normalization. The flow term averages all 1,024 coordinates, preserving the relative 256/256/512 group widths. Gradients are averaged over data-parallel ranks and accumulation microsteps. Both models share their backbone between flow and token-decoder modes (Table~\ref{tab:architectures}).

\paragraph{Sampling initialization and termination.}
Given $q$, compute $c(q)$ once and initialize all $L-L_q$ available answer positions independently from $\mathcal N(0,\sigma^2I)$. The prompt remains clean throughout. For $S$ denoising calls, Euler evaluates at $\tau_r$, $r=0,\ldots,S-1$, and advances to $\tau_S=1$; Appendix~\ref{app:inferenceclocks} specifies the grid. Answer self-conditioning starts at zero and subsequently carries the preceding clean prediction. The requested SCCFG scale is supplied to each call. A separate call through the shared backbone and token head decodes all answer positions in parallel; the first terminal token determines the returned length. Denoising NFE counts only the $S$ denoising calls.

The prompt architecture, imitation objective, and stage durations are collected in Appendix~\ref{app:training}; Eq.~\eqref{eq:methodpromptmse} defines the standalone prompt loss. Figure~\ref{fig:promptcurriculum}'s evaluation cohorts are specified in Appendix~\ref{app:promptcurves}.

\subsection{Conditional representation: proof and scope}
\label{app:conditioninglemma}

\begin{lemma}[General conditional regression]
\label{lem:conditionalregression}
Let $Q$ be the prompt, $W=(Z_t,t)$ the noisy answer and time, and $U$ a square-integrable forward-regression target. Fix a deterministic encoder $C_\phi=P_\phi(Q)$ and the joint law of $(Q,W,U)$. Define $m_Q=\E[U\mid W,Q]$ and $m_\phi=\E[U\mid W,C_\phi]$. Let $\mathcal R_Q^\star$ and $\mathcal R_\phi^\star$ be the minimum squared regression risks among all measurable predictors given $(W,Q)$ and $(W,C_\phi)$, respectively. Then
\begin{equation}
 \mathcal R_\phi^\star-\mathcal R_Q^\star
 =\E\|m_Q-m_\phi\|_2^2.
 \label{eq:conditioningrisk}
\end{equation}
In particular, the encoding attains the full-prompt optimum exactly when it preserves the posterior mean target almost surely.
\end{lemma}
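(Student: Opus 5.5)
The plan is to use the two standard facts about squared-loss regression: conditional means are optimal predictors, and a coarser conditional mean is the projection of a finer one. First I identify both optimal risks. Since $U$ is square-integrable, $m_Q=\E[U\mid W,Q]$ minimizes $\E\|U-f(W,Q)\|_2^2$ over measurable $f$, giving $\mathcal R_Q^\star=\E\|U-m_Q\|_2^2$. Likewise $\mathcal R_\phi^\star=\E\|U-m_\phi\|_2^2$, with minimization now over functions of $(W,C_\phi)$. Both claims follow from the orthogonality $\E\langle U-m,\,g\rangle=0$ for square-integrable $g$ measurable with respect to the conditioning $\sigma$-algebra.

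The key structural point is nesting. Because $C_\phi=P_\phi(Q)$ is a deterministic measurable function of $Q$, we have $\sigma(W,C_\phi)\subseteq\sigma(W,Q)$. Hence $m_\phi$ is $\sigma(W,Q)$-measurable and square-integrable. The tower property gives
\[
m_\phi=\E[m_Q\mid W,C_\phi].
\]

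Next I expand the coarser risk as
\[
\E\|U-m_\phi\|_2^2=\E\|(U-m_Q)+(m_Q-m_\phi)\|_2^2.
\]
The cross term $\E\langle U-m_Q,\,m_Q-m_\phi\rangle$ vanishes, because $m_Q-m_\phi$ is $\sigma(W,Q)$-measurable and square-integrable while $U-m_Q$ is orthogonal to all such functions. This yields
\[
\mathcal R_\phi^\star-\mathcal R_Q^\star=\E\|m_Q-m_\phi\|_2^2.
\]
The "in particular" claim then follows at once: the difference is zero if and only if $m_Q=m_\phi$ almost surely.

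The only step needing care, and the main obstacle such as it is, is the measurability and integrability bookkeeping. I must check that $m_Q-m_\phi$ lies in $L^2$; this holds by conditional Jensen, since $\E\|m_Q\|^2\le\E\|U\|^2$ and the same bound holds for $m_\phi$. I must also confirm that the nesting of $\sigma$-algebras uses only the deterministic nature of $P_\phi$. If $P_\phi$ were randomized, the inclusion would fail and the identity would need an extra independence assumption. I would note this scope remark after the proof, and connect it to Lemma~\ref{lem:conditionalrepresentation} by taking $U=Z_{\mathrm{clean}}$ or $U=u_t$.
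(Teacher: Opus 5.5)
Your proposal is correct and follows essentially the same argument as the paper. Determinism of $P_\phi$ gives $\sigma(W,C_\phi)\subseteq\sigma(W,Q)$, and conditional-expectation orthogonality then yields a Pythagorean split. The only difference is organizational: the paper writes a single three-term decomposition $\E\|U-f(W,C_\phi)\|_2^2=\E\|U-m_Q\|_2^2+\E\|m_Q-m_\phi\|_2^2+\E\|m_\phi-f(W,C_\phi)\|_2^2$ for arbitrary square-integrable $f$ and minimizes over $f$, which establishes the optimality of $m_\phi$ and the gap identity in one step, whereas you first cite the optimality of conditional means and then use the two-term split.
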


\begin{proof}[Proof of Lemma~\ref{lem:conditionalregression}]
Because $C_\phi$ is a deterministic function of $Q$, the information in $(W,C_\phi)$ is contained in that of $(W,Q)$. The tower property gives $m_\phi=\E[m_Q\mid W,C_\phi]$. For any square-integrable predictor $f(W,C_\phi)$, conditional-expectation orthogonality yields
\begin{equation}
\begin{split}
 \E\|U-f(W,C_\phi)\|_2^2
 ={}&\E\|U-m_Q\|_2^2+\E\|m_Q-m_\phi\|_2^2\\
 &+\E\|m_\phi-f(W,C_\phi)\|_2^2.
\end{split}
\label{eq:conditioningdecomposition}
\end{equation}
The first term is $\mathcal R_Q^\star$. Minimizing over $f$ removes the final term and gives Eq.~\eqref{eq:conditioningrisk}. Equality of the two optimal risks is equivalent to $m_Q=m_\phi$ almost surely.
\end{proof}

\begin{proof}[Proof of Lemma~\ref{lem:conditionalrepresentation}]
For fixed $0<t<1$, differentiating the Gaussian mixture induced by the forward path gives
\begin{equation}
 s^\star(z,t\mid q)
 =\frac{t\,\E[Z_{\mathrm{clean}}\mid Z_t=z,Q=q]-z}{\sigma^2(1-t)^2}.
 \label{eq:gaussianconditionalscore}
\end{equation}
The same identity holds with $C=P_\phi(Q)$ in place of $Q$. Subtracting gives Eq.~\eqref{eq:conditionalscoredifference}; its scalar factor is nonzero on $(0,1)$. Moreover, $Z_{\mathrm{clean}}-\sigma\epsilon=(Z_{\mathrm{clean}}-Z_t)/(1-t)$ on the forward path, so substituting Eq.~\eqref{eq:gaussianconditionalscore} into its conditional expectation gives Eq.~\eqref{eq:conditionalflowscore}. Equal scores therefore give equal optimal unguided fields. Starting from the same Gaussian prior, their exact flows have the same endpoint law under the usual existence and uniqueness assumptions, interpreting the endpoint as a limit if necessary. Applying Lemma~\ref{lem:conditionalregression} to the endpoint or velocity target also identifies posterior-mean preservation with equality of the optimal regression risks.
\end{proof}

\paragraph{Posterior sufficiency and teacher imitation.}
For the random counterpart $U$ of $u_t^{\mathrm{stab}}$, namely $U=(Z_{\mathrm{clean}}-Z_t)/\Delta_t$, the posterior mean is $(\E[Z_{\mathrm{clean}}\mid Z_t,t,Q]-Z_t)/\Delta_t$. Thus preserving the posterior mean clean endpoint suffices for this regression target, including where the deterministic denominator clamp is active. Equality of the conditional distributions of $Z_{\mathrm{clean}}$ given $Q$ and $P_\phi(Q)$ is a sufficient condition. The criterion is independent of the coordinate system used by the teacher: an invertible change of encoding coordinates preserves the available information and can be compensated by the denoiser. The finite network class may limit that compensation. Prompt MSE supplies a useful initialization for the existing denoiser, while subsequent training can adapt the interface through the downstream objective.

For a concrete example, suppose $c_{\mathrm T}(q)$ preserves the posterior mean. For any scalar $\alpha$, define $c'_i(q)=c_{\mathrm T,i}(q)+\alpha\mathbf 1_d$ at each valid prompt position. Subtracting the known offset recovers the teacher encoding exactly, so $c'$ preserves the same information and ideal conditional score, while its normalized teacher-imitation MSE in Eq.~\eqref{eq:methodpromptmse} is $\alpha^2$ and can be arbitrarily large. This construction illustrates the information criterion; it does not assert that a fixed finite denoiser can compensate for arbitrary encoder errors without adaptation.

\paragraph{Noise, masks, and optimization.}
Conditioning Eq.~\eqref{eq:conditioningdecomposition} on time defines an excess risk $\Delta_\phi(t)=\E[\|m_Q-m_\phi\|_2^2\mid t]$. The identity imposes no monotonicity on $\Delta_\phi(t)$. For a token-weighted objective, the same argument applies under the induced probability measure over scored positions, including the position index among the observed variables; all conditional expectations must use that same measure. The lemma assumes that the answer targets and corruption law do not depend on the encoder parameters, as in our joint stage. It justifies the conditional-model parameterization and characterizes its information requirement. It does not assert that joint training from initialization reaches the optimum, or identify a learned SCCFG sampling field with the unguided posterior field.

\subsection{Prompt-learning curves and numerical results}
\label{app:promptcurves}
Table~\ref{tab:promptcurvevalues} contains every point in Figure~\ref{fig:promptcurriculum}. Points are means over seeds 42/123, evaluated with async $(2.5,2,1.5)$, 32 denoising steps, SCCFG 2, and backbone EMA $0.9999$. Prompt EMA is $0.999$ at the MSE swap and $0.9999$ during joint training. GSM8K uses numerical-answer scoring and MATH500 uses Math-Verify. Error bars show pointwise 95\% intervals from 4,000 problem-bootstrap replicates; brackets in the table give their bounds. Epoch 12 appears twice because prompt substitution does not advance the flow-training epoch; the two rows have identical backbone weights. The axes start at zero, but no epoch-zero accuracy was measured or plotted.

The curve samples flow epochs 4, 8, and 12 before prompt substitution; epoch 12 immediately after substitution; and epochs 16/18 for CEDR-B or 16/20/21 for CEDR-L. All 20 points retain their saved predictions and scores. The joint-from-start comparison in Figure~\ref{fig:representationcomparison} instead uses the common $t=\tau^2$ clock and backbone EMA $0.999$ for its six-epoch controls; its four-seed endpoint means are not points on these curriculum curves. Training durations are summarized in Table~\ref{tab:trainingstages}.

\begin{table}[htbp]
\centering\small
\setlength{\tabcolsep}{5pt}
\caption{Numerical values for Figure~\ref{fig:promptcurriculum}. Correct counts for seeds 42/123 and the benchmark size $N$ specify the exact per-seed accuracies. Mean pass@1 and sample seed SD (parentheses) are percentages; brackets give the plotted 95\% problem-bootstrap intervals.}
\label{tab:promptcurvevalues}
\begin{tabular}{@{}rlrrrr@{}}
\toprule
Epoch & Prompt & Correct$_{42}$ & Correct$_{123}$ & Mean (SD) & 95\% CI \\
\midrule
\multicolumn{6}{@{}l}{\textit{GSM8K, CEDR-B; $N=1319$}}\\
4 & Qwen & 92 & 86 & 6.75 {\scriptsize (0.32)} & $[5.76, 7.81]$ \\
8 & Qwen & 401 & 424 & 31.27 {\scriptsize (1.23)} & $[29.11, 33.43]$ \\
12 & Qwen & 514 & 503 & 38.55 {\scriptsize (0.59)} & $[36.28, 40.83]$ \\
12 & MSE swap & 398 & 381 & 29.53 {\scriptsize (0.91)} & $[27.29, 31.69]$ \\
16 & Joint & 482 & 462 & 35.78 {\scriptsize (1.07)} & $[33.55, 38.02]$ \\
18 & Joint & 495 & 483 & 37.07 {\scriptsize (0.64)} & $[34.76, 39.39]$ \\
\addlinespace
\multicolumn{6}{@{}l}{\textit{GSM8K, CEDR-L; $N=1319$}}\\
4 & Qwen & 212 & 202 & 15.69 {\scriptsize (0.54)} & $[14.22, 17.21]$ \\
8 & Qwen & 755 & 743 & 56.79 {\scriptsize (0.64)} & $[54.40, 59.14]$ \\
12 & Qwen & 804 & 820 & 61.56 {\scriptsize (0.86)} & $[59.17, 63.80]$ \\
12 & MSE swap & 647 & 660 & 49.55 {\scriptsize (0.70)} & $[47.12, 51.90]$ \\
16 & Joint & 728 & 701 & 54.17 {\scriptsize (1.45)} & $[51.78, 56.52]$ \\
20 & Joint & 761 & 763 & 57.77 {\scriptsize (0.11)} & $[55.38, 60.12]$ \\
21 & Joint & 783 & 780 & 59.25 {\scriptsize (0.16)} & $[56.67, 61.68]$ \\
\addlinespace
\multicolumn{6}{@{}l}{\textit{MATH500, CEDR-L; $N=500$}}\\
4 & Qwen & 30 & 38 & 6.80 {\scriptsize (1.13)} & $[5.10, 8.70]$ \\
8 & Qwen & 61 & 65 & 12.60 {\scriptsize (0.57)} & $[10.30, 15.10]$ \\
12 & Qwen & 82 & 76 & 15.80 {\scriptsize (0.85)} & $[13.10, 18.50]$ \\
12 & MSE swap & 64 & 65 & 12.90 {\scriptsize (0.14)} & $[10.50, 15.40]$ \\
16 & Joint & 84 & 92 & 17.60 {\scriptsize (1.13)} & $[14.90, 20.40]$ \\
20 & Joint & 98 & 91 & 18.90 {\scriptsize (0.99)} & $[16.00, 21.90]$ \\
21 & Joint & 85 & 94 & 17.90 {\scriptsize (1.27)} & $[15.10, 20.90]$ \\
\addlinespace
\bottomrule
\end{tabular}
\end{table}

\section{Representation construction and ablations}
\label{app:representationimplementation}

\subsection{Construction and covariance whitening}
\label{app:representationgeometry}

\paragraph{Feature geometry.}
The frozen teacher's post-block residual is $h_j^{(\ell)}=h_\psi^{(\ell)}(q,a_{\leq j})\in\R^{d_{\mathrm{teacher}}}$, with $d_{\mathrm{teacher}}=2560$. Each encoder $W_\ell^{\mathrm{enc}}\in\R^{d_{\mathrm{teacher}}\times d_\ell}$ produces $z_{\mathrm{clean},j}^{(\ell)}=(h_j^{(\ell)}-\mu_\ell)W_\ell^{\mathrm{enc}}$, and its reconstruction decoder has shape $W_\ell^{\mathrm{rec}}\in\R^{d_\ell\times d_{\mathrm{teacher}}}$. Brackets in Section~\ref{sec:representation} concatenate layer blocks in ascending order. The same maps initially encode prompt states, which cannot access subsequent answers under the teacher's causal mask.

\paragraph{Whitening and initialization.}
The linear encoder and reconstruction decoder admit changes of latent scale that leave reconstruction unchanged. We fix this freedom before applying isotropic diffusion noise. For each selected layer, estimate the mean $\mu_\ell$ and covariance $\Sigma_\ell$ over training assistant-content tokens. Write $\Sigma_\ell=U_\ell\Lambda_\ell U_\ell^\top$, with eigenvalues in descending order, and define
\[
 \widetilde\Sigma_\ell=U_\ell\operatorname{diag}\!\big(\max(\lambda_{\ell i},10^{-5})\big)U_\ell^\top.
\]
We parameterize the encoder by
\begin{equation}
 Q_\ell=\operatorname{qf}(R_\ell),\qquad
 W_\ell^{\mathrm{enc}}=\widetilde\Sigma_\ell^{-1/2}Q_\ell,\qquad
 (W_\ell^{\mathrm{enc}})^\top\widetilde\Sigma_\ell W_\ell^{\mathrm{enc}}=I_{d_\ell},
 \label{eq:representationwhitening}
\end{equation}
where $\operatorname{qf}$ is a differentiable thin QR factorization and $R_\ell$ is trainable. This maintains a nondegenerate coordinate scale while the subspace changes.
Initialize $Q_{\ell,0}=U_{\ell,:d_\ell}$, $W_{\ell,0}^{\mathrm{enc}}=\widetilde\Sigma_\ell^{-1/2}Q_{\ell,0}$, and $W_{\ell,0}^{\mathrm{rec}}=Q_{\ell,0}^\top\widetilde\Sigma_\ell^{1/2}$. The resulting reconstruction is ordinary rank-$d_\ell$ PCA:
\[
 \widehat h_{j,0}^{(\ell)}=\mu_\ell+(h_j^{(\ell)}-\mu_\ell)U_{\ell,:d_\ell}U_{\ell,:d_\ell}^\top.
\]
Starting from this initialization, we optimize Eq.~\eqref{eq:representationloss}, passing gradients through the thin QR factorization while leaving the decoder unconstrained. The whitening identity in Eq.~\eqref{eq:representationwhitening} refers to the fixed regularized training covariance; it does not imply identity covariance on every minibatch, on question features, or across concatenated layer groups.

\paragraph{Teacher-predictive reconstruction.}
For each layer, the linear decoder reconstructs the original hidden width:
\begin{equation}
 \widehat h_j^{(\ell)}=\mu_\ell+z_{\mathrm{clean},j}^{(\ell)}W_\ell^{\mathrm{rec}}.
 \label{eq:representationreconstruction}
\end{equation}
The three reconstruction branches run frozen Qwen blocks 17--36, 25--36, and 33--36, respectively, followed by final normalization and the vocabulary head. We intervene on all assistant-content states at a single layer in each branch, retaining clean prefix/control states. Full-vocabulary soft cross-entropy scores positions that predict the next assistant-content token; the first assistant token, terminal token, prompt, and padding are excluded as targets. The objective is
\begin{equation}
 \mathcal L_{\mathrm{repr}}
 =\sum_{\ell\in\mathcal S}\E_{q,a,j}
 \left[-\sum_{w\in\mathcal V}p_j^{\mathrm{teacher}}(w)
 \log p_{\ell,j}^{\mathrm{recon}}(w)\right].
 \label{eq:representationloss}
\end{equation}
Each layer is intervened on independently. Both distributions predict $a_{j+1}$ from the state at $a_j$, whereas CEDR's token decoder predicts the same-position token. The clean teacher distribution is detached; gradients through the frozen suffix train only $W_\ell^{\mathrm{enc}}$ and $W_\ell^{\mathrm{rec}}$. Teacher-to-reconstruction KL differs from this objective only by the fixed teacher entropy. These reconstruction decoders are separate from CEDR's token decoder and, together with the Qwen suffixes, are used only to learn the representation.

\subsection{Clean-token decoding from the fixed PCA representation}
\label{app:representationdiagnostics}
We use the exact frozen, whitened top-1,024 PCA coordinates from Qwen layer 32 used by the single-layer generation arm in Figure~\ref{fig:representationcomparison}. A separately trained decoder maps these coordinates through a learned $1{,}024\!\to\!1{,}024$ linear map, GELU, and a vocabulary projection to the same-position token. The PCA transform remains fixed and is applied in FP32 to cached BF16 teacher activations. Layer 32 is a deep-layer control within the teacher's 36 blocks.

The probe trains for two epochs (856 updates) on 218,857 rows, with seed 42, batch size 512, learning rate 0.002, Muon/auxiliary optimization, and FP32 vocabulary cross-entropy. Training uses four GPUs with gradients weighted by each rank's valid-token count. We evaluate final raw weights on 12,436 held-out solution rows from 4,647 prompt groups, comprising 2,112,672 assistant-content tokens. Training, validation, and test splits are disjoint by prompt. Same-position targets include the first assistant token and exclude prompt, terminal, and padding tokens.

Table~\ref{tab:pcadecoding} reports token-weighted accuracy and cross-entropy. Intervals use 4,000 prompt-cluster bootstrap replicates (seed 20260920), retaining all rows and tokens within each resampled prompt; they quantify held-out sampling uncertainty, not training variability. These results demonstrate substantial clean-token recoverability from the actual PCA generation representation. They measure per-token reconstruction with a separate probe, not solution accuracy or the performance of CEDR's decoder, and do not isolate the cause of generation errors. The probe is not claimed to have converged after two epochs.

\begin{table}[htbp]
\centering\small
\caption{Clean-token decoding from fixed PCA coordinates after two epochs, using final raw probe weights. Brackets give 95\% prompt-cluster bootstrap intervals. The input is the representation used by the single-layer generation ablation.}
\label{tab:pcadecoding}
\begin{tabular}{@{}lcc@{}}
\toprule
Probe input & Token accuracy (\%) & CE (nats/token) \\
\midrule
Fixed layer-32 PCA & 99.24 [99.19, 99.28] & 0.0548 [0.0502, 0.0597] \\
\bottomrule
\end{tabular}
\end{table}

\subsection{Layer similarity and CKA}
\label{app:representationcka}
The CKA study uses disjoint discovery and confirmation panels of 4,096 problems each, totaling 8,192 equally weighted problems. For each problem, teacher-forced assistant activations are mean-pooled into eight relative-position bins and concatenated before computing linear CKA. Figure~\ref{fig:representationcka} shows the combined panel. The layer-16/17 CKA is 0.038; the selected-layer similarities are 0.124 (16/24), 0.543 (16/32), and 0.842 (24/32). We use the transition after layer 16 and regular spacing thereafter to motivate the representation, without claiming that CKA identifies an optimal triplet.

\subsection{Representation ablations and clock robustness}
\label{app:representationclock}

Figure~\ref{fig:representationidentity} changes the evaluation clock to $t=\tau$, retaining the trained checkpoints, 32 steps, guidance, and EMA selectors; the identity-clock comparison shows that the epoch-six ordering persists under another schedule. The two-seed identity-clock means are 26.76\%, 24.22\%, 13.38\%, and 8.68\% for learned multilayer, PCA multilayer, PCA layer 32, and joint prompt training from initialization. Learned multilayer leads PCA multilayer from epoch two onward under both clocks, and multilayer PCA leads layer-32 PCA at every epoch.

\begin{figure}[htbp]
\centering
\includegraphics[width=.66\linewidth]{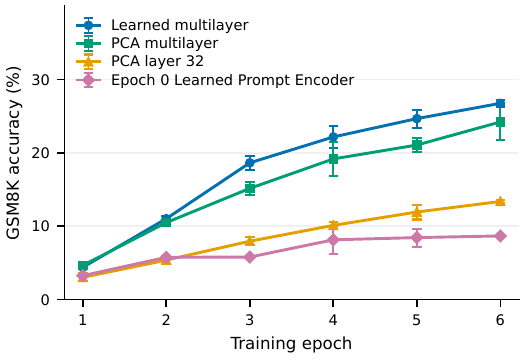}
\caption{\textbf{Identity-clock robustness.} The four models from Figure~\ref{fig:representationcomparison}, evaluated with $t=\tau$ and 32 steps. Points average seeds 42/123 (bars: seed SD). Each epoch uses one trained checkpoint per arm. Numerical values appear in Table~\ref{tab:representationcurvevalues}.}
\label{fig:representationidentity}
\end{figure}

All four arms start from the same seed-42 backbone initialization, use 2,434,330 training rows, and preserve effective batch 512. All answer representations are whitened and 1,024-dimensional. The first three arms use frozen Qwen prompt activations; the fourth uses learned multilayer answer latents and a randomly initialized prompt encoder trained jointly from epoch zero. All select backbone EMA $0.999$, and the fourth selects prompt EMA $0.999$. The PCA multilayer run changes from two to four GPUs during training while maintaining that batch size. Each arm has one training run; seed variation in the figures measures sampling randomness only. The comparisons isolate projector choice at fixed layers and layer choice under PCA, without estimating their interaction. Paired intervals in the main text use 4,000 problem-bootstrap replicates and the four matched epoch-six generation seeds.

\paragraph{Representation-ablation curves.}
Table~\ref{tab:representationcurvevalues} gives all points and error bars in Figures~\ref{fig:representationcomparison} and~\ref{fig:representationidentity}, followed by the four-seed epoch-six summary discussed in Sections~\ref{sec:representationablations} and~\ref{sec:conditionalencoding}. The curves use seeds 42/123; the endpoint summary adds 456/789. Error bars and parenthesized values are sample SDs across their respective generation seeds. Both use 32 steps and backbone EMA $0.999$; The controls above apply to both clocks.

\begin{table}[htbp]
\centering\small
\setlength{\tabcolsep}{5pt}
\caption{Numerical learning curves and endpoint summary for Figures~\ref{fig:representationcomparison} and~\ref{fig:representationidentity}: mean GSM8K accuracy (\%) with sample seed SD in parentheses. Curves use two seeds (42/123); the final block reports the four-seed epoch-six comparison (42/123/456/789). The final column trains the prompt encoder jointly from epoch zero; the other three retain Qwen prompt conditioning.}
\label{tab:representationcurvevalues}
\begin{tabular}{@{}rrrrr@{}}
\toprule
Epoch & \shortstack{Learned\\16/24/32} & \shortstack{PCA\\16/24/32} & \shortstack{PCA\\layer 32} & \shortstack{Epoch-0 joint\\prompt training} \\
\midrule
\multicolumn{5}{@{}l}{$t=\tau^2$}\\
1 & 3.75 {\scriptsize (0.05)} & 4.85 {\scriptsize (0.75)} & 2.65 {\scriptsize (0.11)} & 2.46 {\scriptsize (0.05)} \\
2 & 12.40 {\scriptsize (0.27)} & 11.18 {\scriptsize (0.16)} & 5.50 {\scriptsize (0.70)} & 3.98 {\scriptsize (1.02)} \\
3 & 20.17 {\scriptsize (0.54)} & 16.38 {\scriptsize (0.32)} & 8.64 {\scriptsize (0.43)} & 3.68 {\scriptsize (0.16)} \\
4 & 26.61 {\scriptsize (0.86)} & 21.49 {\scriptsize (0.80)} & 7.47 {\scriptsize (0.05)} & 7.77 {\scriptsize (1.02)} \\
5 & 29.95 {\scriptsize (0.21)} & 25.85 {\scriptsize (0.21)} & 9.82 {\scriptsize (0.16)} & 7.54 {\scriptsize (1.45)} \\
6 & 31.01 {\scriptsize (0.43)} & 28.39 {\scriptsize (0.27)} & 11.68 {\scriptsize (0.21)} & 8.61 {\scriptsize (0.05)} \\
\addlinespace
\multicolumn{5}{@{}l}{$t=\tau$}\\
1 & 4.40 {\scriptsize (0.75)} & 4.66 {\scriptsize (0.48)} & 3.03 {\scriptsize (0.54)} & 3.26 {\scriptsize (0.11)} \\
2 & 11.03 {\scriptsize (0.38)} & 10.50 {\scriptsize (0.16)} & 5.38 {\scriptsize (0.54)} & 5.76 {\scriptsize (0.00)} \\
3 & 18.65 {\scriptsize (0.96)} & 15.16 {\scriptsize (0.86)} & 7.96 {\scriptsize (0.54)} & 5.80 {\scriptsize (0.05)} \\
4 & 22.18 {\scriptsize (1.45)} & 19.18 {\scriptsize (2.36)} & 10.12 {\scriptsize (0.48)} & 8.15 {\scriptsize (1.88)} \\
5 & 24.68 {\scriptsize (1.23)} & 21.08 {\scriptsize (0.96)} & 11.94 {\scriptsize (1.02)} & 8.45 {\scriptsize (1.23)} \\
6 & 26.76 {\scriptsize (0.43)} & 24.22 {\scriptsize (2.52)} & 13.38 {\scriptsize (0.16)} & 8.68 {\scriptsize (0.16)} \\
\addlinespace
\multicolumn{5}{@{}l}{Epoch-six summary: $t=\tau^2$, four seeds}\\
6 & 30.76 {\scriptsize (0.39)} & 28.43 {\scriptsize (0.19)} & 11.20 {\scriptsize (0.56)} & 8.70 {\scriptsize (0.33)} \\
\addlinespace
\bottomrule
\end{tabular}
\end{table}

\section{Inference clocks and reconstruction diagnostics}
\label{app:inferenceclocks}

\subsection{Local clocks, time embeddings, and sampling grid}

\paragraph{Local corruption and time embeddings.}
The local clocks in Section~\ref{sec:clocks} assign each group its own Gaussian corruption, $z_{\bm t}^{(\ell)}=t_\ell z_{\mathrm{clean}}^{(\ell)}+(1-t_\ell)\sigma\epsilon^{(\ell)}$. The three time embedders are identical copies of the selected scalar-time checkpoint weights:
\begin{equation}
 \bar e_\theta^{\mathrm{time}}(\bm t)
 =\frac{1}{3}\sum_{\ell\in\{16,24,32\}}e_{\theta,\ell}^{\mathrm{time}}(t_\ell),
 \qquad e_{\theta,\ell}^{\mathrm{time}}=e_\theta^{\mathrm{time}}.
 \label{eq:asynctimeembedding}
\end{equation}
All other tensors are unchanged. The embedding outputs are weighted equally, irrespective of the 256/256/512 group widths, with the mean accumulated in FP64 before casting back to model dtype. The resulting embedding is a shared network condition, not a separate Transformer for each group. In particular, permuting the three exponents preserves this conditioning average while changing the coordinate groups to which the local velocity factors apply. Because the group widths differ, that permutation need not preserve total latent noise energy.

\paragraph{Sampling grid and conditioning.}
For $S$ denoising steps, we set $\tau_0=0$, $\tau_S=1$, and use the same logit-normal-quantile grid across schedules:
\begin{equation}
 \tau_r=\operatorname{sigmoid}\!\left[-1.5+0.8\Phi^{-1}(r/S)\right],
 \qquad 0<r<S.
\end{equation}
The base $\tau$ grid is fixed across schedule comparisons. We retain the full derivative $f'_\ell(\tau)=\gamma_\ell\tau^{\gamma_\ell-1}$ in Eq.~\eqref{eq:asyncsampler}, with the endpoint denominator $\Delta_{t_{\ell,r}}=\max(1-\tau_r^{\gamma_\ell},0.05)$. Question states remain clean, and recurrent self-conditioning carries the previous endpoint prediction to the next call. Changing the clock therefore alters both the integration grid in local time and the prediction history. One shared denoiser call updates all groups; the terminal token-decoding call remains separate.

\subsection{Schedule comparisons}
Tables~\ref{tab:inferenceclocks} and~\ref{tab:inferenceclockfull} use CEDR-L epoch 21, 32 denoising steps, SCCFG 2, backbone/prompt EMA $0.9999/0.9999$, and seeds 42/123. Scores use numerical-answer accuracy for GSM8K and Math-Verify for MATH500 (Appendix~\ref{app:experimentdetails}). Entries in Table~\ref{tab:inferenceclocks} give mean pass@1 with sample SD across the two seeds; bold marks the highest mean among its four schedules.

Table~\ref{tab:inferenceclockfull} compares the four schedules from the main table on both validation and test sets. The 500-problem MATH validation set comes from the same original MATH test pool as MATH500, excluding its questions, whereas our GSM8K validation set comprises held-out original training questions. We therefore regard the MATH validation comparison as more directly matched to its benchmark's source distribution; the two validation sets need not favor the same schedule. The highest GSM8K validation mean instead uses $(1.5,2,2.5)$. Thus the main comparison supports the usefulness of time reparameterization and the chosen operating point, without identifying a universally optimal exponent vector.

\begin{table}[htbp]
\centering\small
\caption{Four-schedule CEDR-L epoch-21 comparison: mean pass@1 (\%) with sample seed SD in parentheses, over seeds 42/123 at 32 denoising steps and SCCFG 2. Validation sets contain 1,000 GSM8K and 500 MATH problems. Test columns include all entries and SDs from Table~\ref{tab:inferenceclocks}. Bold marks each column maximum, including ties.}
\label{tab:inferenceclockfull}
\begin{tabular}{lrrrr}
\toprule
& \multicolumn{2}{c}{Validation} & \multicolumn{2}{c}{Test} \\
\cmidrule(lr){2-3}\cmidrule(lr){4-5}
Clock powers & GSM8K & MATH & GSM8K & MATH500 \\
\midrule
$(1,1,1)$ & 65.00 {\scriptsize (0.99)} & 15.40 {\scriptsize (0.28)} & 52.96 {\scriptsize (1.13)} & 15.50 {\scriptsize (1.27)} \\
$(2,2,2)$ & 72.70 {\scriptsize (0.85)} & 17.80 {\scriptsize (0.57)} & 57.51 {\scriptsize (0.80)} & 17.40 {\scriptsize (1.41)} \\
$(1.5,2,2.5)$ & \textbf{72.80} {\scriptsize (1.13)} & 18.30 {\scriptsize (0.14)} & 58.07 {\scriptsize (0.64)} & 17.60 {\scriptsize (1.98)} \\
$(2.5,2,1.5)$ & 72.15 {\scriptsize (0.35)} & \textbf{18.70} {\scriptsize (0.14)} & \textbf{59.25} {\scriptsize (0.16)} & \textbf{17.90} {\scriptsize (1.27)} \\
\bottomrule
\end{tabular}
\end{table}

For the 32-step test comparison, async $(2.5,2,1.5)$ minus sync $(2,2,2)$ gives +1.74 percentage points on GSM8K (paired 95\% CI $[+0.72,+2.73]$) and +0.50 on MATH500 ($[-0.90,+1.90]$). Repeating the same two-seed comparison at 16 steps gives 55.57\% versus 54.74\% on GSM8K, a difference of +0.83 points ($[-0.15,+1.82]$), and 16.60\% versus 16.90\% on MATH500, a difference of $-0.30$ points ($[-2.00,+1.40]$). All use SCCFG 2. These are pointwise paired problem-bootstrap intervals with 4,000 replicates, conditional on the fixed checkpoints and generation seeds; the standard deviations in the main table instead describe dispersion across full-benchmark seed accuracies.

\subsection{Layer decomposition and reconstruction sensitivity}
We probe a trained CEDR-B model by corrupting one group of clean gold-solution latents, predicting the clean representation, and decoding it to tokens. A matched control replaces the layer groups by eight random partitions of the same 256/256/512 widths, keeping the assigned times and injected Gaussian energy fixed. Figure~\ref{fig:layerreconstruction} shows that the actual groups have different reconstruction sensitivities: perturbing layer-16 features is more damaging than perturbing deeper features, with the clearest contrast against layer 24. Random grouping attenuates these contrasts. The layer decomposition thus exposes structure that is obscured when coordinates are grouped arbitrarily.

\begin{figure}[htbp]
\centering
\includegraphics[width=\linewidth]{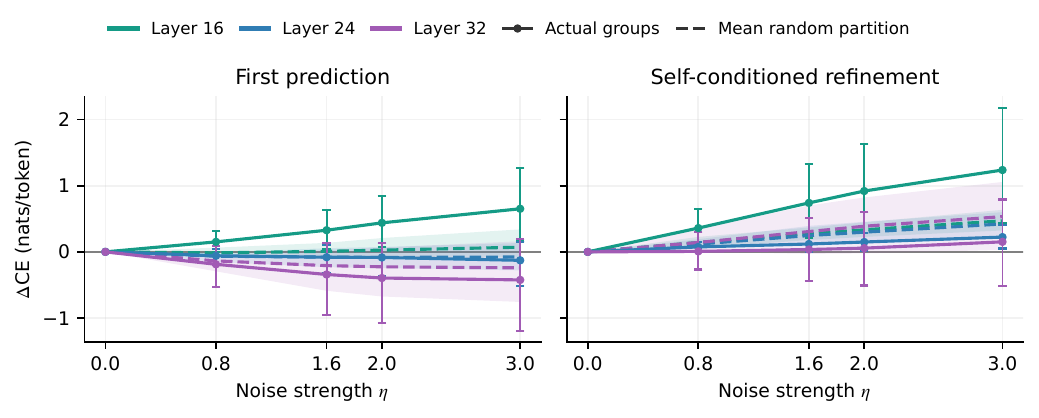}
\caption{\textbf{Layer groups expose distinct reconstruction sensitivities.} CEDR-B epoch 18 on 64 GSM8K validation problems. Solid curves perturb actual layer groups; dashed curves average eight random partitions with matched widths, times, and injected Gaussian energy. Colors identify the corresponding layer or control slot. Decoder CE changes are relative to the same pass's clean-input prediction: first without answer self-conditioning (left), then after one refinement on the same corrupted input (right). Bars show 95\% CIs; shading is the range of random-partition means, not a confidence interval. Table~\ref{tab:reconstructioncurvevalues} gives every plotted value.}
\label{fig:layerreconstruction}
\end{figure}

\paragraph{Reconstruction intervention.}
Figure~\ref{fig:layerreconstruction} uses CEDR-B epoch 18, backbone/prompt EMA $0.9999/0.9999$, SCCFG 1, noise seeds 42/123, and a fixed 64-problem GSM8K validation panel. For a selected group $\ell$, we set $t_\ell=2/[2+\eta\,\mathrm{RMS}(z_{\mathrm{clean}}^{(\ell)})]$ and replace its assistant-content coordinates by $t_\ell z_{\mathrm{clean}}^{(\ell)}+2(1-t_\ell)\epsilon^{(\ell)}$, for $\eta\in\{0.8,1.6,2,3\}$. Other coordinates, question positions, terminal tokens, and padding remain clean. The model predicts the full clean latent, which its own token head decodes. The second pass reuses the same corrupted input with the first prediction as self-conditioning; it is not another integration step. CE averages gold-token losses within each problem, then noise seeds and problems equally. Negative changes mean lower CE than the clean-input \emph{prediction} baseline, not a negative loss.

The random control uses eight fixed NumPy PCG64 partitions (seeds 20260918--20260925). Each control slot retains its true-group time and injected Gaussian energy; random-subset RMS is recorded but does not recalibrate the intervention. At $\eta=3$, after self-conditioning, the true-group layer-16 minus layer-24 CE contrast exceeds the corresponding mean-random contrast by 0.967 nats/token (paired 95\% CI $[0.168,1.890]$). The layer-16 minus layer-32 excess is 1.149 ($[0.061,2.241]$), with weaker evidence at lower noise and in the first prediction. Random partitions are averaged within problem before resampling; they are not additional independent problems.

\paragraph{Interpretation.}
For $0<\tau<1$, our chosen powers give $t_{16}<t_{24}<t_{32}$: layer 16 becomes clean relatively later, while all groups reach their endpoint together. One possible explanation is that the more sensitive layer-16 representation benefits from being reconstructed with cleaner deeper-layer features available as context. This is a hypothesis about coordination between representations, rather than a consequence of the local probe. The probe motivates investigating layer-dependent denoising, but further experiments are needed to establish whether this proposed mechanism explains its sampling benefits.

\paragraph{Reconstruction-sensitivity curves.}
Table~\ref{tab:reconstructioncurvevalues} reproduces the means, error bars, and shaded ranges in Figure~\ref{fig:layerreconstruction}. The zero-noise points are identically zero because each condition is compared with its own pass's clean-input baseline. Random-partition ranges describe the eight partition means, not uncertainty intervals.

\begin{table}[htbp]
\centering\small
\setlength{\tabcolsep}{5pt}
\caption{Numerical values for Figure~\ref{fig:layerreconstruction}, in nats/token. Actual-group entries give $\Delta$CE and its 95\% problem-bootstrap CI; random-group entries give the mean over eight matched partitions and the minimum/maximum partition means. Noise strength is $\eta$; the layer identifies the matched group width.}
\label{tab:reconstructioncurvevalues}
\begin{tabular}{@{}rrcc@{}}
\toprule
Layer & $\eta$ & Actual group [95\% CI] & Random mean [min, max] \\
\midrule
\multicolumn{4}{@{}l}{\textit{First prediction}}\\
16 & 0 & $0.000\ [0.000, 0.000]$ & $0.000\ [0.000, 0.000]$ \\
16 & 0.8 & $0.151\ [-0.006, 0.315]$ & $-0.015\ [-0.053, 0.064]$ \\
16 & 1.6 & $0.327\ [0.045, 0.633]$ & $0.008\ [-0.057, 0.136]$ \\
16 & 2 & $0.439\ [0.080, 0.848]$ & $0.026\ [-0.075, 0.209]$ \\
16 & 3 & $0.651\ [0.158, 1.272]$ & $0.071\ [-0.087, 0.340]$ \\
24 & 0 & $0.000\ [0.000, 0.000]$ & $0.000\ [0.000, 0.000]$ \\
24 & 0.8 & $-0.059\ [-0.201, 0.050]$ & $-0.065\ [-0.154, -0.010]$ \\
24 & 1.6 & $-0.081\ [-0.314, 0.103]$ & $-0.080\ [-0.231, 0.055]$ \\
24 & 2 & $-0.085\ [-0.365, 0.131]$ & $-0.084\ [-0.245, 0.075]$ \\
24 & 3 & $-0.128\ [-0.512, 0.154]$ & $-0.078\ [-0.279, 0.134]$ \\
32 & 0 & $0.000\ [0.000, 0.000]$ & $0.000\ [0.000, 0.000]$ \\
32 & 0.8 & $-0.188\ [-0.533, 0.087]$ & $-0.136\ [-0.294, -0.050]$ \\
32 & 1.6 & $-0.342\ [-0.950, 0.140]$ & $-0.208\ [-0.588, 0.028]$ \\
32 & 2 & $-0.395\ [-1.075, 0.140]$ & $-0.226\ [-0.675, 0.065]$ \\
32 & 3 & $-0.424\ [-1.189, 0.194]$ & $-0.241\ [-0.757, 0.160]$ \\
\addlinespace
\multicolumn{4}{@{}l}{\textit{Self-conditioned refinement}}\\
16 & 0 & $0.000\ [0.000, 0.000]$ & $0.000\ [0.000, 0.000]$ \\
16 & 0.8 & $0.360\ [0.073, 0.653]$ & $0.142\ [0.060, 0.226]$ \\
16 & 1.6 & $0.740\ [0.211, 1.330]$ & $0.271\ [0.180, 0.369]$ \\
16 & 2 & $0.920\ [0.289, 1.637]$ & $0.334\ [0.226, 0.446]$ \\
16 & 3 & $1.237\ [0.436, 2.179]$ & $0.469\ [0.324, 0.636]$ \\
24 & 0 & $0.000\ [0.000, 0.000]$ & $0.000\ [0.000, 0.000]$ \\
24 & 0.8 & $0.075\ [0.006, 0.150]$ & $0.117\ [0.052, 0.192]$ \\
24 & 1.6 & $0.119\ [0.005, 0.237]$ & $0.247\ [0.115, 0.392]$ \\
24 & 2 & $0.149\ [0.019, 0.285]$ & $0.298\ [0.132, 0.455]$ \\
24 & 3 & $0.223\ [0.051, 0.406]$ & $0.422\ [0.211, 0.600]$ \\
32 & 0 & $0.000\ [0.000, 0.000]$ & $0.000\ [0.000, 0.000]$ \\
32 & 0.8 & $0.007\ [-0.267, 0.295]$ & $0.142\ [0.001, 0.348]$ \\
32 & 1.6 & $0.036\ [-0.440, 0.508]$ & $0.307\ [-0.031, 0.701]$ \\
32 & 2 & $0.056\ [-0.509, 0.602]$ & $0.388\ [-0.023, 0.822]$ \\
32 & 3 & $0.152\ [-0.515, 0.792]$ & $0.533\ [0.030, 1.057]$ \\
\addlinespace
\bottomrule
\end{tabular}
\end{table}

\section{NFT implementation and gold-anchor analysis}
\label{app:nftdetails}

\subsection{Collection, rewards, and forward corruption}

Each collection round draws 24 prompt groups from the applicable training population; coding uses the screened reward cohort in Appendix~\ref{app:nfttraining}. The old policy produces 15 candidates per group using 32 Euler denoising steps, synchronous identity clocks, the logit-normal-quantile grid of Appendix~\ref{app:inferenceclocks}, initial noise scale two, prompt CFG one, and SCCFG two. Self-conditioning is recurrent during collection. Terminal token decoding is greedy and uses the same SCCFG scale. We save the terminal latent endpoint and its decoded answer length; generated text is used for reward evaluation without re-encoding it through Qwen. Each group additionally receives one gold endpoint from the fixed answer cache. No Qwen Transformer forward is required during NFT.

The generated reward is $0.75$ for a correct final answer and zero otherwise, while the gold reward is one. GSM8K uses numerical-answer correctness. MATH uses Math-Verify 0.9.0 with a five-second timeout; unparsed predictions, exceptions, and timeouts receive zero. Coding correctness requires all retained native training assertions to pass (Appendix~\ref{app:nfttraining}). All-correct generated groups are excluded before computing normalization statistics. For retained group $b$ and candidate $m$, define
\begin{equation}
 A_{bm}=\operatorname{clip}\!\left(
 \frac{R_{bm}-\bar R_b}{\sigma_R+10^{-4}},-5,5\right),
 \qquad
 \rho_{bm}=\operatorname{clip}\!\left(\frac12+\frac{A_{bm}}{10},0,1\right),
 \label{eq:nftrewardnormalization}
\end{equation}
where $\bar R_b$ averages all 16 endpoints and $\sigma_R$ is the population standard deviation of the retained \emph{raw} rewards across the complete collection round. Discarded groups do not enter these statistics. All-incorrect groups remain informative because they include a gold endpoint. The GSM8K implementations additionally stop before an update if an entire round has zero correct generations; the MATH and coding implementations disable that round-level guard and permit gold-anchored updates. A round with no retained groups supplies no optimizer update.

Correct generations remain in mixed groups. Because gold has a higher raw reward than correct generations, even all-correct groups would supply a gold-versus-generation preference if retained. Filtering removes those groups to concentrate updates on observed failures. The higher gold reward favors the reference reasoning trace within the latent objective, whereas generated-solution rewards depend only on final-answer correctness.

For each retained endpoint, draw eight independent time/noise/self-conditioning records, yielding at most $24\times16\times8=3072$ records per update. Each record draws $t=\operatorname{sigmoid}(-1.5+0.8\xi)$, with $\xi\sim\mathcal N(0,1)$, independently of the rollout grid. All representation groups use this same time, with unit group weights. Corrupt answer latents using Eq.~\eqref{eq:methodinterpolant} and convert predictions and endpoint targets using $\Delta_t=\max(1-t,0.05)$. Each policy restores its own clean prompt states, while endpoint, noise, answer states, and masks are shared across roles. Masks include the first EOS token and exclude prompt and subsequent padding positions.

\subsection{SCCFG construction and exact loss}

For each record, sample $b_{\mathrm{sc}}\sim\mathrm{Bernoulli}(0.5)$ and $\log(1+g)\sim\mathrm{Uniform}(\log1.5,\log6)$. Use the same $b_{\mathrm{sc}},g$ for current, old, and reference roles. Each role first predicts a clean endpoint under zero answer self-conditioning, restores its own clean prompt, and detaches that prediction. The main forward receives $b_{\mathrm{sc}}$ times the bootstrap prediction in answer positions; prompt positions remain clean regardless of $b_{\mathrm{sc}}$. The guidance-scale input $g$ is supplied to both passes. Convert both predictions to velocities and apply Eq.~\eqref{eq:nftcorrectedfield}.

Only the current main forward carries gradients. Bootstrap predictions, the entire guidance correction, old/reference computations, sampled endpoints, and data targets are detached, giving $\nabla_\Theta\widetilde v_{\mathrm{cur}}=\nabla_\Theta v_{\mathrm{cur}}^{\mathrm{main}}$ under the implemented differentiation. The current prompt encoder receives the main-forward chain-rule gradient, including its use as the clean condition. Its output gradients are accumulated across records and propagated through the shared prompt graph. This implements the same detached construction as the ELF guidance-adjusted target: $\widetilde v_{\mathrm{cur}}-u_t^{\mathrm{stab}}$ equals $v_{\mathrm{cur}}^{\mathrm{main}}-\sg[u_t^{\mathrm{stab}}+b_{\mathrm{sc}}(1-g^{-1})(v_{\mathrm{cur}}^{\mathrm{main}}-v_{\mathrm{cur}}^0)]$ when $u_t^{\mathrm{stab}}$ is detached. Removing the stop-gradient from the correction would change the update.

Let $b$ index retained training records and $i$ full-canvas token positions. With $d=1024$, define $N_{\mathrm{tok}}=\sum_{bi}M_{bi}$ across all ranks and microbatches of the update. Define
\begin{equation}
 \mathcal S(e;\omega)=\sum_{bi}M_{bi}\omega_b\frac{\|e_{bi}\|_2^2}{d}.
 \label{eq:nfttokensum}
\end{equation}
With $e_c=\widetilde v_{\mathrm{cur}}-\sg(u_t^{\mathrm{stab}})$ and $e_o=\sg(\widetilde v_{\mathrm{old}})-\sg(u_t^{\mathrm{stab}})$, the beta-one residuals are $e_+=e_c$ and $e_-=2e_o-e_c$. The executed loss is
\begin{equation}
 \mathcal L_{\mathrm{NFT}}=
 \frac{5}{N_{\mathrm{tok}}}\bigl[\mathcal S(e_+;\rho)+\mathcal S(e_-;1-\rho)\bigr]
 +\frac{0.1}{N_{\mathrm{tok}}}\mathcal S\bigl(\widetilde v_{\mathrm{cur}}-\sg(\widetilde v_{\mathrm{ref}});1\bigr).
 \label{eq:nftexecutedloss}
\end{equation}
This averages coordinates and valid tokens, rather than averaging a separate mean for each solution. Gradients are summed across ranks using the same update-wide denominator. The reference term is a squared-velocity penalty. We omit the adaptive residual normalization of the original DiffusionNFT implementation \citep{diffusionnft}; no decoder CE, prompt MSE, or separate correct-solution flow auxiliary loss is added.

\subsection{Policy roles and old-policy updates}
\label{sec:nftjoint}

Appendix~\ref{app:nfttraining} specifies initialization, trainable components, and optimizer settings.

The parameterization in Section~\ref{sec:conditionalencoding} makes the question tokens the external condition for all roles. Thus the current, old, and reference policies use their respective encoders $P_\phi$, $P_{\phi_{\mathrm{old}}}$, and $P_{\phi_{\mathrm{ref}}}$ while being compared under the same question $q$.

After successful update $n$, both old-policy components are updated by
\begin{equation}
 \Theta_{\mathrm{old}}\leftarrow \alpha_n\Theta_{\mathrm{old}}+(1-\alpha_n)\Theta_{\mathrm{cur}},
 \qquad \alpha_n=\begin{cases}\min(0.001n,0.5),&n<400,\\0.4,&n\ge400.\end{cases}
 \label{eq:nftoldema}
\end{equation}
The reference remains at initialization. The evaluation EMAs specified in Appendix~\ref{app:nfttraining} are distinct from this old-policy retention schedule.

\subsection{Proof and interpretation of gold anchoring}
\label{app:nftgoldproof}

\begin{proof}[Proof of Lemma~\ref{lem:goldanchor}]
All expectations below condition on $\mathcal H$ and a gold endpoint, so $f,h,\rho_G$ are fixed. Let $\sigma_G^2=\E[\|U-m_G\|_2^2\mid\mathcal H,\mathrm{gold}]$. Conditional variance decomposition gives
\begin{equation}
 \E[\ell_1(f,h;U,\rho_G)\mid\mathcal H,\mathrm{gold}]
 =\rho_G\|f-m_G\|_2^2
 +(1-\rho_G)\|f-(2h-m_G)\|_2^2+\sigma_G^2.
\end{equation}
For any vectors $a,b$, the identity $\omega\|f-a\|^2+(1-\omega)\|f-b\|^2=\|f-[\omega a+(1-\omega)b]\|^2+\omega(1-\omega)\|a-b\|^2$ completes the square. Substituting $a=m_G$, $b=2h-m_G$, and $\omega=\rho_G$ yields Eq.~\eqref{eq:nftgoldanchor}, with
\begin{equation}
 C_{\mathcal H}=\sigma_G^2+4\rho_G(1-\rho_G)\|m_G-h\|_2^2.
\end{equation}
The anchor weights $2\rho_G-1$ and $2(1-\rho_G)$ are nonnegative and sum to one.
\end{proof}

\paragraph{Random coefficients and reference interpretation.}
In the executed reward mapping, the gold raw reward exceeds its group's mean, so $\rho_G\in(1/2,1]$. Its value depends on the group's generations and the retained population. The conditioning information $\mathcal H$ therefore includes the realized coefficient and any retention information used to define the population; $m_G$ is computed under that same conditional distribution. The lemma does not assume coefficient--target independence. If that independence holds given state, time, and question, the coefficient can be removed from the conditioning of $m_G$. Identifying $m_G$ with a reference field requires the stronger idealization that the reference matches this gold posterior target under the same conditioning and corruption law. Correct final answers alone do not imply that equality. If the reference and old fields both equal $m_G$, the gold anchor equals the reference for every allowed $\rho_G$.

\paragraph{Relation to NFT and detached guidance.}
The moving old policy centers NFT's implicit positive/negative construction, while our fixed reference appears in the additional penalty of Eq.~\eqref{eq:nftexecutedloss}. Lemma~\ref{lem:goldanchor} characterizes a third source of anchoring supplied by gold data: attraction toward the gold posterior target, interpolated with the old field. It holds for the field-level quadratic with auxiliary inputs fixed. In the SCCFG implementation, differentiation treats the guidance correction as fixed within each update, even though it is recomputed from the current network on subsequent forwards. The lemma consequently interprets the local corrected-field regression; it supplies no monotonic-improvement guarantee for the full recurrent sampler. Group filtering changes the training population, and the information in self-conditioning differs between collection and forward-noising updates. Our benchmark comparisons assess the resulting complete procedure empirically.

\section{Evaluation protocols and uncertainty}
\label{app:experimentdetails}

\subsection{Benchmarks, checkpoints, and scoring}

\paragraph{Benchmarks and scoring.}
Each generation seed evaluates all 1,319 GSM8K test problems or all 500 MATH500 problems in fixed benchmark order. GSM8K accuracy compares extracted final answers with gold answers by numerical equality. Extraction prioritizes the last boxed number, then explicit answer markers, then the last number; normalization removes formatting and parses decimals and fractions. Math-Verify is the primary MATH scorer throughout, including the schedule comparison in Table~\ref{tab:inferenceclocks}; PRM800K scores are retained as a secondary diagnostic. Validation comparisons use the separate fixed 1,000-problem GSM8K and 500-problem MATH panels described in Appendix~\ref{app:inferenceclocks}. Checkpoint, EMA, clock, guidance, and seed choices are specified with each comparison.

\paragraph{Math checkpoints and generation settings.}
The headline and inference-allocation math evaluations use a 1,024-token canvas and async clocks $(2.5,2,1.5)$. Table~\ref{tab:trainingstages} lists the supervised and NFT endpoints. Supervised evaluations select backbone/prompt EMA $.9999/.9999$; NFT evaluations select $.99/.99$. Table~\ref{tab:headlinecomparison} uses eight sampling seeds at 64 denoising steps. Table~\ref{tab:headlinelownfe} uses 64, 32, 16, and 8 seeds at 8, 16, 32, and 64 steps, respectively. Our rows in both tables use SCCFG 2 for GSM8K and SCCFG 3 for MATH500. The inference-allocation figures retain SCCFG 2, whose 64-step MATH500 pass@1 is 18.85\% before NFT and 23.60\% afterward.

\subsection{External reports and parameter accounting}
Tables~\ref{tab:headlinecomparison}--\ref{tab:headlinelownfe} use the comparison papers' reported accuracies and sampling budgets, without re-evaluating their models. GSM8K uses 4-shot prompts for Qwen/LLaDA, 8-shot prompts for Dream/TESS~2, and zero-shot prompts for ours. FMLM+ grades Python solutions, whereas ELF-REG and CEDR produce natural-language reasoning. The low-NFE values come from FMLM+ Table~8 and ELF-REG Tables~13 and~17; ELF-REG uses early stopping \citep{posterior,scalingdlm}. The $\mathbb S$-FLM row uses its enhanced hyperspherical configuration with top-1 velocity decoding, rather than its basic sampler. MLFM's 31.24\% result uses context-corrupted guidance and online token promotion: its reported 256 sampling steps need not equal 256 network calls, so Table~\ref{tab:headlinecomparison} marks NFE as unestablished. SDAR uses four steps per four-token block. ELF-REG's reported budget includes one terminal decoder call, so the pass@1 tables mark it with $^{*}$; the higher-pass tables give the denoising count directly. Our NFE excludes prompt encoding and terminal decoding.

Parameter counts include frozen components. Our inherited Qwen vocabulary input/output maps contain 545M parameters and are called once per response; the denoising backbones contain 90.4M/638M parameters. The $(+X)$ counts include the learned prompt encoder and terminal projection, also used once per response. ELF-REG includes its frozen Qwen prompt encoder, and its reported non-decoder counts retain training-only REPA projectors. MLFM includes unmerged adapters; SDAR counts both untied vocabulary maps. These conventions describe stored parameters and invocation frequency, not equal inference cost across architectures.

\subsection{Oracle pass@k and majority voting}
\label{app:passkprotocol}

With $N$ benchmark problems, $n$ saved draws per problem, and $C_b$ correct draws for problem $b$, the oracle estimator is
\begin{equation}
 \widehat{\operatorname{pass@}k}
 =\frac{1}{N}\sum_{b=1}^{N}
 \left[1-\frac{\binom{n-C_b}{k}}{\binom{n}{k}}\right],
 \qquad \binom{n-C_b}{k}=0\quad\text{if }n-C_b<k.
\end{equation}
Each depth $S\in\{8,16,32,64\}$ has $n=512/S$ saved draws per problem, giving $64,32,16,8$ draws, respectively. Thus every available draw contributes to each supported $k$, rather than selecting one arbitrary group of $k$ seeds. At $k=n$, this becomes the fraction of problems with any correct saved answer.

\paragraph{Voting and answer equivalence.}
We use ``majority vote'' for the plurality rule, which does not require more than half the samples to agree. We average voting accuracy over uniformly selected $k$-subsets of the same draws. Numerical normalization groups GSM8K answers; the pinned mutual Math-Verify comparison groups MATH500 answers. Invalid extractions abstain, and an all-invalid subset fails. Ties among the largest answer groups are resolved uniformly, with a uniformly selected representative within the chosen group; the reported accuracy averages over these choices. Gold correctness labels score the selected representative but do not determine the selection. Equivalence partitions permit exact hypergeometric subset averaging. For nontransitive MATH comparisons, we recluster within each subset, enumerating up to 65,536 subsets and otherwise using a fixed 16,384-subset approximation. At $k=1$, voting and oracle accuracy coincide; with two valid answers, uniform tie-breaking also prevents a systematic voting advantage over a single draw.

\subsection{Uncertainty and reporting conventions}
\label{app:uncertainty}

\paragraph{Sampling-seed variation.}
For seed accuracies $A_1,\ldots,A_n$, expressed as percentages, we report the mean $\bar A$ and sample standard deviation
\[
  \operatorname{SD}_{\mathrm{seed}}=\sqrt{\frac{1}{n-1}\sum_{m=1}^{n}(A_m-\bar A)^2}.
\]
These values describe generation randomness at a fixed checkpoint and test set. They are distinct from uncertainty across test problems and from variation across training runs.

Bars in the representation learning curves are $\pm1$ seed SD; small parenthesized values in the corresponding tables report the same statistic. External baselines retain author-reported accuracies; we do not impute missing SDs.

\paragraph{Problem-bootstrap uncertainty.}
Parentheses in Tables~\ref{tab:budgetvaluesgsml}--\ref{tab:budgetvaluesgsmb} and~\ref{tab:codingstandarddetail}--\ref{tab:codingaliasdetail} report the standard deviation of the benchmark mean across 4,000 problem-bootstrap replicates, using bootstrap seed 20260918. Each replicate resamples whole problems with replacement while preserving their complete draw sets. These are bootstrap standard errors conditional on the checkpoints and saved generations, not the across-generation-seed SDs in Table~\ref{tab:headlineseedsd}. In particular, the largest-$k$ point uses the entire draw pool and does not provide repeated independent $k$-sample groups from which to estimate a seed-group SD. The bootstrap does not add training-run variability or uncertainty from the fixed voting-subset approximation. All values are rounded independently from full precision.

\paragraph{Confidence intervals and paired comparisons.}
A 95\% CI is a pointwise percentile interval from 4,000 problem-bootstrap replicates. This convention applies to the prompt-curriculum and reconstruction-probe figures. For pre/post comparisons, we resample the same problem indices in both stages and form the difference within each replicate. Pointwise 95\% percentile intervals for changes therefore use paired differences. None of these measures estimates variation across independent training runs.

\section{Additional mathematical-reasoning results}
\label{app:mathresults}

This section collects sampling-seed variation and numerical inference-allocation results for the main-body comparisons. Appendix~\ref{app:experimentdetails} defines the protocols and uncertainty measures.

\subsection{Variation across evaluation seeds}
\label{app:seedvariability}

Table~\ref{tab:headlineseedsd} reports standard deviations across independent sampling seeds for all our settings in Tables~\ref{tab:headlinecomparison} and~\ref{tab:headlinelownfe}, together with the SCCFG 2 MATH500 settings used in Figure~\ref{fig:headlinestages}. Each seed generates one solution per problem across the full fixed test set. We use every completed sampling seed at each denoising budget: $n=64,32,16,8$ for NFE $8,16,32,64$, respectively. Checkpoints, guidance, scorers, and benchmark populations match the main tables.

For the other main-body comparisons, Table~\ref{tab:inferenceclockfull} in Appendix~\ref{app:inferenceclocks} repeats Table~\ref{tab:inferenceclocks}'s two-seed schedule results with SDs; Tables~\ref{tab:codingstandarddetail}--\ref{tab:codingaliasdetail} in Appendix~\ref{app:codingdetails} give pass@1 seed SDs in their final columns for the standard and alias-scored blocks of Table~\ref{tab:codingcomparison}, respectively. Their pass@$k$ columns additionally report problem-bootstrap SEs. Table~\ref{tab:representationcurvevalues} in Appendix~\ref{app:representationclock} also includes the four-seed epoch-six summary discussed in Sections~\ref{sec:representationablations} and~\ref{sec:conditionalencoding}. These seed statistics cover our evaluations; external baselines retain the authors' reported accuracies, without imputing missing SDs.

\begin{table}[htbp]
\caption{Mean accuracy $\pm$ standard deviation across sampling seeds, in percentage points. Each seed evaluates the complete benchmark at the same fixed checkpoint. Columns use 64, 32, 16, and 8 seeds, respectively. The 64-NFE column supplies the results in Table~\ref{tab:headlinecomparison}; SCCFG 3 supplies the MATH500 rows in Tables~\ref{tab:headlinecomparison} and~\ref{tab:headlinelownfe}. SCCFG 2 MATH500 rows correspond to Figure~\ref{fig:headlinestages}.}
\label{tab:headlineseedsd}
\centering\small
\setlength{\tabcolsep}{4pt}
\begin{tabular}{@{}lrrrr@{}}
\toprule
Model / stage & 8 NFE & 16 NFE & 32 NFE & 64 NFE\\
Sampling seeds $n$ & 64 & 32 & 16 & 8\\
\midrule
\multicolumn{5}{@{}l}{\textit{GSM8K, CEDR-B, SCCFG 2}}\\
Pre-NFT & $26.40\pm0.87$ & $34.48\pm0.94$ & $37.80\pm1.20$ & $39.17\pm0.63$\\
Post-NFT & $30.08\pm0.87$ & $37.77\pm0.84$ & $41.13\pm0.94$ & $42.16\pm0.88$\\
\addlinespace
\multicolumn{5}{@{}l}{\textit{GSM8K, CEDR-L, SCCFG 2}}\\
Pre-NFT & $49.41\pm0.86$ & $55.33\pm0.91$ & $57.88\pm1.03$ & $58.67\pm0.69$\\
Post-NFT & $54.84\pm0.92$ & $60.55\pm0.69$ & $62.47\pm0.82$ & $63.74\pm0.83$\\
\addlinespace
\multicolumn{5}{@{}l}{\textit{MATH500, CEDR-L, SCCFG 3}}\\
Pre-NFT & $15.23\pm1.16$ & $17.93\pm1.30$ & $19.65\pm1.37$ & $21.18\pm0.70$\\
Post-NFT & $17.45\pm1.33$ & $20.93\pm1.26$ & $22.60\pm1.58$ & $24.68\pm1.27$\\
\addlinespace
\multicolumn{5}{@{}l}{\textit{MATH500, CEDR-L, SCCFG 2}}\\
Pre-NFT & $13.88\pm1.19$ & $16.39\pm1.20$ & $17.89\pm1.31$ & $18.85\pm1.38$\\
Post-NFT & $16.31\pm1.36$ & $20.09\pm1.23$ & $22.08\pm1.41$ & $23.60\pm1.79$\\
\addlinespace
\bottomrule
\end{tabular}
\end{table}

\subsection{CEDR-B inference allocation before and after NFT}
\label{app:elfbbudgets}

Figure~\ref{fig:elfbstages} extends the CEDR-L comparison in Figure~\ref{fig:headlinestages} to the smaller CEDR-B backbone on GSM8K. It uses supervised epoch 18 and NFT update 300, with the same asynchronous clocks and SCCFG 2. Across both model sizes, the allocation curves distinguish the coverage of parallel samples from the accuracy attainable by majority voting.

\begin{figure}[!htbp]
\centering
\includegraphics[width=.9\linewidth]{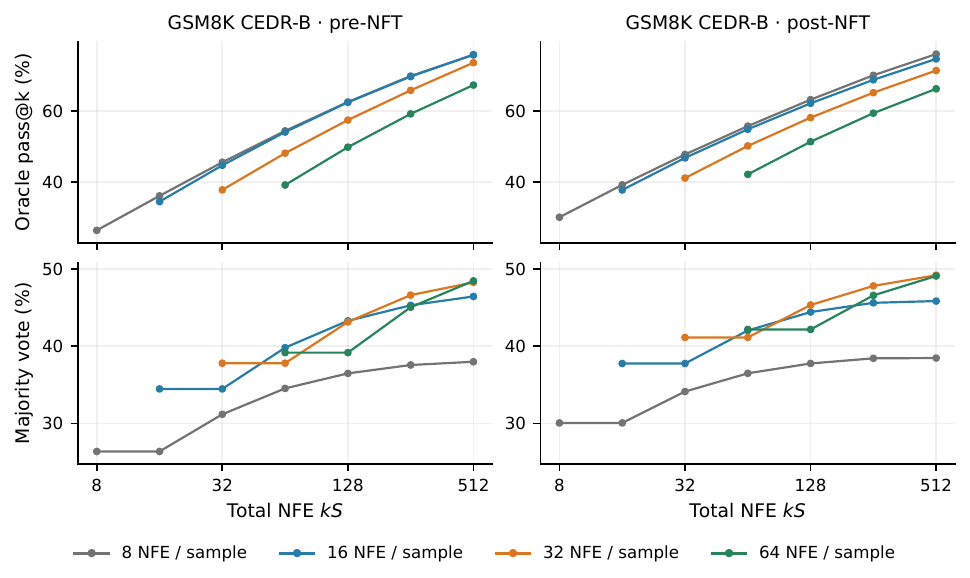}
\caption{GSM8K CEDR-B before and after NFT. Upper panels show oracle pass@$k$; lower panels show majority-vote accuracy. Colors denote denoising NFE per sample. The same 64/32/16/8 sample cohorts and total budget of 512 denoising calls are used as in Figure~\ref{fig:headlinestages}. Table~\ref{tab:budgetvaluesgsmb} gives every plotted value with problem-bootstrap SDs.}
\label{fig:elfbstages}
\end{figure}

\subsection{Numerical inference-allocation results}
\label{app:budgetvalues}

Tables~\ref{tab:budgetvaluesgsml}--\ref{tab:budgetvaluesgsmb} give every oracle and voting point in Figures~\ref{fig:headlinestages} and~\ref{fig:elfbstages}, at SCCFG 2 throughout. Each table pairs supervised and NFT checkpoints at identical denoising depths, sample counts, and ordered generation seeds. The underlying populations contain all 1,319 GSM8K or 500 MATH500 test problems.

Metrics and uncertainty are defined in Appendix~\ref{app:experimentdetails}.

\paragraph{Paired NFT improvements.}
For the 64-step, SCCFG-2 CEDR-L comparisons discussed in Section~\ref{sec:nftresults}, NFT improves pass@1 by $5.07$ points on GSM8K ($[4.26,5.90]$) and $4.75$ on MATH500 ($[3.33,6.15]$). At eight steps and 64 samples, the NFT-minus-supervised oracle differences are $-1.06$ points on GSM8K CEDR-L ($[-2.35,0.23]$) and $-0.60$ on MATH500 ($[-3.80,2.60]$). Both intervals include zero. At 64 steps and eight samples, voting improves by $1.75$ points on GSM8K CEDR-L ($[0.22,3.29]$) and $6.07$ on MATH500 ($[3.04,8.98]$). These intervals are conditional on the evaluated checkpoints and draws and are not adjusted for multiple comparisons.

\paragraph{Interpreting NFT gains across allocations.}
At eight steps on GSM8K CEDR-L, NFT raises pass@1 from 49.41\% to 54.84\%, while pass@64 changes from 89.84\% to 88.78\%. On MATH500 at the same allocation, voting rises from 24.09\% to 25.79\%; this smaller gain has paired 95\% interval $[-0.45,3.85]$. Thus voting improvements also depend on the allocation. For independent samples with problem-specific success probability $p_i$, oracle success is $1-(1-p_i)^k$: at large $k$, gains saturate on problems already solved frequently. Voting instead depends on the correct answer's frequency relative to competing answer groups. Changes in correctness probability, its distribution across problems, and competing-answer frequencies can therefore affect these metrics differently. The curves motivate this interpretation but do not by themselves establish a loss of reasoning diversity after NFT.

\begin{table}[!htbp]
\caption{GSM8K, CEDR-L: every point in Figure~\ref{fig:headlinestages}, with SCCFG 2. Entries are accuracy percentages with problem-bootstrap SDs in parentheses, not across-seed SDs (Appendix~\ref{app:uncertainty}). Each denoising depth uses $n=512/S$ saved draws per problem; $k$-subset averaging uses all $n$.}
\label{tab:budgetvaluesgsml}
\centering\small
\setlength{\tabcolsep}{5pt}
\begin{tabular}{@{}rrr rrrr@{}}
\toprule
& & & \multicolumn{2}{c}{Oracle pass@$k$} & \multicolumn{2}{c}{Majority vote} \\
\cmidrule(lr){4-5}\cmidrule(l){6-7}
NFE $S$ & $k$ & $kS$ & Pre-NFT & Post-NFT & Pre-NFT & Post-NFT \\
\midrule
8 & 1 & 8 & 49.41 {\scriptsize (1.04)} & 54.84 {\scriptsize (1.09)} & 49.41 {\scriptsize (1.04)} & 54.84 {\scriptsize (1.09)} \\
8 & 2 & 16 & 60.21 {\scriptsize (1.08)} & 63.89 {\scriptsize (1.11)} & 49.41 {\scriptsize (1.04)} & 54.84 {\scriptsize (1.09)} \\
8 & 4 & 32 & 68.84 {\scriptsize (1.05)} & 70.85 {\scriptsize (1.08)} & 56.11 {\scriptsize (1.15)} & 60.04 {\scriptsize (1.17)} \\
8 & 8 & 64 & 76.01 {\scriptsize (0.99)} & 76.53 {\scriptsize (1.02)} & 59.92 {\scriptsize (1.18)} & 62.69 {\scriptsize (1.20)} \\
8 & 16 & 128 & 81.85 {\scriptsize (0.92)} & 81.34 {\scriptsize (0.95)} & 62.05 {\scriptsize (1.21)} & 64.07 {\scriptsize (1.24)} \\
8 & 32 & 256 & 86.41 {\scriptsize (0.85)} & 85.39 {\scriptsize (0.90)} & 63.31 {\scriptsize (1.25)} & 64.76 {\scriptsize (1.27)} \\
8 & 64 & 512 & 89.84 {\scriptsize (0.83)} & 88.78 {\scriptsize (0.88)} & 64.21 {\scriptsize (1.31)} & 64.84 {\scriptsize (1.31)} \\
\midrule
16 & 1 & 16 & 55.33 {\scriptsize (1.06)} & 60.55 {\scriptsize (1.11)} & 55.33 {\scriptsize (1.06)} & 60.55 {\scriptsize (1.11)} \\
16 & 2 & 32 & 65.34 {\scriptsize (1.08)} & 68.39 {\scriptsize (1.10)} & 55.33 {\scriptsize (1.06)} & 60.55 {\scriptsize (1.11)} \\
16 & 4 & 64 & 73.08 {\scriptsize (1.04)} & 74.39 {\scriptsize (1.05)} & 61.38 {\scriptsize (1.15)} & 64.67 {\scriptsize (1.17)} \\
16 & 8 & 128 & 79.17 {\scriptsize (0.98)} & 79.34 {\scriptsize (1.00)} & 64.51 {\scriptsize (1.18)} & 66.55 {\scriptsize (1.20)} \\
16 & 16 & 256 & 84.02 {\scriptsize (0.92)} & 83.42 {\scriptsize (0.95)} & 66.09 {\scriptsize (1.22)} & 67.42 {\scriptsize (1.23)} \\
16 & 32 & 512 & 87.95 {\scriptsize (0.89)} & 86.58 {\scriptsize (0.94)} & 66.98 {\scriptsize (1.29)} & 67.92 {\scriptsize (1.27)} \\
\midrule
32 & 1 & 32 & 57.88 {\scriptsize (1.08)} & 62.47 {\scriptsize (1.10)} & 57.88 {\scriptsize (1.08)} & 62.47 {\scriptsize (1.10)} \\
32 & 2 & 64 & 67.45 {\scriptsize (1.08)} & 70.38 {\scriptsize (1.08)} & 57.88 {\scriptsize (1.08)} & 62.47 {\scriptsize (1.10)} \\
32 & 4 & 128 & 74.83 {\scriptsize (1.03)} & 76.49 {\scriptsize (1.03)} & 63.54 {\scriptsize (1.16)} & 66.45 {\scriptsize (1.16)} \\
32 & 8 & 256 & 80.73 {\scriptsize (0.99)} & 81.65 {\scriptsize (0.97)} & 66.26 {\scriptsize (1.19)} & 68.31 {\scriptsize (1.20)} \\
32 & 16 & 512 & 85.37 {\scriptsize (0.97)} & 85.90 {\scriptsize (0.96)} & 67.17 {\scriptsize (1.28)} & 69.47 {\scriptsize (1.26)} \\
\midrule
64 & 1 & 64 & 58.67 {\scriptsize (1.10)} & 63.74 {\scriptsize (1.12)} & 58.67 {\scriptsize (1.10)} & 63.74 {\scriptsize (1.12)} \\
64 & 2 & 128 & 68.17 {\scriptsize (1.11)} & 71.48 {\scriptsize (1.09)} & 58.67 {\scriptsize (1.10)} & 63.74 {\scriptsize (1.12)} \\
64 & 4 & 256 & 75.13 {\scriptsize (1.10)} & 77.34 {\scriptsize (1.06)} & 64.43 {\scriptsize (1.18)} & 67.59 {\scriptsize (1.17)} \\
64 & 8 & 512 & 80.21 {\scriptsize (1.11)} & 81.96 {\scriptsize (1.06)} & 67.09 {\scriptsize (1.26)} & 68.84 {\scriptsize (1.24)} \\
\bottomrule
\end{tabular}
\end{table}
\begin{table}[!htbp]
\caption{MATH500, CEDR-L: every point in Figure~\ref{fig:headlinestages}, with SCCFG 2. Entries are accuracy percentages with problem-bootstrap SDs in parentheses, not across-seed SDs (Appendix~\ref{app:uncertainty}). Each denoising depth uses $n=512/S$ saved draws per problem; $k$-subset averaging uses all $n$.}
\label{tab:budgetvaluesmathl}
\centering\small
\setlength{\tabcolsep}{5pt}
\begin{tabular}{@{}rrr rrrr@{}}
\toprule
& & & \multicolumn{2}{c}{Oracle pass@$k$} & \multicolumn{2}{c}{Majority vote} \\
\cmidrule(lr){4-5}\cmidrule(l){6-7}
NFE $S$ & $k$ & $kS$ & Pre-NFT & Post-NFT & Pre-NFT & Post-NFT \\
\midrule
8 & 1 & 8 & 13.88 {\scriptsize (0.99)} & 16.31 {\scriptsize (1.11)} & 13.88 {\scriptsize (0.99)} & 16.31 {\scriptsize (1.11)} \\
8 & 2 & 16 & 21.10 {\scriptsize (1.29)} & 23.98 {\scriptsize (1.40)} & 13.88 {\scriptsize (0.99)} & 16.31 {\scriptsize (1.11)} \\
8 & 4 & 32 & 29.74 {\scriptsize (1.55)} & 32.65 {\scriptsize (1.63)} & 16.87 {\scriptsize (1.24)} & 19.64 {\scriptsize (1.37)} \\
8 & 8 & 64 & 39.29 {\scriptsize (1.73)} & 41.82 {\scriptsize (1.79)} & 20.00 {\scriptsize (1.44)} & 22.68 {\scriptsize (1.56)} \\
8 & 16 & 128 & 49.27 {\scriptsize (1.86)} & 50.94 {\scriptsize (1.90)} & 22.23 {\scriptsize (1.58)} & 24.59 {\scriptsize (1.69)} \\
8 & 32 & 256 & 58.92 {\scriptsize (1.96)} & 59.37 {\scriptsize (1.98)} & 23.50 {\scriptsize (1.69)} & 25.59 {\scriptsize (1.80)} \\
8 & 64 & 512 & 67.60 {\scriptsize (2.10)} & 67.00 {\scriptsize (2.12)} & 24.09 {\scriptsize (1.87)} & 25.79 {\scriptsize (1.93)} \\
\midrule
16 & 1 & 16 & 16.39 {\scriptsize (1.11)} & 20.09 {\scriptsize (1.26)} & 16.39 {\scriptsize (1.11)} & 20.09 {\scriptsize (1.26)} \\
16 & 2 & 32 & 24.35 {\scriptsize (1.42)} & 28.68 {\scriptsize (1.55)} & 16.40 {\scriptsize (1.11)} & 20.10 {\scriptsize (1.26)} \\
16 & 4 & 64 & 33.40 {\scriptsize (1.68)} & 37.94 {\scriptsize (1.75)} & 19.90 {\scriptsize (1.37)} & 24.03 {\scriptsize (1.52)} \\
16 & 8 & 128 & 42.73 {\scriptsize (1.87)} & 47.25 {\scriptsize (1.89)} & 23.41 {\scriptsize (1.57)} & 27.66 {\scriptsize (1.70)} \\
16 & 16 & 256 & 51.86 {\scriptsize (2.01)} & 56.27 {\scriptsize (1.99)} & 26.21 {\scriptsize (1.74)} & 30.15 {\scriptsize (1.85)} \\
16 & 32 & 512 & 60.20 {\scriptsize (2.20)} & 64.80 {\scriptsize (2.15)} & 29.05 {\scriptsize (2.00)} & 31.39 {\scriptsize (2.03)} \\
\midrule
32 & 1 & 32 & 17.89 {\scriptsize (1.18)} & 22.08 {\scriptsize (1.33)} & 17.89 {\scriptsize (1.18)} & 22.08 {\scriptsize (1.33)} \\
32 & 2 & 64 & 26.31 {\scriptsize (1.48)} & 31.24 {\scriptsize (1.61)} & 17.89 {\scriptsize (1.18)} & 22.08 {\scriptsize (1.33)} \\
32 & 4 & 128 & 36.02 {\scriptsize (1.74)} & 40.81 {\scriptsize (1.81)} & 21.56 {\scriptsize (1.45)} & 26.48 {\scriptsize (1.61)} \\
32 & 8 & 256 & 46.17 {\scriptsize (1.96)} & 50.42 {\scriptsize (1.93)} & 25.54 {\scriptsize (1.69)} & 30.26 {\scriptsize (1.83)} \\
32 & 16 & 512 & 56.20 {\scriptsize (2.21)} & 60.40 {\scriptsize (2.15)} & 28.68 {\scriptsize (1.97)} & 32.42 {\scriptsize (2.04)} \\
\midrule
64 & 1 & 64 & 18.85 {\scriptsize (1.26)} & 23.60 {\scriptsize (1.42)} & 18.85 {\scriptsize (1.26)} & 23.60 {\scriptsize (1.42)} \\
64 & 2 & 128 & 27.50 {\scriptsize (1.60)} & 33.00 {\scriptsize (1.71)} & 18.85 {\scriptsize (1.26)} & 23.60 {\scriptsize (1.42)} \\
64 & 4 & 256 & 37.13 {\scriptsize (1.89)} & 43.07 {\scriptsize (1.96)} & 22.70 {\scriptsize (1.57)} & 28.03 {\scriptsize (1.70)} \\
64 & 8 & 512 & 47.20 {\scriptsize (2.23)} & 52.60 {\scriptsize (2.24)} & 26.34 {\scriptsize (1.91)} & 32.42 {\scriptsize (2.03)} \\
\bottomrule
\end{tabular}
\end{table}
\begin{table}[!htbp]
\caption{GSM8K, CEDR-B: every point in Figure~\ref{fig:elfbstages}, with SCCFG 2. Entries are accuracy percentages with problem-bootstrap SDs in parentheses, not across-seed SDs (Appendix~\ref{app:uncertainty}). Each denoising depth uses $n=512/S$ saved draws per problem; $k$-subset averaging uses all $n$.}
\label{tab:budgetvaluesgsmb}
\centering\small
\setlength{\tabcolsep}{5pt}
\begin{tabular}{@{}rrr rrrr@{}}
\toprule
& & & \multicolumn{2}{c}{Oracle pass@$k$} & \multicolumn{2}{c}{Majority vote} \\
\cmidrule(lr){4-5}\cmidrule(l){6-7}
NFE $S$ & $k$ & $kS$ & Pre-NFT & Post-NFT & Pre-NFT & Post-NFT \\
\midrule
8 & 1 & 8 & 26.40 {\scriptsize (0.86)} & 30.08 {\scriptsize (0.95)} & 26.40 {\scriptsize (0.86)} & 30.08 {\scriptsize (0.95)} \\
8 & 2 & 16 & 36.10 {\scriptsize (1.02)} & 39.21 {\scriptsize (1.08)} & 26.40 {\scriptsize (0.86)} & 30.08 {\scriptsize (0.95)} \\
8 & 4 & 32 & 45.58 {\scriptsize (1.11)} & 47.77 {\scriptsize (1.14)} & 31.20 {\scriptsize (1.02)} & 34.14 {\scriptsize (1.09)} \\
8 & 8 & 64 & 54.40 {\scriptsize (1.15)} & 55.75 {\scriptsize (1.16)} & 34.55 {\scriptsize (1.11)} & 36.50 {\scriptsize (1.16)} \\
8 & 16 & 128 & 62.54 {\scriptsize (1.15)} & 63.20 {\scriptsize (1.15)} & 36.49 {\scriptsize (1.18)} & 37.77 {\scriptsize (1.22)} \\
8 & 32 & 256 & 69.86 {\scriptsize (1.15)} & 70.07 {\scriptsize (1.14)} & 37.57 {\scriptsize (1.24)} & 38.43 {\scriptsize (1.27)} \\
8 & 64 & 512 & 75.82 {\scriptsize (1.17)} & 76.04 {\scriptsize (1.16)} & 37.99 {\scriptsize (1.32)} & 38.47 {\scriptsize (1.33)} \\
\midrule
16 & 1 & 16 & 34.48 {\scriptsize (0.98)} & 37.77 {\scriptsize (1.05)} & 34.48 {\scriptsize (0.98)} & 37.77 {\scriptsize (1.05)} \\
16 & 2 & 32 & 44.71 {\scriptsize (1.09)} & 46.80 {\scriptsize (1.14)} & 34.48 {\scriptsize (0.98)} & 37.77 {\scriptsize (1.05)} \\
16 & 4 & 64 & 54.08 {\scriptsize (1.13)} & 54.86 {\scriptsize (1.17)} & 39.81 {\scriptsize (1.12)} & 42.03 {\scriptsize (1.17)} \\
16 & 8 & 128 & 62.46 {\scriptsize (1.14)} & 62.17 {\scriptsize (1.17)} & 43.28 {\scriptsize (1.19)} & 44.41 {\scriptsize (1.23)} \\
16 & 16 & 256 & 69.71 {\scriptsize (1.13)} & 68.79 {\scriptsize (1.16)} & 45.29 {\scriptsize (1.26)} & 45.61 {\scriptsize (1.29)} \\
16 & 32 & 512 & 75.89 {\scriptsize (1.16)} & 74.68 {\scriptsize (1.19)} & 46.42 {\scriptsize (1.36)} & 45.83 {\scriptsize (1.36)} \\
\midrule
32 & 1 & 32 & 37.80 {\scriptsize (1.01)} & 41.13 {\scriptsize (1.09)} & 37.80 {\scriptsize (1.01)} & 41.13 {\scriptsize (1.09)} \\
32 & 2 & 64 & 48.13 {\scriptsize (1.11)} & 50.16 {\scriptsize (1.16)} & 37.80 {\scriptsize (1.01)} & 41.13 {\scriptsize (1.09)} \\
32 & 4 & 128 & 57.46 {\scriptsize (1.14)} & 58.15 {\scriptsize (1.19)} & 43.14 {\scriptsize (1.14)} & 45.32 {\scriptsize (1.19)} \\
32 & 8 & 256 & 65.84 {\scriptsize (1.15)} & 65.17 {\scriptsize (1.20)} & 46.60 {\scriptsize (1.23)} & 47.79 {\scriptsize (1.26)} \\
32 & 16 & 512 & 73.62 {\scriptsize (1.20)} & 71.42 {\scriptsize (1.24)} & 48.24 {\scriptsize (1.33)} & 49.17 {\scriptsize (1.35)} \\
\midrule
64 & 1 & 64 & 39.17 {\scriptsize (1.04)} & 42.16 {\scriptsize (1.11)} & 39.17 {\scriptsize (1.04)} & 42.16 {\scriptsize (1.11)} \\
64 & 2 & 128 & 49.82 {\scriptsize (1.15)} & 51.36 {\scriptsize (1.19)} & 39.17 {\scriptsize (1.04)} & 42.16 {\scriptsize (1.11)} \\
64 & 4 & 256 & 59.20 {\scriptsize (1.21)} & 59.40 {\scriptsize (1.24)} & 45.04 {\scriptsize (1.19)} & 46.56 {\scriptsize (1.22)} \\
64 & 8 & 512 & 67.32 {\scriptsize (1.29)} & 66.26 {\scriptsize (1.32)} & 48.44 {\scriptsize (1.32)} & 49.07 {\scriptsize (1.34)} \\
\bottomrule
\end{tabular}
\end{table}

\section{Code generation: protocols and detailed results}
\label{app:codingdetails}

\paragraph{Training stages and checkpoint selectors.}
Appendix~\ref{app:training} describes the coding architecture, three supervised stages, and NFT continuation; Appendix~\ref{app:codingtraining} gives the supervised joint prompt-MSE regularizer. Supervised evaluations select backbone EMA $0.9999$, with prompt EMA $0.999$ for standalone MSE and $0.9999$ for joint epoch 13. Post-NFT uses update 100 with independently selected backbone/prompt EMA $0.9/0.9$. Frozen-Qwen and standalone-MSE conditions share the epoch-12 backbone; joint training and NFT each update both trainable models. The coding checkpoint-selection procedure is given in Appendix~\ref{app:nfttraining}.

\paragraph{Generation and scoring.}
We use the benchmark task text in the native Qwen chat format, without assistant prefill, with batch size 32, CFG 1, SCCFG 3, and async $(2.5,2,1.5)$ throughout. Answers fill the remaining positions of the 1,024-token canvas and are decoded greedily. Denoising NFE excludes one terminal decoder call. The eight-seed cohort is $\{42,123,456,789,2026,31415,27182,16180\}$. The shorter training curve uses 32 steps and its stated two- or four-seed cohorts.

HumanEval \citep{humaneval} contains 164 tasks; MBPP \citep{mbpp} uses the 378 tasks retained by EvalPlus \citep{evalplus}. We report base and full extended tests on the same generated programs, requiring both suites to pass for a plus score. Standard scoring applies native EvalPlus sanitization without function-name repair. Execution uses a minimum timeout of 4 seconds, reference-runtime multiplier four, and 4 GiB per program; failures and timeouts count as incorrect. Our separate alias condition changes extraction only: if the expected binding is absent and exactly one top-level function is present, append \texttt{expected\_name = generated\_name} and sanitize again. Existing bindings and ambiguous cases are unchanged. This changes which programs the evaluator accepts without changing generation.

\paragraph{Mapping external reports.}
Our PlaidQ comparison uses the author-reported results from arXiv version 1, subsequently withdrawn by the authors to address institutional resource-disclosure requirements \citep{plaidq}. PlaidQ Appendix B.7 distinguishes original MBPP-500 from the 378-task subset evaluated using base assertions; its latter column is named ``MBPP+'' \citep{plaidq}. We map only that latter column to our MBPP-378 base metric. PlaidQ uses prefix completion with a 128-token response budget, benchmark-specific extraction, and a 15-second program timeout. Its Table 1 supplies our pass@1 and pass@10 comparison values. ELF-REG Tables 1 and 16 supply its aliased HumanEval, HumanEval+, and MBPP-378 base pass@1 and higher-pass results \citep{scalingdlm}; its Appendix B.3 documents the alias rule and larger execution limits (20-second minimum, reference multiplier ten, 6 GiB). Our alias rows match the entry-point repair and test-set definitions, while retaining our prompts and execution limits. These comparisons align scoring categories, rather than reproducing each paper's full generation protocol. Full extended MBPP+ values are not reported for either comparison paper. We retain their reported values without re-evaluating their models.

For additional discrete-model context, oDLM-0.6B reports HumanEval/HumanEval+ pass@1 of 17.87\%/16.40\% at 128 sampling steps with a 128-token response budget \citep[Table 1 and Appendix B.7]{odlm}. Edit Flow-1.3B reports 12.8\%/10.4\%, and its localized variant reports 14.0\%/10.4\% \citep[Table 3]{editflow}. Our learned-conditioning results without name repair are 27.97\%/25.30\% before joint training, 29.57\%/26.91\% at epoch 13, and 32.85\%/30.18\% after NFT. These comparisons use the authors' reported accuracies; training data, initialization, and sampling protocols differ. We restrict the additional comparison to HumanEval(+) rather than equating differently defined MBPP columns.

\paragraph{Comparison under function-name aliasing.}
The lower block of Table~\ref{tab:codingcomparison} separates the ELF-REG-L comparison from standard scoring in its upper block. Under the same alias rule, all four CEDR-L variants exceed ELF-REG-L on all three reported metrics. For NFT update 100, aliasing raises MBPP-378 base accuracy from 22.26\% to 38.96\%, exceeding ELF-REG-L's 28.92\%; without repair, it remains below guided PlaidQ's 24.58\%. Mixing repaired and unrepaired scores would therefore obscure a substantial difference in acceptance criteria.

\paragraph{Prompt lengths and encoder size.}
The recorded mean prefix lengths over unique training prompts are 84.17 tokens for the GSM8K pipeline (275,661 prompts), 89.15 for MATH (508,953), and 233.44 for OpenCodeInstruct (4,772,508). These are native formatted training prefixes, including the assistant header, rather than benchmark question lengths. Code prefixes average 2.77 and 2.62 times the GSM8K and MATH lengths; their median is 216 and 99th percentile 506. The length difference motivates the conditioning hypothesis but is not a controlled length ablation. Likewise, the cross-task prompt-swap comparison uses each task's selected MSE checkpoint (GSM8K CEDR-L epoch 30, MATH epoch 10, code epoch 1) and the stated seed cohorts. At 32 steps, the code MBPP+ swap loss is 24.74 percentage points, compared with 12.02 points on GSM8K CEDR-L and 2.90 on MATH500. The 121,333,180 parameters of our CEDR-L contextual encoder exclude the frozen $151{,}936\times2{,}560$ vocabulary lookup. ELF-REG's Qwen3-0.6B-Base contextual body has approximately 440M parameters, excluding its approximately 156M lookup; its additional Qwen3-1.7B-Base alignment teacher is used only during training. Our frozen-Qwen reference instead uses a larger 4B teacher, so the cross-paper comparison does not isolate encoder size.

\paragraph{Training trajectory and prompt substitution.}
Table~\ref{tab:codingstages} keeps NFE and guidance fixed across flow-training epochs 4/8/12 and the prompt-MSE substitution at the fixed epoch-12 backbone checkpoint. The prompt epoch counter starts afresh; prompt epoch 1 is already MSE-trained, not a random encoder. At 32 steps, the four-seed HumanEval+ gap between prompt epoch 10 and original Qwen is $-0.91$ points (paired 95\% problem-bootstrap interval $[-5.03,3.20]$); MBPP+ retains a $-17.20$ point gap ($[-20.24,-14.22]$). These comparisons isolate the conditioner because the backbone weights are identical.

\begin{table}[htbp]
\centering\small
\setlength{\tabcolsep}{4pt}
\caption{\textbf{Coding training and prompt-substitution results at 32 denoising steps.} Mean pass@1 (\%) with generation-seed SD in parentheses; no aliases. Flow-training epochs 4/8 use seeds 42/123; epoch 12 and both prompt endpoints add 456/789. All rows use SCCFG 3. The prompt-MSE rows retain the same epoch-12 backbone weights.}
\label{tab:codingstages}
\begin{tabular}{@{}lrrrrr@{}}
\toprule
Checkpoint / conditioner & Seeds & HE & HE+ & \shortstack{MBPP-378\\base} & MBPP+\\
\midrule
Epoch 4 / Qwen & 2 & 23.17 {\scriptsize (3.45)} & 21.04 {\scriptsize (2.16)} & 26.85 {\scriptsize (0.19)} & 23.68 {\scriptsize (0.19)}\\
Epoch 8 / Qwen & 2 & 27.74 {\scriptsize (6.47)} & 25.61 {\scriptsize (5.17)} & 34.92 {\scriptsize (2.24)} & 30.42 {\scriptsize (1.87)}\\
Epoch 12 / Qwen & 4 & 27.90 {\scriptsize (1.75)} & 25.61 {\scriptsize (2.49)} & 37.76 {\scriptsize (0.70)} & 32.34 {\scriptsize (0.79)}\\
Epoch 12 / prompt MSE 1 & 4 & 19.82 {\scriptsize (2.08)} & 18.60 {\scriptsize (2.31)} & 8.93 {\scriptsize (1.02)} & 7.61 {\scriptsize (1.23)}\\
Epoch 12 / prompt MSE 10 & 4 & 26.37 {\scriptsize (1.44)} & 24.70 {\scriptsize (2.08)} & 17.59 {\scriptsize (2.61)} & 15.15 {\scriptsize (2.16)}\\
\bottomrule
\end{tabular}
\end{table}

\paragraph{Eight-seed variation and multiple attempts.}
Tables~\ref{tab:codingstandarddetail}--\ref{tab:codingaliasdetail} give oracle pass@1/2/4/8 with problem-bootstrap standard errors in parentheses, and retain the pass@1 generation-seed SD in a separate column. For each problem with $c$ successful samples among eight, pass@$k$ uses $1-\binom{8-c}{k}/\binom{8}{k}$, averaged across problems. Eight samples do not support estimating pass@10. Table~\ref{tab:codinghigherpass} therefore compares our pass@8 with the authors' reported pass@8 or pass@10, explicitly retaining the differing sample counts; Table~\ref{tab:codingcomparison} compares pass@1. Bootstrap uncertainty uses 4,000 problem resamples with seed 20260918, retaining all eight generations per problem and pairing compared conditions. The bootstrap SE measures uncertainty across problems conditional on the saved generations; the seed SD describes sampling variation over the fixed benchmark. Neither estimates training variability (Appendix~\ref{app:uncertainty}).

Before joint training, at 128 steps, aliasing improves MSE-prompt MBPP-378 base by $11.38$ points (95\% interval $[9.19,13.69]$), compared with $4.73$ ($[3.41,6.18]$) for frozen Qwen. With the MSE encoder, full MBPP+ improves by $9.36$ points under repair, but remains $11.61$ points below repaired frozen-Qwen performance ($[9.13,14.25]$). Thus entry-point naming accounts for part of the measured conditioning gap; correcting it does not recover the teacher-conditioned result. Standard HumanEval+ also retains a $5.56$ point gap at this longer budget ($[1.75,9.38]$), so the near-recovery at 32 steps should not be interpreted as equivalence at every inference budget.

\begin{table}[htbp]
\centering\small
\setlength{\tabcolsep}{3pt}
\caption{\textbf{Standard scoring without aliases.} CEDR-L with 128 denoising calls, SCCFG 3, eight seeds. Frozen Qwen and prompt MSE use epoch 12; pre-NFT is joint epoch 13; post-NFT is update 100. Pass@$k$ entries are accuracies (\%) with problem-bootstrap standard errors in parentheses; the final column gives the separate generation-seed SD of pass@1. Both uncertainty measures are in percentage points (Appendix~\ref{app:uncertainty}). Higher-$k$ values average all seed subsets.}
\label{tab:codingstandarddetail}
\begin{tabular}{@{}llrrrrr@{}}
\toprule
Stage & Benchmark & pass@1 & pass@2 & pass@4 & pass@8 & \shortstack{pass@1\\seed SD}\\
\midrule
Frozen Qwen & HumanEval & 33.69 {\scriptsize (2.78)} & 44.66 {\scriptsize (3.21)} & 54.13 {\scriptsize (3.46)} & 62.20 {\scriptsize (3.75)} & 2.11\\
Frozen Qwen & HumanEval+ & 30.87 {\scriptsize (2.70)} & 41.53 {\scriptsize (3.18)} & 50.82 {\scriptsize (3.45)} & 59.15 {\scriptsize (3.77)} & 2.09\\
Frozen Qwen & MBPP-378 base & 38.72 {\scriptsize (2.01)} & 49.17 {\scriptsize (2.21)} & 58.26 {\scriptsize (2.32)} & 66.40 {\scriptsize (2.48)} & 1.37\\
Frozen Qwen & MBPP+ & 33.99 {\scriptsize (2.00)} & 43.26 {\scriptsize (2.25)} & 51.46 {\scriptsize (2.40)} & 58.73 {\scriptsize (2.60)} & 1.62\\
\midrule
Prompt MSE & HumanEval & 27.97 {\scriptsize (2.84)} & 35.98 {\scriptsize (3.27)} & 42.87 {\scriptsize (3.58)} & 48.78 {\scriptsize (3.90)} & 1.44\\
Prompt MSE & HumanEval+ & 25.30 {\scriptsize (2.81)} & 32.19 {\scriptsize (3.25)} & 37.65 {\scriptsize (3.55)} & 42.07 {\scriptsize (3.85)} & 1.84\\
Prompt MSE & MBPP-378 base & 19.61 {\scriptsize (1.54)} & 27.44 {\scriptsize (1.90)} & 35.41 {\scriptsize (2.17)} & 43.92 {\scriptsize (2.53)} & 1.85\\
Prompt MSE & MBPP+ & 16.96 {\scriptsize (1.47)} & 23.72 {\scriptsize (1.84)} & 30.30 {\scriptsize (2.14)} & 36.77 {\scriptsize (2.48)} & 1.11\\
\midrule
Pre-NFT & HumanEval & 29.57 {\scriptsize (2.74)} & 39.07 {\scriptsize (3.23)} & 47.23 {\scriptsize (3.55)} & 54.27 {\scriptsize (3.87)} & 2.33\\
Pre-NFT & HumanEval+ & 26.91 {\scriptsize (2.73)} & 35.26 {\scriptsize (3.22)} & 42.22 {\scriptsize (3.58)} & 47.56 {\scriptsize (3.92)} & 1.99\\
Pre-NFT & MBPP-378 base & 18.45 {\scriptsize (1.53)} & 25.77 {\scriptsize (1.90)} & 32.95 {\scriptsize (2.20)} & 39.95 {\scriptsize (2.53)} & 1.23\\
Pre-NFT & MBPP+ & 16.40 {\scriptsize (1.45)} & 23.20 {\scriptsize (1.84)} & 29.81 {\scriptsize (2.15)} & 35.98 {\scriptsize (2.48)} & 1.30\\
\midrule
Post-NFT & HumanEval & 32.85 {\scriptsize (2.97)} & 41.59 {\scriptsize (3.35)} & 49.08 {\scriptsize (3.65)} & 54.27 {\scriptsize (3.89)} & 1.91\\
Post-NFT & HumanEval+ & 30.18 {\scriptsize (2.97)} & 37.83 {\scriptsize (3.33)} & 44.48 {\scriptsize (3.64)} & 49.39 {\scriptsize (3.92)} & 1.79\\
Post-NFT & MBPP-378 base & 22.26 {\scriptsize (1.71)} & 29.62 {\scriptsize (2.02)} & 36.76 {\scriptsize (2.27)} & 43.39 {\scriptsize (2.54)} & 1.50\\
Post-NFT & MBPP+ & 19.61 {\scriptsize (1.63)} & 26.31 {\scriptsize (1.96)} & 32.54 {\scriptsize (2.24)} & 37.83 {\scriptsize (2.49)} & 1.36\\
\bottomrule
\end{tabular}
\end{table}

\begin{table}[htbp]
\centering\small
\setlength{\tabcolsep}{3pt}
\caption{\textbf{Scoring with single-function aliases.} CEDR-L with 128 denoising calls, SCCFG 3, eight seeds. Frozen Qwen and prompt MSE use epoch 12; pre-NFT is joint epoch 13; post-NFT is update 100. Pass@$k$ entries are accuracies (\%) with problem-bootstrap standard errors in parentheses; the final column gives the separate generation-seed SD of pass@1. Both uncertainty measures are in percentage points (Appendix~\ref{app:uncertainty}). Higher-$k$ values average all seed subsets.}
\label{tab:codingaliasdetail}
\begin{tabular}{@{}llrrrrr@{}}
\toprule
Stage & Benchmark & pass@1 & pass@2 & pass@4 & pass@8 & \shortstack{pass@1\\seed SD}\\
\midrule
Frozen Qwen & HumanEval & 33.69 {\scriptsize (2.78)} & 44.66 {\scriptsize (3.21)} & 54.13 {\scriptsize (3.46)} & 62.20 {\scriptsize (3.75)} & 2.11\\
Frozen Qwen & HumanEval+ & 30.87 {\scriptsize (2.70)} & 41.53 {\scriptsize (3.18)} & 50.82 {\scriptsize (3.45)} & 59.15 {\scriptsize (3.77)} & 2.09\\
Frozen Qwen & MBPP-378 base & 43.45 {\scriptsize (2.06)} & 53.97 {\scriptsize (2.22)} & 62.90 {\scriptsize (2.28)} & 70.63 {\scriptsize (2.39)} & 1.21\\
Frozen Qwen & MBPP+ & 37.93 {\scriptsize (2.07)} & 47.21 {\scriptsize (2.28)} & 55.07 {\scriptsize (2.40)} & 61.90 {\scriptsize (2.55)} & 1.43\\
\midrule
Prompt MSE & HumanEval & 28.05 {\scriptsize (2.84)} & 36.11 {\scriptsize (3.27)} & 43.05 {\scriptsize (3.59)} & 48.78 {\scriptsize (3.90)} & 1.38\\
Prompt MSE & HumanEval+ & 25.38 {\scriptsize (2.81)} & 32.32 {\scriptsize (3.25)} & 37.82 {\scriptsize (3.56)} & 42.07 {\scriptsize (3.85)} & 1.78\\
Prompt MSE & MBPP-378 base & 30.99 {\scriptsize (1.88)} & 40.54 {\scriptsize (2.18)} & 48.72 {\scriptsize (2.37)} & 55.82 {\scriptsize (2.57)} & 1.48\\
Prompt MSE & MBPP+ & 26.32 {\scriptsize (1.81)} & 34.45 {\scriptsize (2.14)} & 41.41 {\scriptsize (2.38)} & 47.09 {\scriptsize (2.61)} & 1.54\\
\midrule
Pre-NFT & HumanEval & 29.73 {\scriptsize (2.76)} & 39.16 {\scriptsize (3.23)} & 47.24 {\scriptsize (3.55)} & 54.27 {\scriptsize (3.87)} & 2.34\\
Pre-NFT & HumanEval+ & 27.06 {\scriptsize (2.74)} & 35.39 {\scriptsize (3.23)} & 42.31 {\scriptsize (3.59)} & 47.56 {\scriptsize (3.92)} & 2.14\\
Pre-NFT & MBPP-378 base & 32.11 {\scriptsize (1.91)} & 41.66 {\scriptsize (2.20)} & 49.82 {\scriptsize (2.37)} & 57.14 {\scriptsize (2.57)} & 1.72\\
Pre-NFT & MBPP+ & 27.81 {\scriptsize (1.86)} & 36.19 {\scriptsize (2.18)} & 43.16 {\scriptsize (2.39)} & 49.21 {\scriptsize (2.60)} & 1.88\\
\midrule
Post-NFT & HumanEval & 33.16 {\scriptsize (2.97)} & 42.03 {\scriptsize (3.34)} & 49.64 {\scriptsize (3.64)} & 54.88 {\scriptsize (3.88)} & 2.06\\
Post-NFT & HumanEval+ & 30.26 {\scriptsize (2.97)} & 37.87 {\scriptsize (3.34)} & 44.48 {\scriptsize (3.64)} & 49.39 {\scriptsize (3.92)} & 1.75\\
Post-NFT & MBPP-378 base & 38.96 {\scriptsize (2.10)} & 47.68 {\scriptsize (2.32)} & 54.42 {\scriptsize (2.42)} & 60.32 {\scriptsize (2.55)} & 1.60\\
Post-NFT & MBPP+ & 33.76 {\scriptsize (2.07)} & 41.21 {\scriptsize (2.33)} & 46.55 {\scriptsize (2.47)} & 50.79 {\scriptsize (2.61)} & 1.43\\
\bottomrule
\end{tabular}
\end{table}

\paragraph{One epoch of joint adaptation.}
Table~\ref{tab:codingjointdeltas} compares joint epoch 13 with the epoch-12 backbone and prompt-MSE-epoch-10 encoder, matching all eight seeds, 128 denoising steps, guidance, prompts, and scoring. Both learned components change, and the prompt EMA selector changes from $0.999$ to $0.9999$; this compares the reported training stages rather than isolating the MSE regularizer's effect. Standard HumanEval(+) pass@1 rises by 1.60 points, with paired intervals including zero. HumanEval+ pass@8 rises by 5.49 points, while standard MBPP-378 base pass@8 falls by 3.97 points. Under alias scoring, MBPP+ pass@1 rises by 1.49 points; under standard scoring it falls by 0.56 points. These differing outcomes reinforce the need to separate entry-point repair from the full coding task.

\begin{table}[htbp]
\centering\small
\setlength{\tabcolsep}{4pt}
\caption{\textbf{Joint epoch 13 minus pre-joint prompt MSE.} Differences in percentage points at 128 denoising calls and SCCFG 3, using the same eight seeds. Brackets give paired 95\% problem-bootstrap intervals from 4,000 replicates. Both models use backbone EMA $0.9999$; prompt EMA is $0.999$ before joint training and $0.9999$ afterward. Differences are computed before rounding the stage accuracies.}
\label{tab:codingjointdeltas}
\begin{tabular}{@{}llrr@{}}
\toprule
Scoring & Benchmark & $\Delta$ pass@1 [95\% CI] & $\Delta$ pass@8 [95\% CI]\\
\midrule
Standard & HumanEval & $+1.60$ {\scriptsize $[-0.61,+3.73]$} & $+5.49$ {\scriptsize $[+0.00,+11.59]$}\\
Standard & HumanEval+ & $+1.60$ {\scriptsize $[-0.53,+3.58]$} & $+5.49$ {\scriptsize $[+0.61,+10.98]$}\\
Standard & MBPP-378 base & $-1.16$ {\scriptsize $[-2.38,+0.03]$} & $-3.97$ {\scriptsize $[-7.67,-0.53]$}\\
Standard & MBPP+ & $-0.56$ {\scriptsize $[-1.69,+0.53]$} & $-0.79$ {\scriptsize $[-4.23,+2.38]$}\\
\midrule
Alias & HumanEval & $+1.68$ {\scriptsize $[-0.53,+3.81]$} & $+5.49$ {\scriptsize $[+0.00,+11.59]$}\\
Alias & HumanEval+ & $+1.68$ {\scriptsize $[-0.46,+3.73]$} & $+5.49$ {\scriptsize $[+0.61,+10.98]$}\\
Alias & MBPP-378 base & $+1.12$ {\scriptsize $[-0.33,+2.65]$} & $+1.32$ {\scriptsize $[-2.38,+4.76]$}\\
Alias & MBPP+ & $+1.49$ {\scriptsize $[+0.17,+2.88]$} & $+2.12$ {\scriptsize $[-1.06,+5.03]$}\\
\bottomrule
\end{tabular}
\end{table}

\paragraph{NFT with execution-based rewards.}
Table~\ref{tab:codingnftdeltas} compares NFT update 100 against joint epoch 13 under matched prompts, ordered seeds, 128 denoising calls, SCCFG 3, arithmetic, and scorers. Backbone/prompt evaluation EMAs change from $0.9999/0.9999$ to $0.9/0.9$, so this comparison measures the reported training-stage change, including its EMA selection. Standard pass@1 improves on all four benchmarks, with positive paired intervals: HumanEval and HumanEval+ gain $3.28$ points, MBPP-378 base $3.80$, and MBPP+ $3.21$. Under alias scoring, the corresponding gains are $3.43$, $3.20$, $6.85$, and $5.95$ points. Aliasing remains a separate evaluation convention, rather than part of standard scoring.

Higher-pass gains are less pronounced. Standard HumanEval pass@8 remains $54.27\%$, and HumanEval+ rises from $47.56\%$ to $49.39\%$; neither paired interval excludes zero. The MBPP pass@8 intervals also include zero, including the base-test interval whose lower endpoint rounds to zero. This mirrors the mathematics results: NFT improves single-sample reliability more clearly than oracle coverage, without establishing a decrease in diversity. HumanEval(+) pass@1 approaches the frozen-Qwen reference, but a substantial MBPP gap remains. These are conditional comparisons of fixed checkpoints and saved draws, not estimates of training-run or checkpoint-selection uncertainty.

\begin{table}[htbp]
\centering\small
\setlength{\tabcolsep}{4pt}
\caption{\textbf{NFT update 100 minus joint epoch 13.} Differences in percentage points with paired 95\% problem-bootstrap intervals, over eight matched seeds at 128 denoising calls and SCCFG 3. Backbone/prompt EMAs are $0.9999/0.9999$ before NFT and $0.9/0.9$ afterward. Intervals use 4,000 problem resamples and condition on the selected endpoints.}
\label{tab:codingnftdeltas}
\begin{tabular}{@{}llrr@{}}
\toprule
Scoring & Benchmark & $\Delta$ pass@1 [95\% CI] & $\Delta$ pass@8 [95\% CI]\\
\midrule
Standard & HumanEval & $+3.28$ {\scriptsize $[+1.22,+5.34]$} & $+0.00$ {\scriptsize $[-5.49,+5.49]$}\\
Standard & HumanEval+ & $+3.28$ {\scriptsize $[+1.30,+5.34]$} & $+1.83$ {\scriptsize $[-3.05,+6.71]$}\\
Standard & MBPP-378 base & $+3.80$ {\scriptsize $[+2.61,+5.09]$} & $+3.44$ {\scriptsize $[+0.00,+7.14]$}\\
Standard & MBPP+ & $+3.21$ {\scriptsize $[+2.05,+4.43]$} & $+1.85$ {\scriptsize $[-1.32,+5.03]$}\\
\midrule
Alias & HumanEval & $+3.43$ {\scriptsize $[+1.30,+5.56]$} & $+0.61$ {\scriptsize $[-4.27,+6.10]$}\\
Alias & HumanEval+ & $+3.20$ {\scriptsize $[+1.14,+5.34]$} & $+1.83$ {\scriptsize $[-3.05,+6.71]$}\\
Alias & MBPP-378 base & $+6.85$ {\scriptsize $[+5.22,+8.40]$} & $+3.17$ {\scriptsize $[+0.00,+6.61]$}\\
Alias & MBPP+ & $+5.95$ {\scriptsize $[+4.46,+7.47]$} & $+1.59$ {\scriptsize $[-1.32,+4.76]$}\\
\bottomrule
\end{tabular}
\end{table}

\section{Generation examples and trajectories}
\label{app:qualitative}

\subsection{Generated solutions}
\label{app:generations}

Figure~\ref{fig:headlineexamples} presents one correct GSM8K generation and one correct MATH500 generation from the post-NFT CEDR-L models. The solutions combine intermediate calculations into a final answer: successive multiplicative changes in the GSM8K example, and substitution followed by rational arithmetic in the MATH500 example. Both are drawn from saved benchmark generations and retain the original model text.

\begin{figure}[htbp]
\centering
\setlength{\fboxsep}{8pt}
\fbox{\begin{minipage}{0.94\linewidth}
\small
\setlength{\fboxsep}{3pt}
\textbf{GSM8K: reasoning across successive changes}\par\medskip
\textbf{Problem.} A new program had 60 downloads in the first month. The number of downloads in the second month was three times as many as the downloads in the first month, but then reduced by 30\% in the third month. How many downloads did the program have total over the three months?

\medskip\textbf{Generated solution.}\par\smallskip
Let's break down the downloads for the program month by month.

In the first month, the program had 60 downloads.

In the second month, the program had three times as many as the downloads in the first month, so it had 60 * 3 = 180 downloads.

In the third month, the downloads were reduced by 30\%, which means it had 180 - (180 * 30 / 100) = 180 - 54 = 126 downloads.

To find the total number of downloads over the three months, we add up the downloads from each month: 60 + 180 + 126 = 240 + 126 = 366

Thus, the program had a total of $\boxed{366}$ downloads over the three months.
\end{minipage}}

\medskip
\fbox{\begin{minipage}{0.94\linewidth}
\small
\setlength{\fboxsep}{3pt}
\textbf{MATH500: evaluating a rational function}\par\medskip
\textbf{Problem.} If $f(x) = \frac{3x-2}{x-2}$, what is the value of $f(-2) +f(-1)+f(0)$? Express your answer as a common fraction.

\medskip\textbf{Generated solution.}\par\smallskip
To find the value of $f(-2)+f(-1)+f(0)$ we need to substitute $x=-2, x=-1,$ and $x=0$ into the function $f(x)$.

For $f(-2)$: $f(-2) = \frac{3(-2) - 2}{-2 - 2} = \frac{-6 - 2}{-4} = \frac{-8}{-4} = 2$

For $f(-1)$: $f(-1) = \frac{3(-1) - 2}{-1 - 2} = \frac{-3 - 2}{-3} = \frac{-5}{-3} = \frac{5}{3}$

For $f(0)$: $f(0) = \frac{3(0) - 2}{0 - 2} = \frac{-2}{-2} = 1$

Now we can add these together to find the sum: $f(-2) + f(-1) + f(0) = 2 + \frac{5}{3} + 1 = \frac{11}{3} + 1 = \frac{11}{3} + \frac{3}{3} = \frac{14}{3}$ $\boxed{\frac{14}{3}}$
\end{minipage}}
\caption{Correct generations from CEDR-L at 64 denoising NFEs, async $(2.5,2,1.5)$, SCCFG 2, and seed 42. GSM8K uses NFT update 500; MATH500 uses NFT update 600. Displayed equations have been moved inline to save space; model wording and calculations are unchanged. Both answers and the intermediate calculations are correct.}
\label{fig:headlineexamples}
\end{figure}

\subsection{Intermediate generation trajectory}
\label{app:trajectories}

Figure~\ref{fig:gsmtrajectory} shows steps 20, 40, 49, 60, 62, and 64 of the GSM8K example in Figure~\ref{fig:headlineexamples}. After sampling, we apply the terminal decoder (time $1$, SCCFG~2, zero decoder self-conditioning) to each saved state. These premature readouts neither feed back into sampling nor represent intermediate clean-latent predictions. The final text exactly reproduces the archived generation.

Light blue marks exact token-ID equality at the same position in the final answer, before its EOS; matches need not persist. Panels label the local clocks $(t_{16},t_{24},t_{32})$: $0$ is noise and $1$ is clean. Each readout retains its native extraction and EOS; only whitespace layout changes.

\begin{figure}[p]
\centering
\includegraphics[width=\linewidth,height=0.86\textheight,keepaspectratio]{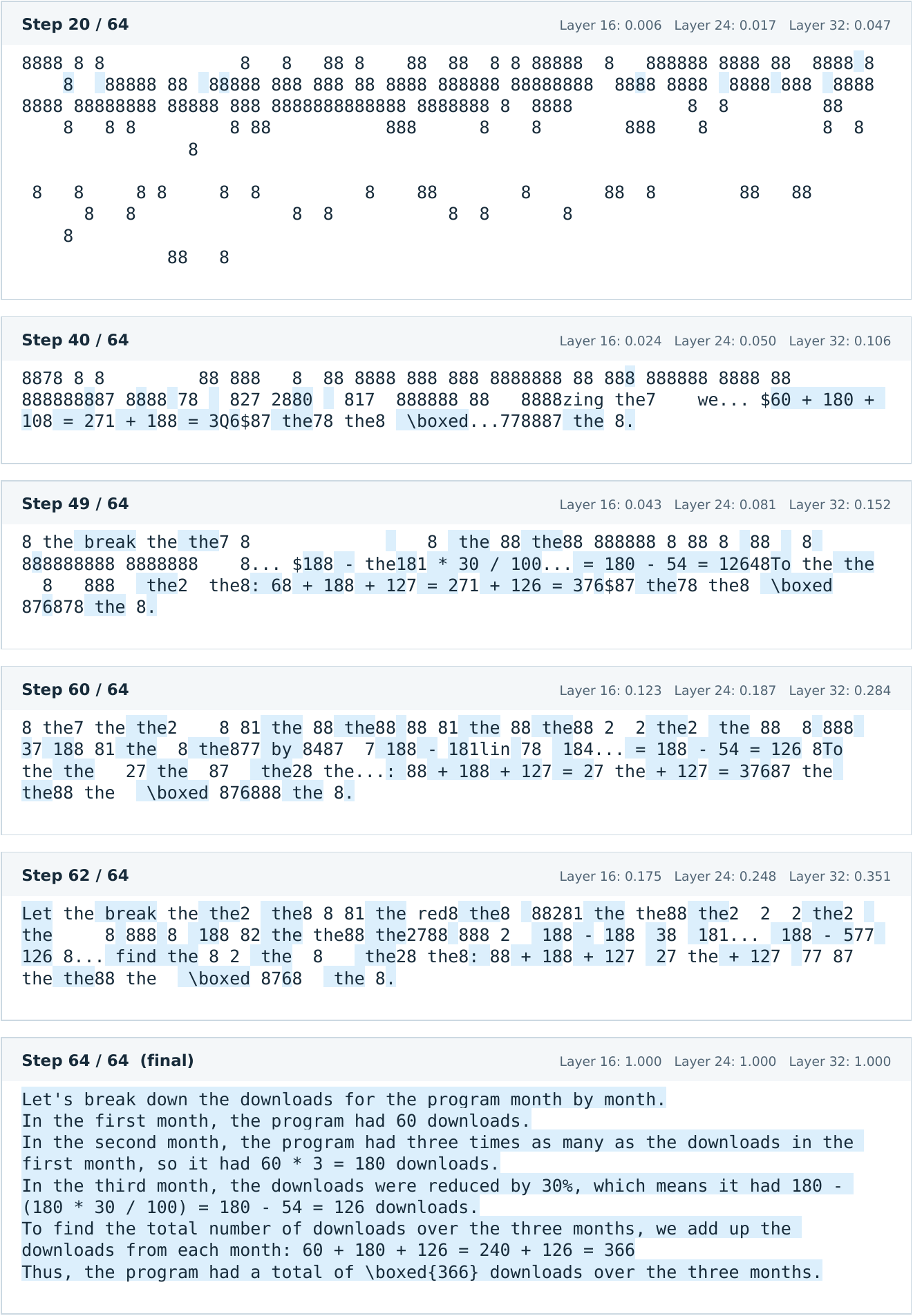}
\caption{\textbf{GSM8K generation trajectory.} Steps 20, 40, 49, 60, 62, and 64, from top to bottom, for the GSM8K example in Figure~\ref{fig:headlineexamples}. CEDR-L NFT update 500; backbone/prompt EMA $.99/.99$; seed 42; async $(2.5,2,1.5)$; 64 denoising steps; CFG~1 and SCCFG~2. Light blue identifies tokens matching the final answer at the same position.}
\label{fig:gsmtrajectory}
\end{figure}

\end{document}